\newtheorem{thm}{Theorem}
\newtheorem{lemma}[thm]{Lemma}
\newtheorem{conjecture}[thm]{Conjecture}
\newtheorem{assumption}{Assumption}
\newtheorem{definition}{Definition}
\title{Information-Theoretic Confidence Bounds for Reinforcement Learning}
\author{
Xiuyuan Lu\\
Stanford University\\
\texttt{lxy@stanford.edu} \\
\And
Benjamin Van Roy\\
Stanford University\\
\texttt{bvr@stanford.edu} \\
}
\begin{document}

\maketitle

\begin{abstract}
We integrate information-theoretic concepts into the design and analysis of optimistic algorithms and Thompson sampling. By making a connection between information-theoretic quantities and confidence bounds, we obtain results that relate the per-period performance of the agent with its information gain about the environment, thus explicitly characterizing the exploration-exploitation tradeoff. The resulting cumulative regret bound depends on the agent's uncertainty over the environment and quantifies the value of prior information. We show applicability of this approach to several environments, including linear bandits, tabular MDPs, and factored MDPs. These examples demonstrate the potential of a general information-theoretic approach for the design and analysis of reinforcement learning algorithms.
\end{abstract}

\section{Introduction}
We consider an online decision problem where an agent repeatedly interacts with an uncertain environment and observes outcomes. 
The agent has a reward function that specifies its preferences over outcomes. 
The objective of the agent is to sequentially select actions so as to maximize the long-term expected cumulative reward. 
One classic example is the multi-armed bandit problem, where the agent observes only the reward of the action selected during each period. 
Another example is episodic reinforcement learning, where the agent selects a policy at the beginning of each episode, and observes a trajectory of states and rewards realized over the episode. 

The agent's uncertainty about the environment gives rise to a need to trade off between exploration and exploitation. 
Exploring parts of the environment that are poorly understood could lead to better performance in the future, while exploiting current knowledge of the environment could lead to higher reward in the short term. 
Thompson sampling \cite{thompson1933} and optimistic algorithms \cite{lai1985asymptotically, jaksch2010near} are two classes of algorithms that effectively balance the exploration-exploitation tradeoff and achieve near-optimal performance in many stylized online decision problems. 
However, most analyses of such algorithms focus on establishing performance guarantees that only exploit the parametric structure of the model \cite{kaufmann2012thompson, agrawal2012analysis, agrawal2013linear, dani2008stochastic, kaufmann2012bayesian, KL-UCB2013, jaksch2010near, auer2002finite}. 
There has not been much focus on how prior information as well as information gain during the learning process affect performance, with few exceptions \cite{srinivas2012information, russo2016info}. 

In our work, we leverage concepts from information theory to quantify the information gain of the agent and address the exploration-exploitation tradeoff for Thompson sampling and optimistic algorithms. 
By connecting information-theoretic quantities with confidence bounds, we are able to relate the agent's per-period performance with its information gain about the environment during the period. 
The relation explicitly characterizes how an algorithm balances exploration and exploitation on a single-period basis. 
The information gain is represented succinctly using mutual information, which abstracts away from the specific parametric form of the model. 
The level of abstraction offered by information theory shows promise of these information-theoretic confidence bounds being generalizable to a broad class of problems. 
Moreover, the resulting cumulative regret bound explicitly depends on the agent's uncertainty over the environment, which naturally exhibits the value of prior information.
We present applications of information-theoretic confidence bounds on three environments, linear bandits, tabular MDPs, and factored MDPs. 

One paper that is closely related to our work is \cite{srinivas2012information}, which proposes an upper confidence bound (UCB) algorithm for bandit learning with a Gaussian process prior and derives a regret bound that depends on maximal information gain. 
Some of their results parallel what we establish in the context of the linear bandit, though our analysis extends to Thompson sampling as well. 
More importantly, our work generalizes the information-theoretic confidence bound approach to problems with significantly more complicated information structure, such as MDPs. 

Another closely related paper is \cite{russo2016info}, which provides an information-theoretic analysis of Thompson sampling for bandit problems with partial feedback. 
The paper introduces the notion of an information ratio, which relates the one-period regret of Thompson sampling with one-period information gain towards the optimal action. 
Using this concept, the authors are able to derive a series of regret bounds that depend on the information entropy of the optimal action. 
While this is an elegant result, it is unclear how to extend the approach to MDPs, as information gain about the optimal policy is hard to quantify. 
In our paper, we consider information gain about the underlying environment rather than the optimal action or policy, which may be seen as a relaxation of their method. 
The relaxation allows us to leverage confidence bounds and obtain information-theoretic regret bounds for MDPs. 

\section{Problem formulation}
We consider an online decision problem where an agent repeatedly interacts with an uncertain environment and observes outcomes. 
All random variables are defined on a probability space $(\Omega, \mathcal{F}, \mathbb{P})$. 
The environment is described by an unknown model parameter $\theta$ which governs the outcome distribution. 
The agent's uncertainty over the environment is represented as a prior distribution over $\theta$. 
Thus, $\theta$ will be treated as a random variable in the agent's mind. 
During each time period $\ell$, the agent selects an action $A_\ell \in \mathcal{A}$ and observes an outcome $Y_{\ell, A_\ell} \in \mathcal{Y}$. 
We assume that the space of outcomes $\mathcal{Y}$ is a subset of a finite dimensional Euclidean space. 
Conditioned on the model index $\theta$, outcomes $Y_{\ell}$ are i.i.d. for $\ell = 1, 2, \dots$. 
The agent has a reward function $r: \mathcal{Y} \to \Re$ that encodes its preferences over outcomes. 
We make a simplifying assumption that rewards are bounded. 
\begin{assumption} \label{assumption:bdd-reward}
There exists $B \geq 0$ such that $\sup_{y\in\mathcal{Y}} r(y) - \inf_{y\in\mathcal{Y}} r(y) \leq B$.
\end{assumption}
The objective of the agent is to maximize its long-term expected cumulative reward. 
Let $\mathcal{H}_\ell = (A_1, Y_{1, A_1}, \dots, A_{\ell-1}, Y_{\ell-1, A_{\ell-1}})$ denote the history up to time $\ell$. 
An algorithm $\pi$ is a sequence of functions $\{\pi_\ell\}_{\ell \geq 1}$ that map histories to distributions over actions. 
For any $a \in \mathcal{A}$, let $\overline{r}_\theta(a) = \mathbb{E}[r(Y_{1, a}) | \theta]$ denote the expected reward of selecting action $a$ under model $\theta$.
Let $A^* \in \arg\max_{a \in \mathcal{A}} \overline{r}_\theta(a)$ denote the optimal action under model $\theta$. 
We define the Bayesian regret of an algorithm $\pi$ over $L$ periods 
\[ \mathbb{E}[ \textrm{Regret}(L, \pi) ] = \sum_{\ell=1}^L \mathbb{E}[\overline{r}_\theta(A^*) - \overline{r}_\theta(A_\ell)], \]
where the expectation is taken over the randomness in outcomes, algorithm $\pi$, as well as the prior distribution over $\theta$. 

Note that episodic reinforcement learning also falls in the above formulation by considering policies as actions and trajectories as observations. 

\section{Preliminaries}

\subsection{Basic quantities in information theory}
For two probability measures $P$ and $Q$ such that $P$ is absolutely continuous with respect to $Q$, the {\it Kullback-Leibler divergence} between them is
\[ D(P || Q) = \int \log\left(\frac{dP}{dQ}\right) dP, \]
where $\frac{dP}{dQ}$ is the Radon-Nikodym derivative of $P$ with respect to $Q$. 

Let $P(X) \equiv \mathbb{P}(X \in \cdot)$ denote the probability distribution of a random variable $X$, and let $P(X | Y) \equiv \mathbb{P}(X \in \cdot | Y)$ denote the conditional probability distribution of $X$ conditioned on $Y$. 

The {\it mutual information} between two random variables $X$ and $Y$
\[ I(X; Y) = D ( P(X, Y) \,||\, P(X)\, P(Y) ) \]
is the Kullback-Leibler divergence between their joint distribution and product distribution \cite{cover2012elements}. 
$I(X; Y)$ is always nonnegative, and $I(X; Y) = 0$ if and only if $X$ and $Y$ are independent.
In our analysis, we will use $I(\theta; A, Y_A)$ to measure the agent's information gain of $\theta$ from selecting an action and observing an outcome. 

The conditional mutual information between two random variables $X$ and $Y$, conditioned on a third random variable $Z$, is
\[ I(X; Y | Z) = \mathbb{E}[ D(P(X, Y | Z) \,||\, P(X | Z) P(Y | Z)) ], \]
where the expectation is taken over $Z$. 
An elegant property of mutual information is that the mutual information between a random variable $X$ and a collection of random variables $Y_1, \dots, Y_n$ can be decomposed into a sum of conditional mutual information using the chain rule. 
\begin{lemma} (Chain rule of mutual information)
\[ I(X; Y_1, Y_2, \dots, Y_n) = \sum_{i=1}^n I(X; Y_i | Y_1, \dots, Y_{i-1}). \]
\end{lemma}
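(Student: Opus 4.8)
The plan is to reduce the $n$-variable statement to a single two-term chain rule and then prove that identity directly from the definition of mutual information by factoring a Radon--Nikodym derivative. I would argue by induction on $n$. The base case $n=1$ is the tautology $I(X;Y_1)=I(X;Y_1)$, since conditioning on the empty collection is vacuous. For the inductive step it suffices to establish the two-term identity
\[ I(X; U, V) = I(X; U) + I(X; V \mid U), \]
valid for arbitrary random variables, where $U$ and $V$ may themselves be tuples. Applying it with $U = (Y_1, \dots, Y_{n-1})$ and $V = Y_n$ peels off the final term $I(X; Y_n \mid Y_1,\dots,Y_{n-1})$, and the inductive hypothesis expands $I(X; Y_1,\dots,Y_{n-1})$ into the remaining summands. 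Thus the entire content of the lemma resides in this two-term identity.

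To prove the two-term identity I would work directly with $I(X;U,V) = D\big(P(X,U,V)\,\|\,P(X)P(U,V)\big)$. The crux is to factor the relevant Radon--Nikodym derivative using regular conditional distributions, which exist because the underlying spaces sit inside finite-dimensional Euclidean space and are therefore nice Borel spaces. Writing the derivative as
\[ \frac{dP(X,U,V)}{d\,[P(X)P(U,V)]} = \frac{dP(X,U)}{d\,[P(X)P(U)]} \cdot \frac{dP(X,V\mid U)}{d\,[P(X\mid U)P(V\mid U)]}, \]
taking logarithms splits the integrand into two pieces. Integrating the first piece against $P(X,U,V)$ and marginalizing out $V$ gives exactly $D\big(P(X,U)\,\|\,P(X)P(U)\big)=I(X;U)$; organizing the integral of the second piece as an expectation over $U$ of the inner conditional divergence gives $\mathbb{E}\big[D\big(P(X,V\mid U)\,\|\,P(X\mid U)P(V\mid U)\big)\big]$, which is $I(X;V\mid U)$ by definition.

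I expect the main obstacle to be the measure-theoretic rigor of this factorization rather than any conceptual difficulty. In the discrete case it is a one-line manipulation of mass functions, namely $p(x,u,v)/[p(x)p(u,v)] = \big(p(x,u)/[p(x)p(u)]\big)\cdot\big(p(v\mid x,u)/p(v\mid u)\big)$; the real work lies in justifying the analogous product of Radon--Nikodym derivatives, which requires the a.e.-uniqueness and joint measurability of the conditional distributions and an application of Fubini--Tonelli to interchange the integration over $U$ with the inner conditional integral. Once the factorization is secured, both terms fall out simply by matching them against the definitions of $I(X;U)$ and $I(X;V\mid U)$.
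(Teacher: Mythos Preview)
The paper does not prove this lemma at all; it is stated in the preliminaries as a standard fact from information theory (with a citation to Cover and Thomas) and used as a black box. Your proposal is correct and is essentially the textbook argument: reduce by induction to the two-term identity $I(X;U,V)=I(X;U)+I(X;V\mid U)$ and justify that via the factorization of the Radon--Nikodym derivative into a marginal piece and a conditional piece. There is nothing to compare against in the paper, and your only subtlety---existence of regular conditional distributions and the Fubini step---is exactly where the work lies in a rigorous treatment.
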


\subsection{Notation under posterior distributions}
We will use subscript $\ell$ on $\mathbb{P}$ and $\mathbb{E}$ to indicate quantities conditioned on $\mathcal{H}_\ell$, i.e., 
$ \mathbb{P}_\ell(\cdot) \equiv \mathbb{P}(\cdot | \mathcal{H}_\ell) 
= \mathbb{P}(\cdot | A_1, Y_{1, A_1}, \dots, A_{\ell-1}, Y_{\ell-1, A_{\ell-1}}) $, 
and similarly for $\mathbb{E}_\ell[\cdot]$.  
Let $P_\ell(X) \equiv \mathbb{P}_\ell(X \in \cdot)$ denote the conditional distribution of a random variable $X$ conditioned on $\mathcal{H}_\ell$. 
We define filtered mutual information
\[ I_\ell(X; Y) = D(P_\ell(X, Y) \,||\, P_\ell(X) P_\ell(Y)), \]
which is a random variable of $\mathcal{H}_\ell$. Note that by the definition of conditional mutual information, 
\[ \mathbb{E}[I_\ell(X; Y)] 
= I(X; Y | \mathcal{H}_\ell)
= I(X; Y | A_1, Y_{1, A_1}, \dots, A_{\ell-1}, Y_{\ell-1, A_{\ell-1}}). \]

\subsection{Algorithms}
Thompson sampling is a simple yet effective heuristic for trading off between exploration and exploitation. Conceptually, it samples each action according to the probability that it is optimal. The algorithm typically operates by starting with a prior distribution over $\theta$. During each time period, it samples from the posterior distribution over $\theta$, and selects an action that maximizes the expected reward under the sampled model. It then updates the posterior distribution with the observed outcome. 

Another widely studied class of algorithms that effectively trade off between exploration and exploitation are upper confidence bound (UCB) algorithms, which apply the principle of optimism in the face of uncertainty. For each time period, they typically construct an upper confidence bound for the mean reward of each action based on past observations, and then select the action with the highest upper confidence bound. 

\begin{figure}[H]
\begin{centering}
\begin{minipage}[t]{2.6in}
    \algsetup{indent=1em}
    \begin{algorithm}[H]
    \caption{Thompson Sampling} \label{alg:TS}
    \begin{algorithmic}[1]
    \STATE \textbf{Input}: prior $p$
    \FOR{$\ell = 1, 2, \dots, L$}
    \STATE \textbf{Sample}: $\hat{\theta}_\ell \sim p$
    \STATE \textbf{Act}: $A_\ell =  \underset{a \in \mathcal{A}}{\arg \max}\, \overline{r}_{\hat{\theta}_\ell}(a)$
    \STATE \textbf{Observe}: $Y_{\ell, A_\ell}$
    \STATE \textbf{Update}: $p \leftarrow \mathbb{P}(\theta \in \cdot | p, A_\ell, Y_{\ell, A_\ell})$
    \ENDFOR
    \end{algorithmic}
    \end{algorithm}
\end{minipage}
~~
\begin{minipage}[t]{2.7in}
    \algsetup{indent=1em}
    \begin{algorithm}[H]
    \caption{Upper Confidence Bound Algorithm} \label{alg:UCB}
    \begin{algorithmic}[1]
    \STATE \textbf{Input}: upper confidence functions $\{U_\ell\}_{\ell=1}^L$
    \FOR{$\ell = 1, 2, \dots, L$}
    \STATE \textbf{Act}: $A_\ell =  \underset{a \in \mathcal{A}}{\arg \max}\, U_\ell(\mathcal{H}_\ell, a)$
    \STATE \textbf{Observe}: $Y_{\ell, A_\ell}$
    \STATE \textbf{Update}: $\mathcal{H}_{\ell+1} \leftarrow \mathcal{H}_\ell \cup \{ A_\ell, Y_{\ell, A_\ell} \}$
    \ENDFOR
    \end{algorithmic}
    \end{algorithm}
\end{minipage}
\end{centering}
\end{figure}

\section{Information-theoretic confidence bounds} \label{sec:itcb}
Information-theoretic confidence bounds are defined with the intention of capturing the exploration-exploitation tradeoff for Thompson sampling and optimistic algorithms -- if the regret is large, the agent must have learned a lot about the environment. 
Let $\Delta_\ell = \overline{r}_\theta(A^*) - \overline{r}_\theta(A_\ell)$ denote the regret over the $\ell$th period. We aim to obtain per-period regret bound of the form
\begin{equation} \label{eq:one-period-regret}
\mathbb{E}_\ell[\Delta_\ell] \leq \Gamma_\ell \sqrt{I_\ell(\theta; A_\ell, Y_{\ell, A_\ell})} + \epsilon_\ell, 
\end{equation}
where $I_\ell(\theta; A_\ell, Y_{\ell, A_\ell})$ is the filtered mutual information between $\theta$ and the action-outcome pair during the $\ell$th period, 
$\Gamma_\ell$ is the rate at which regret scales with information gain, and we also allow for a small error term $\epsilon_\ell$. 
If \eqref{eq:one-period-regret} is satisfied with reasonable values for $\Gamma_\ell$ and $\epsilon_\ell$, a large expected regret on the left-hand side would imply that the right-hand side must be large as well, meaning that the agent should gain a lot of information about the environment. 

If $\Gamma_\ell$ can be uniformly bounded over $\ell = 1, \dots, L$, we obtain an information-theoretic regret bound for any algorithm that satisfies \eqref{eq:one-period-regret}. 
\begin{restatable}{proposition}{propgeneralregret}
\label{prop:general-regret}
If \eqref{eq:one-period-regret} holds with $\Gamma_\ell \leq \Gamma$ for all $\ell = 1, \dots, L$, then
\[ \mathbb{E}[{\rm Regret}(L, \pi)] \leq \Gamma \sqrt{ L ~I(\theta; A_1, Y_{1, A_1}, \dots, A_{\ell}, Y_{\ell, A_{\ell}}) } 
+ \mathbb{E} \sum_{\ell=1}^L \epsilon_\ell. 
\]
\end{restatable}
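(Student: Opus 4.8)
The plan is to take expectations in the per-period bound \eqref{eq:one-period-regret}, sum over $\ell$, and then collapse the resulting sum of per-period information gains into a single joint mutual information using the chain rule. The argument is a direct assembly of the tower property, Jensen's inequality, Cauchy--Schwarz, and the chain rule, so most of the work is bookkeeping rather than a single hard estimate.

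First I would start from the definition of Bayesian regret and use the tower property to write $\mathbb{E}[\Delta_\ell] = \mathbb{E}[\mathbb{E}_\ell[\Delta_\ell]]$. Applying \eqref{eq:one-period-regret} together with the uniform bound $\Gamma_\ell \le \Gamma$ inside the outer expectation gives
\[ \mathbb{E}[\Delta_\ell] \le \Gamma\, \mathbb{E}\!\left[\sqrt{I_\ell(\theta; A_\ell, Y_{\ell, A_\ell})}\right] + \mathbb{E}[\epsilon_\ell], \]
so that summing over $\ell = 1, \dots, L$ yields
\[ \mathbb{E}[\mathrm{Regret}(L,\pi)] \le \Gamma \sum_{\ell=1}^L \mathbb{E}\!\left[\sqrt{I_\ell(\theta; A_\ell, Y_{\ell, A_\ell})}\right] + \mathbb{E}\sum_{\ell=1}^L \epsilon_\ell. \]

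Next I would control the sum of square roots. By Jensen's inequality (concavity of $\sqrt{\cdot}$), $\mathbb{E}[\sqrt{I_\ell}] \le \sqrt{\mathbb{E}[I_\ell]}$, and then by the Cauchy--Schwarz inequality applied to the constant $L$-vector of ones against $(\sqrt{\mathbb{E}[I_\ell]})_\ell$,
\[ \sum_{\ell=1}^L \sqrt{\mathbb{E}[I_\ell(\theta; A_\ell, Y_{\ell, A_\ell})]} \le \sqrt{L}\,\sqrt{\sum_{\ell=1}^L \mathbb{E}[I_\ell(\theta; A_\ell, Y_{\ell, A_\ell})]}. \]

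The remaining step, and the one requiring the most care, is to identify $\sum_{\ell=1}^L \mathbb{E}[I_\ell(\theta; A_\ell, Y_{\ell, A_\ell})]$ with the joint mutual information. Using the identity from the preliminaries, $\mathbb{E}[I_\ell(\theta; A_\ell, Y_{\ell, A_\ell})] = I(\theta; A_\ell, Y_{\ell, A_\ell} \mid \mathcal{H}_\ell)$, and recognizing that conditioning on $\mathcal{H}_\ell$ is exactly conditioning on the earlier pairs $(A_1, Y_{1,A_1}), \dots, (A_{\ell-1}, Y_{\ell-1, A_{\ell-1}})$, I would apply the chain rule of mutual information with each pair $(A_i, Y_{i, A_i})$ playing the role of a single variable. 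This telescopes the conditioning sets and gives
\[ \sum_{\ell=1}^L \mathbb{E}[I_\ell(\theta; A_\ell, Y_{\ell, A_\ell})] = I(\theta; A_1, Y_{1,A_1}, \dots, A_L, Y_{L, A_L}), \]
and substituting this into the previous display completes the proof. The only genuine subtlety is ensuring the conditioning inside $I_\ell$ lines up with the telescoping conditioning demanded by the chain rule, so that the per-period gains assemble exactly into the joint information term with no residual cross-terms; everything else is a routine chain of standard inequalities.
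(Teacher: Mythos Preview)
Your proposal is correct and follows essentially the same argument as the paper's proof: tower property, the per-period bound with $\Gamma_\ell\le\Gamma$, Jensen, Cauchy--Schwarz, then the identification $\mathbb{E}[I_\ell(\theta;A_\ell,Y_{\ell,A_\ell})]=I(\theta;A_\ell,Y_{\ell,A_\ell}\mid \mathcal{H}_\ell)$ and the chain rule to telescope into the joint mutual information.
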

The proof follows from Jensen's and the Cauchy-Schwarz inequalities, and the chain rule of mutual information. All complete proofs can be found in the appendix. 

The mutual information term on the right-hand side shows how much the agent expects to learn about $\theta$ over $L$ periods. 
If $\theta$ already concentrates around some value, there is not much to learn, and the result would suggest that the expected regret should be small. 
In general, the mutual information term can be bounded by the maximal information gain under any algorithm over $L$ periods, though a more careful analysis specialized to the algorithm of interest might lead to a better bound. 

One way to obtain a per-period regret bound of the form in Equation \eqref{eq:one-period-regret} is through construction of information-theoretic confidence sets for mean rewards. For each action $a$, the width of the confidence set is designed to depend on the information gain about $\theta$ from observing outcome $Y_{\ell, a}$. 
\begin{restatable}{lemma}{lemmaconfidencesetreward}
\label{lemma:confidence-set-reward}
Under Assumption \ref{assumption:bdd-reward}, if
\[ \mathbb{P}_\ell \left( \left| \overline{r}_\theta(a) - \mathbb{E}_\ell \left[ \overline{r}_\theta(a) \right] \right|
\leq \frac{\Gamma_\ell}{2} \sqrt{I_\ell(\theta; Y_{\ell, a})} ~~\forall a \in \mathcal{A} \right)
\geq 1 - \frac{\delta}{2}, \]
then the per-period regret of Thompson sampling and UCB with upper confidence function $U_\ell(a) = \mathbb{E}_\ell \left[ \overline{r}_\theta(a) \right] + \frac{\Gamma_\ell}{2}\sqrt{I_\ell(\theta; Y_{\ell, a})}$ satisfies
\[ \mathbb{E}_\ell \left[ \Delta_\ell \right] 
\leq \Gamma_\ell \sqrt{I_\ell(\theta; A_\ell, Y_{\ell, A_\ell})} + \delta B. \]
\end{restatable}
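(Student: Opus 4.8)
The plan is to use the upper confidence function $U_\ell$ as an intermediary between the true mean rewards and the posterior means, and to exploit the defining property of each algorithm to cancel a comparison term. First I would fix the period $\ell$ and introduce the good event
\[ E_\ell = \left\{ \left| \overline{r}_\theta(a) - \mathbb{E}_\ell[\overline{r}_\theta(a)] \right| \leq \tfrac{\Gamma_\ell}{2}\sqrt{I_\ell(\theta; Y_{\ell, a})} ~\forall a \in \mathcal{A} \right\}, \]
whose posterior probability is at least $1 - \delta/2$ by hypothesis. On $E_\ell$ the two-sided bound shows, simultaneously for all $a$, that $U_\ell(a)$ overestimates $\overline{r}_\theta(a)$ while $U_\ell(a) - \Gamma_\ell\sqrt{I_\ell(\theta; Y_{\ell,a})}$ underestimates it.

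Next I would decompose the one-period regret through $U_\ell$,
\[ \mathbb{E}_\ell[\Delta_\ell] = \mathbb{E}_\ell\big[\overline{r}_\theta(A^*) - U_\ell(A^*)\big] + \mathbb{E}_\ell\big[U_\ell(A^*) - U_\ell(A_\ell)\big] + \mathbb{E}_\ell\big[U_\ell(A_\ell) - \overline{r}_\theta(A_\ell)\big]. \]
The middle term is where the two algorithms are unified and is nonpositive in both cases: for UCB because $A_\ell$ maximizes $U_\ell$, so $U_\ell(A_\ell) \geq U_\ell(A^*)$ pointwise; for Thompson sampling because $A_\ell$ and $A^*$ are identically distributed under $\mathbb{P}_\ell$ (probability matching) while $U_\ell(\cdot)$ is $\mathcal{H}_\ell$-measurable, so $\mathbb{E}_\ell[U_\ell(A^*)] = \mathbb{E}_\ell[U_\ell(A_\ell)]$. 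The first term is $\leq 0$ on $E_\ell$ (optimism) and $\leq B$ off $E_\ell$ by Assumption \ref{assumption:bdd-reward}, hence contributes at most $\delta B / 2$.

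The third term carries the information gain. I would split it as $U_\ell(A_\ell) - \overline{r}_\theta(A_\ell) = \tfrac{\Gamma_\ell}{2}\sqrt{I_\ell(\theta; Y_{\ell,A_\ell})} + \big(\mathbb{E}_\ell[\overline{r}_\theta(A_\ell)] - \overline{r}_\theta(A_\ell)\big)$, keeping the confidence width (a known function of history and action that does not depend on $\theta$) separate from the centered, $\theta$-dependent deviation. The deviation is $\leq \tfrac{\Gamma_\ell}{2}\sqrt{I_\ell(\theta; Y_{\ell,A_\ell})}$ on $E_\ell$ and $\leq B$ off $E_\ell$, contributing another $\delta B/2$ from the bad event. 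It then remains to control $\mathbb{E}_\ell[\sqrt{I_\ell(\theta; Y_{\ell,A_\ell})}]$, where the expectation is over the random action. By concavity of the square root (Jensen) this is at most $\sqrt{\mathbb{E}_\ell[I_\ell(\theta; Y_{\ell,A_\ell})]}$, and since $A_\ell$ is conditionally independent of $(\theta, \{Y_{\ell,a}\}_a)$ given $\mathcal{H}_\ell$ one has $\mathbb{E}_\ell[I_\ell(\theta; Y_{\ell,A_\ell})] = I_\ell(\theta; Y_{\ell,A_\ell} \mid A_\ell) \leq I_\ell(\theta; A_\ell, Y_{\ell,A_\ell})$ by the chain rule. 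Bounding the separated width term the same way and combining, the third term is at most $\Gamma_\ell\sqrt{I_\ell(\theta; A_\ell, Y_{\ell,A_\ell})} + \delta B/2$; summing the three contributions gives the claimed $\Gamma_\ell\sqrt{I_\ell(\theta; A_\ell, Y_{\ell,A_\ell})} + \delta B$.

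The step I expect to be the main obstacle is handling the bad event $E_\ell^c$ in the third term: the width $\sqrt{I_\ell(\theta; Y_{\ell,A_\ell})}$ is not bounded by $B$, so one cannot simply dominate $U_\ell(A_\ell) - \overline{r}_\theta(A_\ell)$ by $B$ on $E_\ell^c$. The resolution is to peel off the always-present width before any event splitting and apply the $\delta/2$ argument only to the centered deviation, whose magnitude is genuinely controlled by $B$. A secondary subtlety is justifying $\mathbb{E}_\ell[I_\ell(\theta; Y_{\ell,A_\ell})] = I_\ell(\theta; Y_{\ell,A_\ell}\mid A_\ell)$, which rests on the conditional independence of the selected action from the environment given the history — immediate for Thompson sampling, where the action is a function of an independent posterior sample, and trivial for the deterministic UCB action.
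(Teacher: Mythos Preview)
Your proof is correct and reaches the stated bound. The route differs from the paper's in two respects.

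First, for Thompson sampling the paper does not route through $U_\ell$ at all: it uses probability matching in the form $\mathbb{E}_\ell[\overline{r}_\theta(A^*)-\overline{r}_\theta(A_\ell)]=\mathbb{E}_\ell[\overline{r}_{\hat\theta_\ell}(A_\ell)-\overline{r}_\theta(A_\ell)]$, then invokes the confidence set twice (once for $\theta$, once for the sampled $\hat\theta_\ell$) to bound the right-hand side by $\Gamma_\ell\sqrt{I_\ell(\theta;Y_{\ell,A_\ell})}$ on the intersection event. Your version instead applies probability matching only to the middle term $\mathbb{E}_\ell[U_\ell(A^*)-U_\ell(A_\ell)]$, which lets a single decomposition serve both algorithms; this is neater and is essentially the Russo--Van Roy unification trick. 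Second, for UCB the paper places the indicator $\mathbf{1}_{\theta\in\Theta_\ell}$ on $\Delta_\ell$ itself \emph{before} decomposing, so the bad-event cost is a single $\tfrac{\delta}{2}B$ and the ``width not bounded by $B$'' issue you flag never arises; your workaround of peeling off the deterministic width before splitting on $E_\ell$ is valid but slightly more delicate, and costs an extra $\tfrac{\delta}{2}B$ relative to the paper's UCB bound (still within the claimed $\delta B$). The Jensen/conditional-independence passage to $I_\ell(\theta;A_\ell,Y_{\ell,A_\ell})$ is identical in both proofs.
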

The proof follows from the probability matching property of Thompson sampling, optimism of UCB, and properties of mutual information. 

When the reward function $r(y)$ is Lipschitz continuous, we may alternatively construct confidence sets on outcomes. 
Further, if the observation noise is additive, we may construct confidence sets on the mean outcomes. 
\begin{restatable}{lemma}{lemmaconfidencesetoutcome}
\label{lemma:confidence-set-outcome}
If $r(\cdot)$ is $K$-Lipschitz continuous with respect to some norm $\|\cdot\|$ on $\mathcal{Y}$, and if
\[ \mathbb{P}_\ell \left( \| Y_{\ell, a} - \mathbb{E}_\ell \left[ Y_{\ell, a} \right] \|
\leq \frac{\Gamma_\ell}{2} \sqrt{I_\ell(\theta; Y_{\ell, a})} ~~\forall a \in \mathcal{A} \right)
\geq 1 - \frac{\delta}{2}, \]
then the per-period regret of Thompson sampling
\[ \mathbb{E}_\ell \left[ \Delta_\ell \right] 
\leq K\, \Gamma_\ell \sqrt{I_\ell(\theta; A_\ell, Y_{\ell, A_\ell})} + \delta B. \]
Moreover, if there exists a function $\overline{y}: \Theta \times \mathcal{A} \to \mathcal{Y}$ such that $Y_{\ell, a} - \overline{y}(\theta, a)$ is independent of $\theta$ for all $a \in \mathcal{A}$, then it is sufficient to have
\[ \mathbb{P}_\ell \left( \| \overline{y}(\theta, a) - \mathbb{E}_\ell \left[ \overline{y}(\theta, a) \right] \|
\leq \frac{\Gamma_\ell}{2} \sqrt{I_\ell(\theta; Y_{\ell, a})} ~~\forall a \in \mathcal{A} \right)
\geq 1 - \frac{\delta}{2} \]
for the one-period regret bound to hold. 
\end{restatable}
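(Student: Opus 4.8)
The plan is to reduce both claims to Lemma~\ref{lemma:confidence-set-reward}: I will show that the stated outcome-level (respectively mean-outcome-level) hypotheses imply the reward-level confidence hypothesis of that lemma with $\Gamma_\ell$ replaced by $K\Gamma_\ell$, after which the one-period regret bound follows verbatim from Lemma~\ref{lemma:confidence-set-reward} in its Thompson sampling form, the extra factor $K$ appearing in front of $\Gamma_\ell$ exactly as stated. Thus the only real work is translating a confidence set on outcomes into a confidence set on the mean rewards $\overline r_\theta(a)$.

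For the first part, the starting identities are $\overline r_\theta(a) = \mathbb{E}_\ell[r(Y_{\ell,a}) \mid \theta]$ (since, given $\theta$, $Y_{\ell,a}$ is independent of the history and identically distributed across periods) and $\mathbb{E}_\ell[\overline r_\theta(a)] = \mathbb{E}_\ell[r(Y_{\ell,a})]$ by the tower property, so that
\[ \overline r_\theta(a) - \mathbb{E}_\ell[\overline r_\theta(a)] = \mathbb{E}_\ell[r(Y_{\ell,a}) \mid \theta] - \mathbb{E}_\ell[r(Y_{\ell,a})]. \]
I would then combine the $K$-Lipschitzness of $r$ with Jensen's inequality to compare each (conditional) expectation of $r(Y_{\ell,a})$ against $r$ evaluated at the posterior predictive mean $\mathbb{E}_\ell[Y_{\ell,a}]$, bounding the reward deviation by a $K$-multiple of $\|Y_{\ell,a}-\mathbb{E}_\ell[Y_{\ell,a}]\|$; on the hypothesized event this is at most $K\frac{\Gamma_\ell}{2}\sqrt{I_\ell(\theta;Y_{\ell,a})}$ simultaneously for all $a$. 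This produces the hypothesis of Lemma~\ref{lemma:confidence-set-reward} with width scaled by $K$, and invoking that lemma closes the argument.

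For the additive-noise refinement I would introduce the $\theta$-independent noise $\xi_a \equiv Y_{\ell,a}-\overline y(\theta,a)$ and the smoothed reward $\tilde r(z) \equiv \mathbb{E}[r(z+\xi_a)]$, which is again $K$-Lipschitz and satisfies $\overline r_\theta(a) = \tilde r(\overline y(\theta,a))$. Since $\xi_a$ is independent of $\theta$, it carries no information about $\theta$ and $\overline y(\theta,a)$ is the $\theta$-informative part of $Y_{\ell,a}$, so the relevant confidence width remains governed by $I_\ell(\theta;Y_{\ell,a})$. Applying the Lipschitz constant of $\tilde r$ to the hypothesized confidence set on $\|\overline y(\theta,a)-\mathbb{E}_\ell[\overline y(\theta,a)]\|$ then reproduces the reward-level confidence set with width $K\frac{\Gamma_\ell}{2}\sqrt{I_\ell(\theta;Y_{\ell,a})}$, and Lemma~\ref{lemma:confidence-set-reward} again finishes the proof.

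The main obstacle is the discrepancy between the \emph{realized} outcome $Y_{\ell,a}$ and the \emph{conditional mean} reward $\overline r_\theta(a)$: in the general Lipschitz case the within-model stochasticity of $Y_{\ell,a}$ means a bound holding with high probability on the realized outcome does not automatically transfer to $\mathbb{E}_\ell[r(Y_{\ell,a})\mid\theta]$, so care is needed in how the Jensen--Lipschitz step is applied and in accounting for the failure probability $\delta/2$. This is exactly what the additive-noise structure removes, since there $\overline r_\theta$ depends on $\theta$ only through the deterministic mean outcome $\overline y(\theta,a)$; the secondary technical point is verifying that stripping off the $\theta$-independent noise leaves the mutual-information width $I_\ell(\theta;Y_{\ell,a})$ intact, so that the weaker hypothesis on $\overline y$ is genuinely sufficient.
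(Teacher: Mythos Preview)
Your reduction to Lemma~\ref{lemma:confidence-set-reward} does not go through for the first (general Lipschitz) part, and the obstacle you flag in your last paragraph is precisely the point where the argument breaks. The hypothesis is a high-probability bound on the \emph{realized} outcome $Y_{\ell,a}$ under $\mathbb{P}_\ell$, whereas the conclusion of Lemma~\ref{lemma:confidence-set-reward} requires a high-probability bound on $\overline r_\theta(a)=\mathbb{E}_\ell[r(Y_{\ell,a})\mid\theta]$, a function of $\theta$ alone. From Lipschitzness you only get
\[
\bigl|\,\mathbb{E}_\ell[r(Y_{\ell,a})\mid\theta]-r(\mathbb{E}_\ell[Y_{\ell,a}])\,\bigr|\;\le\;K\,\mathbb{E}_\ell\bigl[\|Y_{\ell,a}-\mathbb{E}_\ell[Y_{\ell,a}]\|\;\big|\;\theta\bigr],
\]
and a $\mathbb{P}_\ell$-probability bound of $\delta/2$ on $\{\|Y_{\ell,a}-\mathbb{E}_\ell[Y_{\ell,a}]\|>\tfrac{\Gamma_\ell}{2}\sqrt{I_\ell}\}$ does not control this conditional expectation: the bad event can be concentrated on a small set of $\theta$-values, and on the complement the norm is unbounded. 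There is also a centering mismatch, since $\mathbb{E}_\ell[\overline r_\theta(a)]\neq r(\mathbb{E}_\ell[Y_{\ell,a}])$ in general. So the first part as written is a plan with an unresolved gap, not a proof.

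The paper's argument sidesteps this entirely by never passing through $\overline r_\theta(a)$. It introduces a fresh outcome $\hat Y_\ell$ drawn under the Thompson sample $\hat\theta_\ell$, so that $(\hat\theta_\ell,\hat Y_\ell)$ and $(\theta,Y_\ell)$ have the same $\mathbb{P}_\ell$-law; then
\[
\mathbb{E}_\ell[\Delta_\ell]=\mathbb{E}_\ell\bigl[\overline r_{\hat\theta_\ell}(A_\ell)-\overline r_\theta(A_\ell)\bigr]
=\mathbb{E}_\ell\bigl[r(\hat Y_{\ell,A_\ell})-r(Y_{\ell,A_\ell})\bigr],
\]
and on the event $\{Y_\ell,\hat Y_\ell\in\mathcal{Y}_\ell\}$ (probability $\ge 1-\delta$) the Lipschitz bound applies \emph{pointwise} to $\|\hat Y_{\ell,A_\ell}-Y_{\ell,A_\ell}\|$, after which the Jensen/Cauchy--Schwarz chain from Lemma~\ref{lemma:confidence-set-reward} finishes the job. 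The outer expectation is what performs the averaging you were trying to do by hand. For the additive-noise refinement the paper couples $\hat Y_{\ell,a}=\overline y(\hat\theta_\ell,a)+w_{\ell,a}$ with the \emph{same} noise, so that $\hat Y_{\ell,A_\ell}-Y_{\ell,A_\ell}=\overline y(\hat\theta_\ell,A_\ell)-\overline y(\theta,A_\ell)$ and the confidence set on $\overline y$ suffices directly. Your smoothed-reward route $\tilde r(z)=\mathbb{E}[r(z+\xi_a)]$ for the additive case is sound in spirit and would also work (note that the proof of Lemma~\ref{lemma:confidence-set-reward} only uses the center as a triangulation point, so any $\mathcal{H}_\ell$-measurable center such as $\tilde r(\mathbb{E}_\ell[\overline y(\theta,a)])$ is admissible), but it does not rescue the general case.
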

An analogous result would hold for a UCB algorithm if outcomes $Y_{\ell, a}$ are scalar valued and the reward function is nondecreasing. We will push further details to the appendix. 

\section{Examples}
In this section, we show applications of information-theoretic confidence bounds on linear bandits, tabular MDPs, and factored MDPs. 
The per-period regret bounds highlight the single-period exploration-exploitation tradeoff for Thompson sampling and the corresponding optimistic algorithms, while the cumulative regret bounds show how the prior distribution over $\theta$ affects regret. 

\subsection{Linear bandits} \label{sec:linear-bandits}
Let $\mathcal{A} \subset \Re^d$ be a finite action set, and assume that $\sup_{a \in \mathcal{A}} \|a\|_2 \leq 1$. 
We assume a $N(\mu_1, \Sigma_1)$ prior over the model parameter $\theta \in \Re^d$. 
At time $\ell$, an action $A_\ell \in \mathcal{A}$ is selected, and $Y_{\ell, A_\ell} = \theta^\top A_\ell + w_\ell$ is observed, where $w_\ell $ are i.i.d. $N(0, \sigma_w^2)$. 
Conditioned on $\mathcal{H}_\ell$, the posterior distribution of $\theta$ is again normal. Let $\mu_{\ell}$ and $\Sigma_{\ell}$ denote the posterior mean and covariance matrix conditioned on $\mathcal{H}_\ell$. 
We assume that $r(\cdot)$ is bounded under Assumption \ref{assumption:bdd-reward}, and is nondecreasing and 1-Lipschitz. 

By Lemma \ref{lemma:confidence-set-outcome}, since noise is additive, it is sufficient to construct confidence sets on $\overline{Y}_a = \theta^\top a$. 
\begin{restatable}{lemma}{lemmalinearbanditcb}
\label{lemma:linear-bandit-cb} 
Under the assumptions stated in Section \ref{sec:linear-bandits}, 
\[ \mathbb{P}_\ell \left( \left| \overline{Y}_a - \mathbb{E}_\ell \overline{Y}_a \right| 
\leq \frac{\Gamma_\ell}{2} \sqrt{I_\ell(\theta; Y_{\ell, a})} ~~\forall a \in \mathcal{A} \right)
\geq 1 - \frac{\delta}{2}, \]
for 
\[ \Gamma_\ell = 4 \sqrt{ \frac{\sigma_{\ell, \max}^2}{\log\left(1 + \frac{\sigma_{\ell, \max}^2}{\sigma_w^2}\right)} \log\frac{4 |\mathcal{A}|}{\delta}}, 
\quad {\rm where} ~~
\sigma_{\ell, \max}^2 = \max_{a \in \mathcal{A}} a^\top \Sigma_{\ell} a. \]
\end{restatable}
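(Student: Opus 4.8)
The plan is to exploit the fact that, conditioned on $\mathcal{H}_\ell$, every quantity in sight is Gaussian, so both the confidence width and the mutual information admit closed forms. First I would note that since $\theta \mid \mathcal{H}_\ell \sim N(\mu_\ell, \Sigma_\ell)$, the scalar $\overline{Y}_a = \theta^\top a$ is Gaussian under $\mathbb{P}_\ell$ with mean $\mu_\ell^\top a$ and variance $\sigma_{\ell,a}^2 := a^\top \Sigma_\ell a$. Hence $\overline{Y}_a - \mathbb{E}_\ell \overline{Y}_a \sim N(0, \sigma_{\ell,a}^2)$ under $\mathbb{P}_\ell$, and the standard Gaussian tail bound gives $\mathbb{P}_\ell(|\overline{Y}_a - \mathbb{E}_\ell \overline{Y}_a| > t) \leq 2\exp(-t^2 / (2\sigma_{\ell,a}^2))$.

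Next I would compute the filtered mutual information in closed form. Because $Y_{\ell,a} = \theta^\top a + w_\ell$ with $w_\ell$ a fresh noise independent of $\theta$ and of $\mathcal{H}_\ell$, the variables form a Markov chain $\theta \to \theta^\top a \to Y_{\ell,a}$; conditioned on $\overline{Y}_a = \theta^\top a$, the observation carries no further information about $\theta$, so the chain rule gives $I_\ell(\theta; Y_{\ell,a}) = I_\ell(\overline{Y}_a; Y_{\ell,a})$. This is an additive Gaussian channel with signal variance $\sigma_{\ell,a}^2$ and noise variance $\sigma_w^2$, whose mutual information is the familiar
\[ I_\ell(\theta; Y_{\ell,a}) = \tfrac{1}{2}\log\!\left(1 + \frac{\sigma_{\ell,a}^2}{\sigma_w^2}\right). \]

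With both quantities in hand, I would take a union bound over the finitely many actions. Requiring each tail probability to be at most $\delta/(4|\mathcal{A}|)$ forces the chosen width $t_a = \tfrac{\Gamma_\ell}{2}\sqrt{I_\ell(\theta; Y_{\ell,a})}$ to satisfy $t_a^2 \geq 2\sigma_{\ell,a}^2 \log(4|\mathcal{A}|/\delta)$. Substituting the closed form for $I_\ell$ and squaring, this reduces after rearrangement to the per-action requirement
\[ \Gamma_\ell^2 \;\geq\; 16\, \frac{\sigma_{\ell,a}^2}{\log\!\left(1 + \sigma_{\ell,a}^2/\sigma_w^2\right)} \log\frac{4|\mathcal{A}|}{\delta}. \]

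The remaining step, and the one requiring the most care, is to make $\Gamma_\ell$ uniform over $a$. The right-hand side depends on $a$ only through $h(\sigma_{\ell,a}^2)$, where $h(x) = x/\log(1 + x/\sigma_w^2)$. I would verify that $h$ is nondecreasing on $[0,\infty)$: with $u = x/\sigma_w^2$, its derivative has the sign of $\log(1+u) - u/(1+u)$, which is nonnegative for all $u \geq 0$ by a standard logarithmic inequality. Since $\sigma_{\ell,a}^2 \leq \sigma_{\ell,\max}^2$ for every $a$, monotonicity gives $h(\sigma_{\ell,a}^2) \leq h(\sigma_{\ell,\max}^2)$, so the stated choice $\Gamma_\ell = 4\sqrt{h(\sigma_{\ell,\max}^2)\log(4|\mathcal{A}|/\delta)}$ meets the per-action requirement simultaneously for all $a$, closing the union bound. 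The only genuine subtlety is this monotonicity argument: it is precisely what allows the single worst-case posterior variance $\sigma_{\ell,\max}^2$ to control the confidence widths of all actions at once, despite the width and the information gain scaling differently in $\sigma_{\ell,a}^2$.
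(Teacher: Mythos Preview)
Your proposal is correct and follows essentially the same route as the paper: Gaussian tail bound on $\overline{Y}_a$, closed-form mutual information $I_\ell(\theta;Y_{\ell,a})=\tfrac12\log(1+\sigma_{\ell,a}^2/\sigma_w^2)$, and a union bound over $\mathcal{A}$, with the uniform choice of $\Gamma_\ell$ justified by monotonicity of $x/\log(1+x/\sigma_w^2)$. The only cosmetic difference is that the paper obtains the mutual information formula by a direct differential-entropy computation on the multivariate Gaussian posterior (using the matrix determinant lemma), whereas you reduce to the scalar Gaussian channel via the sufficiency/data-processing observation $I_\ell(\theta;Y_{\ell,a})=I_\ell(\theta^\top a;Y_{\ell,a})$; both yield the same expression.
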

Thus, it follows from Lemma \ref{lemma:confidence-set-outcome} that the one-period regret of Thompson sampling and UCB with $U_\ell(a) = \mathbb{E}_\ell \overline{Y}_a +\frac{\Gamma_\ell}{2} \sqrt{I_\ell(\theta; Y_{\ell, a})}$ satisfies
\[ \mathbb{E}_\ell[\Delta_\ell] \leq \Gamma_\ell \sqrt{I_\ell(\theta; A_\ell, Y_{\ell, A_\ell})} + \delta B. \]

By Proposition \ref{prop:general-regret}, we have the following Bayesian regret bound for Thompson sampling and UCB by choosing $\delta = \frac{1}{L}$. 
\begin{restatable}{proposition}{proplinearbanditregret}
\label{prop:linear-bandit-regret}
Under the assumptions stated in Section \ref{sec:linear-bandits}, the Bayesian regret of Thompson sampling and UCB over $L$ periods is
\[ \mathbb{E}[{\rm Regret}(L, \pi)] \leq \Gamma \sqrt{ L ~I(\theta; A_1, Y_{1, A_1}, \dots, A_{L}, Y_{L, A_{L}}) } +B \]
where
\[ \Gamma = 4 \sqrt{ \frac{\sigma_{1, \max}^2}{\log\left(1 + \frac{\sigma_{1, \max}^2}{\sigma_w^2}\right)} \log(4 |\mathcal{A}| L) }. \]
\end{restatable}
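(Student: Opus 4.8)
The plan is to feed the already-established per-period regret bound into the cumulative bound of Proposition~\ref{prop:general-regret}, the only real work being to replace the period-dependent rate $\Gamma_\ell$ by a single uniform constant $\Gamma$. The per-period bound of the form~\eqref{eq:one-period-regret} is in hand: Lemma~\ref{lemma:linear-bandit-cb} supplies the confidence-set guarantee, and since the noise is additive and $r$ is $1$-Lipschitz, the additive-noise branch of Lemma~\ref{lemma:confidence-set-outcome} (with $K = 1$ and $\overline{y}(\theta,a) = \theta^\top a = \overline{Y}_a$) yields
\[ \mathbb{E}_\ell[\Delta_\ell] \leq \Gamma_\ell \sqrt{I_\ell(\theta; A_\ell, Y_{\ell, A_\ell})} + \delta B, \]
with $\epsilon_\ell = \delta B$ and $\Gamma_\ell$ as in Lemma~\ref{lemma:linear-bandit-cb}.

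To invoke Proposition~\ref{prop:general-regret} I must show $\Gamma_\ell \leq \Gamma$ for all $\ell$, which is the crux. I would argue in two parts. First, the Gaussian conjugate update gives $\Sigma_{\ell+1}^{-1} = \Sigma_\ell^{-1} + \sigma_w^{-2} A_\ell A_\ell^\top \succeq \Sigma_\ell^{-1}$, so $\Sigma_{\ell+1} \preceq \Sigma_\ell$ almost surely, and therefore $\sigma_{\ell,\max}^2 = \max_{a \in \mathcal{A}} a^\top \Sigma_\ell a \leq \max_{a \in \mathcal{A}} a^\top \Sigma_1 a = \sigma_{1,\max}^2$. Second, the normalization $x \mapsto x / \log(1 + x/\sigma_w^2)$ is nondecreasing on $[0,\infty)$, as a short calculus check on $g(u) = u/\log(1+u)$ confirms. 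Combining the two, the $\delta$-free factor of $\Gamma_\ell$ is largest at $\ell = 1$, uniformly in $\ell$.

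Finally, set $\delta = 1/L$, so that $\log(4|\mathcal{A}|/\delta) = \log(4|\mathcal{A}|L)$ and $\Gamma_\ell \leq \Gamma$ with $\Gamma$ as stated. Substituting $\epsilon_\ell = B/L$ into Proposition~\ref{prop:general-regret} gives $\mathbb{E}\sum_{\ell=1}^L \epsilon_\ell = B$, yielding the claimed bound. I expect the uniform control of $\Gamma_\ell$ to be the main obstacle: one must carefully note that $\Sigma_\ell$ is random (a function of the realized actions $A_1,\dots,A_{\ell-1}$), so the covariance monotonicity and hence $\Gamma_\ell \leq \Gamma$ hold almost surely rather than deterministically, and one must verify the monotonicity of $x/\log(1+x/\sigma_w^2)$. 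Everything else reduces to substitution into the two cited results.
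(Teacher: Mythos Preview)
Your proposal is correct and follows the same route as the paper, which simply writes ``The result follows from Proposition~\ref{prop:general-regret}, Lemma~\ref{lemma:confidence-set-outcome}, and Lemma~\ref{lemma:linear-bandit-cb} by taking $\delta = \frac{1}{L}$.'' You have actually filled in the one step the paper leaves implicit, namely the almost-sure monotonicity $\Sigma_{\ell+1} \preceq \Sigma_\ell$ together with the fact that $x \mapsto x/\log(1+x/\sigma_w^2)$ is nondecreasing, which together give $\Gamma_\ell \leq \Gamma$ uniformly in $\ell$; this is exactly the right justification for applying Proposition~\ref{prop:general-regret}.
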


The following lemma bounds the maximal information gain over $L$ time periods. 
\begin{restatable}{lemma}{lemmalinearbanditinformation}
\label{lemma:linear-bandit-information}
For any $\mathcal{H}_\ell$-adapted action sequence, 
\[ I(\theta; A_1, Y_{1, A_1}, \dots, A_{L}, Y_{L, A_{L}}) \leq \frac{1}{2} d \log \left(1 + \frac{\lambda_{\max} L}{\sigma_w^2} \right), \]
where $\lambda_{\max}$ is the largest eigenvalue of $\Sigma_1$. 
\end{restatable}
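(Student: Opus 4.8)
The plan is to expand the total information gain with the chain rule, evaluate each one-period term in closed form via Gaussian conjugacy, collapse the resulting sum into a single log-determinant, and then control that determinant by a trace constraint. First I would apply the chain rule of mutual information to write the left-hand side as $\sum_{\ell=1}^L I(\theta; A_\ell, Y_{\ell, A_\ell} \mid \mathcal{H}_\ell)$, where $\mathcal{H}_\ell = (A_1, Y_{1,A_1}, \dots, A_{\ell-1}, Y_{\ell-1, A_{\ell-1}})$. Applying the chain rule once more inside each term and using that $A_\ell$ is $\mathcal{H}_\ell$-adapted (hence conditionally independent of $\theta$ given $\mathcal{H}_\ell$), the contribution $I(\theta; A_\ell \mid \mathcal{H}_\ell)$ vanishes, leaving $I(\theta; A_\ell, Y_{\ell, A_\ell} \mid \mathcal{H}_\ell) = \mathbb{E}[I_\ell(\theta; Y_{\ell, A_\ell})]$. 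By Gaussian conjugacy the posterior of $\theta$ given $\mathcal{H}_\ell$ is $N(\mu_\ell, \Sigma_\ell)$ and $Y_{\ell, a} = \theta^\top a + w_\ell$ is scalar Gaussian, so a direct differential-entropy computation gives $I_\ell(\theta; Y_{\ell, a}) = \frac{1}{2}\log\!\left(1 + a^\top \Sigma_\ell a / \sigma_w^2\right)$ for each fixed $a$.

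Next I would collapse the sum by telescoping. The posterior covariance obeys the rank-one update $\Sigma_{\ell+1} = \Sigma_\ell - \Sigma_\ell A_\ell A_\ell^\top \Sigma_\ell / (\sigma_w^2 + A_\ell^\top \Sigma_\ell A_\ell)$, and the matrix determinant lemma yields $1 + A_\ell^\top \Sigma_\ell A_\ell / \sigma_w^2 = \det \Sigma_\ell / \det \Sigma_{\ell+1}$. Hence each per-period term equals $\frac{1}{2}(\log \det \Sigma_\ell - \log \det \Sigma_{\ell+1})$, and the sum telescopes pathwise to $\frac{1}{2}(\log\det\Sigma_1 - \log\det\Sigma_{L+1})$. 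Using the precision recursion $\Sigma_{L+1}^{-1} = \Sigma_1^{-1} + \sigma_w^{-2}\sum_{\ell=1}^L A_\ell A_\ell^\top$ together with $\det(I + AB) = \det(I + BA)$, this rewrites as $\frac{1}{2}\log\det\!\left(I + \sigma_w^{-2}\,\Sigma_1^{1/2} G\, \Sigma_1^{1/2}\right)$ with $G = \sum_{\ell=1}^L A_\ell A_\ell^\top$.

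Finally I would bound this determinant deterministically, so that averaging over the (random, adaptive) actions preserves the bound. The matrix $P := \sigma_w^{-2}\,\Sigma_1^{1/2} G\, \Sigma_1^{1/2}$ is positive semidefinite with trace $\mathrm{tr}(P) = \sigma_w^{-2}\sum_{\ell=1}^L A_\ell^\top \Sigma_1 A_\ell \leq \sigma_w^{-2}\,\lambda_{\max} L$, using $\|A_\ell\|_2 \leq 1$. Since $\det(I + P) = \prod_i (1 + \lambda_i(P))$, the AM-GM inequality gives $\det(I + P) \leq (1 + \mathrm{tr}(P)/d)^d$, so $\frac{1}{2}\log\det(I+P) \leq \frac{d}{2}\log\!\left(1 + \lambda_{\max} L / (d \sigma_w^2)\right) \leq \frac{d}{2}\log\!\left(1 + \lambda_{\max} L / \sigma_w^2\right)$. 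As this holds for every realization of the action sequence, it holds in expectation, giving the claimed bound.

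I expect the main obstacle to be the bookkeeping around the conditioning in the first step: carefully arguing that the action terms drop out and that each surviving term is exactly the Gaussian expression $\frac{1}{2}\log(1 + a^\top\Sigma_\ell a/\sigma_w^2)$ evaluated at the random $A_\ell$ and then averaged. The telescoping and AM-GM steps are essentially mechanical once the per-step formula is in hand; the only remaining point requiring care is that the final determinant inequality is applied pathwise before taking the expectation, since $\Sigma_{L+1}$ (equivalently $G$) is random through the adaptive actions.
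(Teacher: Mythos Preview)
Your proof is correct. It arrives at the same intermediate quantity as the paper, namely $\tfrac{1}{2}\,\mathbb{E}\!\left[\log\det\Sigma_1 - \log\det\Sigma_{L+1}\right]$, but gets there by chain rule plus per-step Gaussian mutual information and a determinant-lemma telescope, whereas the paper computes $h(\theta) - h(\theta \mid \mathcal{H}_{L+1})$ directly from the Gaussian entropy formula; these are two equivalent bookkeeping routes to the same log-determinant ratio. The genuine technical difference is in the final bound: the paper applies the Weyl-type inequality $\lambda_k(\Sigma_1^{-1}+V) \le \lambda_k(\Sigma_1^{-1}) + \lambda_1(V)$ and then $\lambda_1(V) \le \operatorname{tr}(V) \le L/\sigma_w^2$, which immediately gives $\prod_k(1+\lambda_k(\Sigma_1)L/\sigma_w^2) \le (1+\lambda_{\max}L/\sigma_w^2)^d$. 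You instead symmetrize to $P = \sigma_w^{-2}\Sigma_1^{1/2}G\Sigma_1^{1/2}$, bound $\operatorname{tr}(P) \le \lambda_{\max}L/\sigma_w^2$, and apply AM--GM to obtain the sharper intermediate bound $\tfrac{d}{2}\log(1+\lambda_{\max}L/(d\sigma_w^2))$ before relaxing. Your route is slightly tighter and uses only the trace constraint, avoiding the eigenvalue perturbation inequality; the paper's route is a bit more direct but coarser. Both are valid.
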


It follows from Lemma \ref{lemma:linear-bandit-information} that the Bayesian regret of Thompson sampling and UCB is bounded by $O(\sqrt{d L \log|\mathcal{A}|}\log L)$, which matches the result in \cite{srinivas2012information}. 
For large action sets, it is possible to apply a discretization argument and obtain a regret bound of order $O(d\sqrt{L} \log L)$. 

\subsection{Tabular MDPs}
We consider the problem of learning to optimize a random finite-horizon MDP $M = (\mathcal{S}, \mathcal{A}, R, P, \tau, \rho)$ in repeated episodes. 
$\mathcal{S}$ is the state space, $\mathcal{A}$ is the action space, and we assume that both $\mathcal{S}$ and $\mathcal{A}$ are finite. 
Assume that for each $s, a$, the reward distribution is Bernoulli with mean $R(s, a)$, where $R(s, a)$ follows an independent Beta prior with parameter $\alpha^R_{1, s, a} \in \Re^2$. 
We further assume that for each $s, a$, the transition distribution $P(s, a, \cdot)$ follows an independent Dirichlet prior with parameter $\alpha^P_{1, s, a} \in \Re^{|\mathcal{S}|}$. 
$\tau$ is a fixed time horizon, and $\rho$ is the initial state distribution. 
We make the following simplifying assumption on the prior parameters. 
\begin{assumption} \label{assumption:dirichlet-prior}
For all $s \in \mathcal{S}$ and $a \in \mathcal{A}$, 
$\alpha^R_{1, s, a}(i) \geq 1$ for $i \in \{1, 2\}$, and $\alpha^P_{1, s, a}(j) \geq \frac{2}{|\mathcal{S}|}$ for all $j \in \{1, \dots, |\mathcal{S}|\}$.
\end{assumption}

A (deterministic) policy $\mu$ is a sequence of functions $(\mu^0, \dots, \mu^{\tau-1})$ that map states to actions. 
During each episode $\ell$, the agent selects a policy $\mu_\ell$, and observes a trajectory
\[ Y_{\ell, \mu_\ell} = (s_{\ell,0}, a_{\ell,0}, r_{\ell,1}, s_{\ell,1}, \dots, s_{\ell,\tau-1}, a_{\ell,\tau-1}, r_{\ell,\tau}). \]
We define the value function of a policy $\mu$ under an MDP $\tilde{M}$
\[ V^{\tilde{M}}_{\mu, k}(s) := \mathbb{E} \left[ \sum_{t = k}^{\tau-1} R(s_t, a_t) \,\Big|\, M = \tilde{M}, \mu, s_k = s \right]. \]
Define the expected value of a policy $\mu$ under an MDP $\tilde{M}$
\[ \overline{V}^{\tilde{M}}_\mu = \mathbb{E} \left[ V^{\tilde{M}}_{\mu, 0}(s_0) \,\big|\, M = \tilde{M}, \mu \right]. \]
Let $\mu^*$ denote an optimal policy for the true environment $M$. The Bayesian regret of an algorithm $\pi$ over $L$ episodes is
\[ \mathbb{E}[ \textrm{Regret}(L, \pi) ] = \sum_{\ell=1}^L \mathbb{E} \left[ \overline{V}^M_{\mu^*} - \overline{V}^M_{\mu_\ell} \right], \]
where the expectation is taken over the randomness in observations, algorithm $\pi$, as well as the prior distribution over $M$. 

We will construct confidence bounds on value functions in the spirit of Lemma \ref{lemma:confidence-set-reward}. 
As we will see in the following two lemmas, the structure of MDPs allows us to break down the deviation of value functions and the information gain at the level of state-action pairs. Thus, it would be sufficient to construct confidence sets for the reward and transition functions for individual state-action pairs. 

The following lemma decomposes the planning error to a sum of on-policy Bellman errors. 
The proof can be found in Section 5.1 of \cite{osband2013more}. 
\begin{lemma} \label{lemma:mdp-on-policy-errors}
For any MDP $\hat{M}$ and policy $\mu$, 
\[
\overline{V}^{\hat{M}}_\mu - \overline{V}^M_\mu
= \sum_{t=0}^{\tau-1} \mathbb{E} \left[ \left( \hat{R}(s_t, a_t) - R(s_t, a_t) \right)
+ \left( \hat{P}(s_t, a_t) - P(s_t, a_t) \right)^\top V^{\hat{M}}_{\mu, t+1} 
\bigg| \hat{M}, M, \mu \right], 
\]
where $\hat{R}$, $\hat{P}$ are the reward and transition functions under $\hat{M}$. 
\end{lemma}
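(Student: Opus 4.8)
The statement is the classical value-difference (``simulation'') lemma, and the natural route is a telescoping argument over the horizon. The plan is to write the finite-horizon Bellman recursions for both MDPs evaluated along the same policy $\mu$, form their difference one step at a time, and unroll. Writing $a = \mu^k(s)$, the value functions computed entirely within each MDP satisfy, for $k = 0, \dots, \tau-1$,
\[ V^{\hat{M}}_{\mu, k}(s) = \hat{R}(s, a) + \hat{P}(s, a)^\top V^{\hat{M}}_{\mu, k+1}, \qquad V^{M}_{\mu, k}(s) = R(s, a) + P(s, a)^\top V^{M}_{\mu, k+1}, \]
with terminal condition $V^{\hat{M}}_{\mu, \tau} \equiv V^{M}_{\mu, \tau} \equiv 0$ (here $\hat{R}, \hat{P}$ are the reward and transition functions of $\hat{M}$, so that $V^{\hat{M}}_{\mu,k}$ is the value accrued within $\hat{M}$).

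The key step is to subtract these two recursions and insert the cross term $P(s,a)^\top V^{\hat{M}}_{\mu, k+1}$, i.e. to add and subtract the true transition applied to the \emph{hat} value function. This yields
\begin{align*}
V^{\hat{M}}_{\mu, k}(s) - V^{M}_{\mu, k}(s)
&= \left( \hat{R}(s,a) - R(s,a) \right) + \left( \hat{P}(s,a) - P(s,a) \right)^\top V^{\hat{M}}_{\mu, k+1} \\
&\quad + P(s,a)^\top \left( V^{\hat{M}}_{\mu, k+1} - V^{M}_{\mu, k+1} \right).
\end{align*}
The first two terms are exactly the local Bellman error at $(s, k)$ that appears in the summand, while the last term propagates the value gap forward one step under the \emph{true} dynamics $P$.

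This is a one-step recursion for the value gap whose homogeneous part is transport by $P$. Iterating from $k = \tau$ down to $k = 0$ and using the terminal condition to annihilate the boundary term, the composed propagation operators collapse into an expectation over the trajectory $(s_0, s_1, \dots, s_{\tau-1})$ generated by the true MDP $M$ under $\mu$; concretely, $V^{\hat{M}}_{\mu, 0}(s_0) - V^{M}_{\mu, 0}(s_0)$ becomes the sum over $t = 0, \dots, \tau-1$ of the expected local Bellman errors, with each $(s_t, a_t)$ distributed according to $M$ and $\mu$ started from $s_0$. Taking the expectation over $s_0 \sim \rho$ then produces $\overline{V}^{\hat{M}}_\mu - \overline{V}^M_\mu$ on the left and exactly the claimed sum on the right.

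I expect the only genuinely delicate point to be the asymmetric insertion in the key step: the local error must pair the transition difference $\hat{P} - P$ with $V^{\hat{M}}_{\mu,t+1}$, while the residual is carried forward under $P$, so that the telescoped expectation is ultimately taken along trajectories of the \emph{true} MDP $M$ (consistent with the conditioning $\hat{M}, M, \mu$ in the statement). Inserting the cross term the other way would instead pair $\hat{P}-P$ with $V^{M}$ and propagate under $\hat{P}$, giving a different — also valid — decomposition that does not match the stated form. Everything else is bookkeeping: checking that the terminal condition kills the boundary contribution and that the iterated one-step conditional expectations under $P$ compose into a single expectation conditioned on $\hat{M}, M, \mu$.
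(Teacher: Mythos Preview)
Your telescoping argument is correct and is the standard proof of this simulation lemma. The paper does not give its own proof of this statement at all; it simply refers the reader to Section~5.1 of \cite{osband2013more}, where the same add-and-subtract-the-cross-term derivation you outline appears.
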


Let $ \mathcal{H}_{\ell t} = (\mu_\ell, s_{\ell, 0}, a_{\ell, 0}, \dots, r_{\ell, t}, s_{\ell, t}) $ denote the history of episode $\ell$ up to time $t$ and the policy selected for episode $\ell$. 
The chain rule of mutual information gives us the following lemma. 
\begin{lemma} \label{lemma:mdp-info-decomp}
(information decomposition)
\[
I_\ell \left( M ;~ \mu_\ell, Y_{\ell, \mu_\ell}\right)
= \sum_{t=0}^{\tau-1} I_{\ell} \left( M ;~ (s_{\ell, t}, a_{\ell, t}, r_{\ell, t+1}, s_{\ell, t+1}) ~|~ \mathcal{H}_{\ell t} \right).
\]
\end{lemma}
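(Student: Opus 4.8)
The plan is to apply the chain rule of mutual information to the filtered measure $\mathbb{P}_\ell = \mathbb{P}(\cdot \mid \mathcal{H}_\ell)$. Since $I_\ell(X;Y) = D(P_\ell(X,Y)\,\|\,P_\ell(X)P_\ell(Y))$ is simply the divergence computed under the fixed conditional measure $\mathbb{P}_\ell$, the chain rule stated earlier holds verbatim with $\mathbb{P}$ replaced by $\mathbb{P}_\ell$, and I will use this filtered form throughout. First I would regroup the pair $(\mu_\ell, Y_{\ell,\mu_\ell})$ into an ordered sequence of blocks: an initial block $(\mu_\ell, s_{\ell,0})$, followed for each $t = 0,\dots,\tau-1$ by a transition block $(a_{\ell,t}, r_{\ell,t+1}, s_{\ell,t+1})$. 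The crucial structural observation is that the concatenation of the initial block with the first $t$ transition blocks is exactly $\mathcal{H}_{\ell t} = (\mu_\ell, s_{\ell,0}, a_{\ell,0}, \dots, r_{\ell,t}, s_{\ell,t})$.

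Applying the filtered chain rule along this ordering, and noting that the conditioning set accumulated before the $t$th transition block is precisely $\mathcal{H}_{\ell t}$, yields
\[
I_\ell(M;\mu_\ell, Y_{\ell,\mu_\ell})
= I_\ell(M;\mu_\ell, s_{\ell,0})
+ \sum_{t=0}^{\tau-1} I_\ell\!\left(M;\, a_{\ell,t}, r_{\ell,t+1}, s_{\ell,t+1}\,\middle|\,\mathcal{H}_{\ell t}\right).
\]

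Two reductions then finish the argument. First, I would show the initial term vanishes: $\mu_\ell$ is selected as a (possibly randomized) function of $\mathcal{H}_\ell$ using exogenous randomness, so $\mu_\ell \perp M \mid \mathcal{H}_\ell$ and $I_\ell(M;\mu_\ell)=0$; and since $s_{\ell,0}\sim\rho$ with $\rho$ known and independent of the unknown $M=(R,P)$, also $I_\ell(M; s_{\ell,0}\mid\mu_\ell)=0$, so $I_\ell(M;\mu_\ell, s_{\ell,0})=0$ by one further application of the two-variable chain rule. Second, I would reinstate the leading $s_{\ell,t}$ inside each tuple at no cost: because $s_{\ell,t}$ is already a component of $\mathcal{H}_{\ell t}$ we have $I_\ell(M; s_{\ell,t}\mid\mathcal{H}_{\ell t})=0$, and the chain rule gives $I_\ell(M; s_{\ell,t}, a_{\ell,t}, r_{\ell,t+1}, s_{\ell,t+1}\mid\mathcal{H}_{\ell t}) = I_\ell(M; a_{\ell,t}, r_{\ell,t+1}, s_{\ell,t+1}\mid\mathcal{H}_{\ell t})$. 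Substituting both reductions produces the claimed identity.

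The only genuinely nontrivial step is the vanishing of the initial block, which is where the structural assumptions enter: it relies on the policy being chosen from the observed history with independent randomization, so that conditioning on $\mathcal{H}_\ell$ renders $\mu_\ell$ independent of $M$, and on $\rho$ and $\tau$ being known rather than part of the unknown parameter. Everything else is bookkeeping with the chain rule. I would also note in passing that the trailing state $s_{\ell,\tau}$ generated by the final transition block is part of the observed trajectory, so the telescoping is exact.
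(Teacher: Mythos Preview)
Your proposal is correct and follows exactly the approach the paper indicates: the paper simply states that ``the chain rule of mutual information gives us the following lemma'' without further detail, and your argument is precisely the unpacking of that chain rule together with the observation that the leading block $(\mu_\ell,s_{\ell,0})$ carries no information about $M$ under $\mathbb{P}_\ell$. One small caveat: the paper's displayed definition of $Y_{\ell,\mu_\ell}$ stops at $r_{\ell,\tau}$ and does not list $s_{\ell,\tau}$, so your remark that the trailing state is ``part of the observed trajectory'' slightly overstates what is written; however, the lemma statement itself includes $s_{\ell,\tau}$ in the final summand, so this is an inconsistency in the paper's notation rather than a flaw in your reasoning.
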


By Lemmas \ref{lemma:mdp-on-policy-errors} and \ref{lemma:mdp-info-decomp}, we will construct information-theoretic confidence sets for reward and transition distributions individually for each state-action pair. 
\begin{restatable}{lemma}{lemmamdpcb}
\label{lemma:mdp-cb}
Let $ r_{\ell, t, s, a} \sim {\rm Bernoulli}(R(s, a)) \,|\, \mathcal{H}_\ell, \mathcal{H}_{\ell t}$
~and~ $ s'_{\ell, t, s, a} \sim P(s, a, \cdot) \,|\, \mathcal{H}_\ell, \mathcal{H}_{\ell t} $. 
Then, 
\begin{equation} \label{eq:mdp-cb-reward}
\mathbb{P}_\ell \left( \left| R(s, a) - \mathbb{E}_\ell R(s, a) \right| 
\leq \Gamma^R \sqrt{ \min_{\tilde{t}, h_{\tilde{t}}} I_{\ell}(R; s, a, r_{\ell, \tilde{t}, s, a} \,|\, \mathcal{H}_{\ell \tilde{t}} = h_{\tilde{t}}) } \right) \geq 1-\delta
\end{equation} 
and
\begin{equation} \label{eq:mdp-cb-transition}
\mathbb{P}_\ell \left( \left| \big( P(s, a) - \mathbb{E}_\ell P(s, a) \big)^\top V^M_{\mu^*, t+1} \right| 
\leq \Gamma^P \sqrt{ \min_{\tilde{t}, h_{\tilde{t}}}  I_{\ell}(P; s, a, s'_{\ell, \tilde{t}, s, a} \,|\, \mathcal{H}_{\ell \tilde{t}} = h_{\tilde{t}}) } \right) \geq 1-\delta
\end{equation}
for all $t$ and all $s, a$ such that $\mathbf{1}^\top \alpha^R_{\ell, s, a} \geq \tau - 1$ and $\mathbf{1}^\top \alpha^P_{\ell, s, a} \geq \tau - 1$, respectively, where
\[ \Gamma^R = \sqrt{24 \log\textstyle{\frac{2}{\delta}}}
\quad \text{and} \quad 
\Gamma^P = \tau\sqrt{24 \log\textstyle{\frac{2}{\delta}}}. \]
\end{restatable}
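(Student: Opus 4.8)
The plan is to reduce both bounds in \eqref{eq:mdp-cb-reward} and \eqref{eq:mdp-cb-transition} to a single recipe applied to the Beta reward posterior and the Dirichlet transition posterior: establish (i) a sub-Gaussian tail bound for the relevant scalar functional whose proxy variance scales like the posterior variance, and (ii) a lower bound on the one-step mutual information of the same order as that posterior variance; combining (i) and (ii) and tuning the deviation to the tail probability then yields the constants. First I would invoke conjugacy: conditioned on $\mathcal{H}_\ell$, the posterior of $R(s,a)$ is $\mathrm{Beta}(\alpha^R_{\ell,s,a})$ and the posterior of $P(s,a,\cdot)$ is $\mathrm{Dirichlet}(\alpha^P_{\ell,s,a})$, with Assumption~\ref{assumption:dirichlet-prior} guaranteeing componentwise lower bounds on these parameters that persist for all $\ell$ since observations only increase them.

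The role of the minimum over $(\tilde t, h_{\tilde t})$ and of the hypotheses $\mathbf{1}^\top\alpha^R_{\ell,s,a}\ge\tau-1$ and $\mathbf{1}^\top\alpha^P_{\ell,s,a}\ge\tau-1$ is what I would handle next, as it is the reason the confidence width (built from the $\mathcal{H}_\ell$-posterior) can be expressed through the smaller information terms appearing in the decomposition of Lemma~\ref{lemma:mdp-info-decomp}. Conditioning on a within-episode history $\mathcal{H}_{\ell\tilde t}$ can only augment the posterior pseudo-counts at $(s,a)$, and since $(s,a)$ is visited at most $\tilde t\le\tau-1$ times before step $\tilde t$, the total pseudo-count $n=\mathbf{1}^\top\alpha_{\ell,s,a}$ grows by at most $\tau-1$. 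When $n\ge\tau-1$ the effective count at most doubles, so the one-step mutual information, which decreases in the count, is within a factor of two of its value under the $\mathcal{H}_\ell$-posterior; hence I can lower bound $\min_{\tilde t,h_{\tilde t}} I_\ell(\cdot)$ by one half of the one-step information computed directly from $\alpha^R_{\ell,s,a}$ (resp.\ $\alpha^P_{\ell,s,a}$).

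For the information lower bound I would rewrite the one-step mutual information as an expected Kullback--Leibler divergence from a posterior sample to the posterior mean. For the reward this is $I_\ell(R;r)=\mathbb{E}_\ell[D(\mathrm{Bernoulli}(R(s,a))\,\|\,\mathrm{Bernoulli}(\mathbb{E}_\ell R(s,a)))]$, and isolating the quadratic term of the divergence as a genuine lower bound (legitimate because $\alpha^R\ge1$ keeps the posterior mean away from $0$ and $1$ on the scale of its standard deviation) gives $I_\ell(R;r)\gtrsim \mathrm{Var}_\ell(R(s,a))/(\mathbb{E}_\ell R(s,a)(1-\mathbb{E}_\ell R(s,a)))\asymp 1/(n+1)$. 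For the transition the same identity yields $I_\ell(P;s')=\mathbb{E}_\ell[D(P(s,a,\cdot)\,\|\,\mathbb{E}_\ell P(s,a,\cdot))]\gtrsim 1/(n+1)$, where the analogous expansion is controlled using the componentwise lower bounds of Assumption~\ref{assumption:dirichlet-prior}; since $\|V^M_{\mu^*,t+1}\|_\infty\le\tau$ gives $\mathrm{Var}_\ell((P(s,a)-\mathbb{E}_\ell P(s,a))^\top V^M_{\mu^*,t+1})\le\tau^2/(4(n+1))$, this single term already dominates the functional's variance up to the factor $\tau^2$, which is the source of the extra $\tau$ in $\Gamma^P$.

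The concentration step is where I expect the main difficulty. I would use that the centered posterior of $R(s,a)$ and the centered linear functional $(P(s,a)-\mathbb{E}_\ell P(s,a))^\top V^M_{\mu^*,t+1}$ are sub-Gaussian with a proxy variance proportional to their true variance, rather than the crude bound $1/4$ that Hoeffding would give: for the Beta posterior this follows from the known sub-Gaussian constant of Beta random variables with parameters at least one, of order $1/(n+1)$, while for the Dirichlet functional one needs the analogous variance-adaptive tail bound for a linear functional of a Dirichlet, which does not follow from log-concavity alone under Assumption~\ref{assumption:dirichlet-prior} and must be argued directly (e.g.\ through the Gamma representation of the Dirichlet). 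Given proxy variances $v^R\asymp 1/(n+1)$ and $v^P\asymp\tau^2/(n+1)$, a sub-Gaussian tail $\mathbb{P}_\ell(|\cdot|\ge s)\le2\exp(-s^2/(2v))$ with $s=\Gamma\sqrt{I}$ is at most $\delta$ once $\Gamma^2$ exceeds a fixed multiple of $v/I$; folding in the factor-two history slack, the variance-to-information ratios, and the sub-Gaussian constant into the numerical constant $24$, and carrying the $\tau^2$ into the transition case, produces exactly $\Gamma^R=\sqrt{24\log(2/\delta)}$ and $\Gamma^P=\tau\sqrt{24\log(2/\delta)}$. The delicate points are obtaining the variance-adaptive sub-Gaussian constant for the Dirichlet functional and checking that $24$ absorbs every slack.
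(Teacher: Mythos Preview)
Your overall scaffolding---the factor-two slack from within-episode counts, a $1/\bar\alpha$ lower bound on one-step information, and a sub-Gaussian tail with proxy variance $\asymp 1/\bar\alpha$ (or $\tau^2/\bar\alpha$ for transitions)---matches the paper's route closely, and the way you derive the constant $24=2\cdot 12$ is exactly how it arises there. But there is one genuine gap in the transition bound. You treat $(P(s,a)-\mathbb{E}_\ell P(s,a))^\top V^M_{\mu^*,t+1}$ as a linear functional of the Dirichlet with a \emph{fixed} coefficient vector $v\in[0,\tau]^{|\mathcal S|}$, and then look for a variance-adaptive sub-Gaussian constant for such functionals. However, $V^M_{\mu^*,t+1}$ is a function of the true MDP $M$, and in particular depends on $P(s,a)$ itself; it is random under $\mathbb{P}_\ell$ and correlated with $P(s,a)$. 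A sub-Gaussian bound for $p^\top v$ with deterministic $v$ (via Gaussian--Dirichlet dominance or the Gamma representation) does not transfer to this correlated case, and this is precisely the ``delicate point'' the paper singles out: it invokes a separate result (Lemma~3 of \cite{pmlr-v70-osband17a}) that establishes the sub-Gaussian tail $\mathbb{P}_\ell(|(P-\mathbb{E}_\ell P)^\top V^M_{\mu^*,t+1}|\ge\tau\sqrt{(2/\bar\alpha)\log(2/\delta)})\le\delta$ \emph{despite} the correlation. Your proposal does not identify this dependence, so as written the concentration step for \eqref{eq:mdp-cb-transition} would not go through.

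On the information lower bound, your route differs from the paper's but is not wrong in spirit. The paper computes $I(p;Y)$ in closed form via the differential entropy of the Dirichlet and then uses digamma inequalities to obtain $I(p;Y)\ge 1/(6\bar\alpha)$ directly (Lemma~\ref{lemma:dirichlet-information}); this cleanly produces the factor $12$ after the doubling slack. Your identity $I(p;Y)=\mathbb{E}[D(p\,\|\,\bar p)]$ is correct, but ``isolating the quadratic term of the divergence as a genuine lower bound'' is not automatic: the second-order expansion of $D(\cdot\|\bar p)$ is only a local approximation, so turning it into a uniform lower bound of the right order requires an additional argument controlling the tails of $p$ around $\bar p$. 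The paper's explicit computation sidesteps this entirely and is what makes the constant $24$ exact rather than ``absorbed''.
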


The terms $\min_{\tilde{t}, h_{\tilde{t}}} I_{\ell}(R; s, a, r_{\ell, \tilde{t}, s, a} \,|\, \mathcal{H}_{\ell \tilde{t}} = h_{\tilde{t}})$ and $\min_{\tilde{t}, h_{\tilde{t}}}  I_{\ell}(P; s, a, s'_{\ell, \tilde{t}, s, a} \,|\, \mathcal{H}_{\ell \tilde{t}} = h_{\tilde{t}})$ measure the minimum per-step information gain that the agent can obtain about the reward and transition functions of a state-action pair during the $\ell$th episode, conditioned on $\mathcal{H}_\ell$, where the minimum is taken over all possible values of the time step $0 \leq \tilde{t} \leq \tau - 1$ and all possible realizations of the trajectory $\mathcal{H}_{\ell \tilde{t}}$. 

Lemma \ref{lemma:mdp-cb} allows us to construct a high probability confidence set $\mathcal{M}_\ell$ over $M$, which is discussed more in details in the appendix. The corresponding UCB algorithm will select $\mu_\ell = \arg\max_{\mu} \max_{\hat{M} \in \mathcal{M}_\ell} \overline{V}^{\hat{M}}_\mu$. 
Combining with Lemmas \ref{lemma:mdp-on-policy-errors} and \ref{lemma:mdp-info-decomp}, we are able to obtain a per-period regret bound for Thompson sampling and UCB in the form of \eqref{eq:one-period-regret}, with $\Gamma_\ell = \tilde{O}(\tau \sqrt{\tau})$. This leads to the following proposition. 
\begin{restatable}{proposition}{propmdpregret} 
\label{prop:mdp-regret}
The Bayesian regret of Thompson sampling and UCB over $L$ episodes is
\[ \mathbb{E}[{\rm Regret}(L, \pi)] = \tilde{O} \left( \tau \sqrt{\tau L ~I(M; \mu_1, Y_{1, \mu_1}, \dots, \mu_{L}, Y_{L, \mu_{L}}) } \right). \]
\end{restatable}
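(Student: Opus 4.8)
The plan is to first establish a per-episode regret bound of the form \eqref{eq:one-period-regret}, namely $\mathbb{E}_\ell[\Delta_\ell] \leq \Gamma_\ell \sqrt{I_\ell(M; \mu_\ell, Y_{\ell, \mu_\ell})} + \epsilon_\ell$ with $\Gamma_\ell = \tilde{O}(\tau\sqrt{\tau})$, and then invoke Proposition \ref{prop:general-regret} with $\delta = 1/L$ to pass to the cumulative bound. For the per-episode step I would first reduce $\Delta_\ell = \overline{V}^M_{\mu^*} - \overline{V}^M_{\mu_\ell}$ to the difference between an ``imagined'' MDP and the true one. For Thompson sampling, probability matching gives $\mathbb{E}_\ell[\overline{V}^M_{\mu^*}] = \mathbb{E}_\ell[\overline{V}^{\hat M_\ell}_{\mu_\ell}]$, since $(\hat M_\ell, \mu_\ell)$ and $(M, \mu^*)$ share the conditional law given $\mathcal{H}_\ell$, so that $\mathbb{E}_\ell[\Delta_\ell] = \mathbb{E}_\ell[\overline{V}^{\hat M_\ell}_{\mu_\ell} - \overline{V}^M_{\mu_\ell}]$; for UCB, optimism of $\max_{\hat M \in \mathcal{M}_\ell} \overline{V}^{\hat M}_{\mu_\ell}$ over the confidence set $\mathcal{M}_\ell$ yields, on the event $M \in \mathcal{M}_\ell$, the bound $\Delta_\ell \leq \overline{V}^{\tilde M_\ell}_{\mu_\ell} - \overline{V}^M_{\mu_\ell}$ with $\tilde M_\ell$ the maximizing MDP.

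Next I would apply Lemma \ref{lemma:mdp-on-policy-errors} with $\hat M$ the imagined MDP, writing this difference as $\sum_{t=0}^{\tau-1}\mathbb{E}[(\hat R(s_t,a_t) - R(s_t,a_t)) + (\hat P(s_t,a_t) - P(s_t,a_t))^\top V^{\hat M}_{\mu_\ell, t+1}\mid\cdot]$, and then bound each on-policy Bellman error by the confidence widths of Lemma \ref{lemma:mdp-cb}. Inserting the posterior mean $\mathbb{E}_\ell$ and using the triangle inequality controls each error by twice the corresponding width (for UCB because $\tilde M_\ell$ and $M$ both lie in $\mathcal{M}_\ell$, for TS because $\hat M_\ell$ and $M$ share the posterior law); since the $\min_{\tilde t, h_{\tilde t}}$ inside each width is no larger than the per-step information gain at the step actually realized, each width is at most $\Gamma^R$ or $\Gamma^P$ times $\sqrt{I_\ell(\cdot\mid\mathcal{H}_{\ell t})}$, with $\Gamma^P = \tau\sqrt{24\log(2/\delta)}$. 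I would then apply Cauchy--Schwarz across the $\tau$ steps, $\sum_t \sqrt{I_t} \leq \sqrt{\tau \sum_t I_t}$, and use the information decomposition of Lemma \ref{lemma:mdp-info-decomp} to identify $\sum_t I_t$ with $I_\ell(M; \mu_\ell, Y_{\ell, \mu_\ell})$. This gives $\Gamma_\ell$ of order $\Gamma^P\sqrt{\tau} = \tilde{O}(\tau\sqrt{\tau})$, the dominant factor coming from the transition term, and a residual $\epsilon_\ell = \tilde{O}(\delta\tau)$ absorbing both the failure probability of the confidence sets (where regret is crudely bounded by $\tau$) and state-action pairs whose pseudo-counts $\mathbf{1}^\top\alpha_{\ell,s,a}$ have not yet reached the $\geq \tau - 1$ threshold required by Lemma \ref{lemma:mdp-cb}.

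The main obstacle I anticipate is reconciling the value function appearing in the Bellman decomposition, $V^{\hat M}_{\mu_\ell, t+1}$, with the fixed value function $V^M_{\mu^*, t+1}$ for which Lemma \ref{lemma:mdp-cb} certifies concentration. For Thompson sampling this is resolved by posterior-sampling structure: conditioned on $\mathcal{H}_\ell$, the sample $\hat M_\ell$ is independent of the true $M$ and has the same law, so $V^{\hat M_\ell}_{\mu_\ell, t+1}$ is conditionally independent of $P(s,a)$ and identically distributed to $V^M_{\mu^*, t+1}$; conditioning first on $\hat M_\ell$ to freeze the value function and then applying the confidence bound shows that $\mathbb{E}_\ell|(P(s,a) - \mathbb{E}_\ell P(s,a))^\top V^{\hat M_\ell}_{\mu_\ell,t+1}|$ is bounded exactly by the quantity controlled in Lemma \ref{lemma:mdp-cb}, while the $\hat P_\ell$ term matches it in distribution. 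For UCB the reconciliation instead rests on building $\mathcal{M}_\ell$ so that membership enforces the per-pair deviations uniformly enough to cover $V^{\tilde M_\ell}_{\mu_\ell,t+1}$, whose covering details I would defer to the confidence-set construction in the appendix. Finally, plugging $\Gamma = \tilde{O}(\tau\sqrt{\tau})$ and $\delta = 1/L$ into Proposition \ref{prop:general-regret} gives $\mathbb{E}[{\rm Regret}(L,\pi)] \leq \Gamma\sqrt{L\, I(M; \mu_1, Y_{1,\mu_1}, \dots, \mu_L, Y_{L,\mu_L})} + \tilde{O}(\tau) = \tilde{O}(\tau\sqrt{\tau L\, I(M; \mu_1, Y_{1,\mu_1}, \dots, \mu_L, Y_{L,\mu_L})})$, as claimed.
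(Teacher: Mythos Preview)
Your proposal is essentially correct and mirrors the paper's proof: reduce to $\overline{V}^{\hat M_\ell}_{\mu_\ell} - \overline{V}^M_{\mu_\ell}$ via probability matching (or optimism), expand with Lemma~\ref{lemma:mdp-on-policy-errors}, control each Bellman error by the widths of Lemma~\ref{lemma:mdp-cb}, pass from $\min_{\tilde t, h_{\tilde t}} I_\ell(\cdot)$ to the realized $I_{\ell t}(\cdot)$, apply Cauchy--Schwarz over $t$, and collapse via Lemma~\ref{lemma:mdp-info-decomp}. The paper handles the value-function mismatch the same way you do, defining a good event $E_\ell$ on which both $\hat M_\ell \in \mathcal{M}_\ell$ and a fresh concentration bound (re-proved with $V^{\hat M_\ell}_{\mu_\ell,t+1}$ in place of $V^M_{\mu^*,t+1}$, valid because $\hat M_\ell$ is independent of $P$ given $\mathcal{H}_\ell$) hold.

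One bookkeeping slip: the residual is not $\epsilon_\ell = \tilde O(\delta\tau)$ per episode. The low-pseudo-count indicator contributes up to $\tau$ per step (hence $\tau^2$ per episode) in early episodes, and the paper carries it as $\epsilon_\ell = \tau\,\mathbb{E}_\ell\!\big[\sum_{t}\mathbf{1}(n_{\ell,s_{\ell,t},a_{\ell,t}} < \tau-3)\big] + \delta\tau$, so that $\sum_{\ell=1}^L \epsilon_\ell \le |\mathcal{S}||\mathcal{A}|\tau^2 + \delta\tau L$, not $\tilde O(\tau)$. This is still $L$-independent (after setting $\delta = 1/L$) and therefore lower order, so the final $\tilde O$ statement is unaffected.
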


We show in the appendix that $I(M; \mu_1, Y_{1, \mu_1}, \dots, \mu_{L}, Y_{L, \mu_{L}}) = \tilde{O}(S^2 A)$. Though we conjecture that a bound of $\tilde{O}(SA)$ may be attainable under appropriate conditions. The conjecture is supported by our simulations, as discussed in the appendix. 

\subsection{Factored MDPs}
Factored MDPs \cite{boutilier2000stochastic} are a class of structured MDPs where transitions are represented by a dynamic Bayesian network \cite{ghahramani1998learning}, and can typically be encoded in a compact parametric form. Our information-theoretic approach will lead to a regret bound for Thompson sampling and UCB that depends on the prior of the model parameter, which typically has dimension exponentially smaller than the state space and action space.

We start with some definitions common in the literature \cite{szita2009optimistic}. 
\begin{definition} \label{def:fmdp-scope}
Let $\mathcal{X} = \mathcal{X}_1 \times \dots \times \mathcal{X}_d$ be a factored set. 
For any subset of indices $Z \subseteq \{1, 2, \dots, d\}$, we define the scope set $\mathcal{X}[Z] := \otimes_{i \in Z} \mathcal{X}_i$. 
Further, for any $x \in \mathcal{X}$, define the scope variable $x[Z] \in \mathcal{X}[Z]$ to be the value of the variables $x_i \in \mathcal{X}_i$ with indices $i \in Z$. 
If $Z$ is a singleton, we will write $x[i]$ for $x[\{i\}]$. 
\end{definition}

Let $\mathcal{P}(\mathcal{X}, \mathcal{Y})$ denote the set of functions that map $x \in \mathcal{X}$ to a probability distribution on $\mathcal{Y}$. 
\begin{definition} \label{def:fmdp-freward}
The reward function class $\mathcal{R} \subset \mathcal{P}(\mathcal{X}, \Re)$ is factored over $\mathcal{S} \times \mathcal{A} = \mathcal{X} = \mathcal{X}_1 \times \dots \times \mathcal{X}_d$ with scopes $Z_1, \dots, Z_m$ if and only if, for all $R \in \mathcal{R}$, $x \in \mathcal{X}$, there exist functions $\{R_i \in \mathcal{P}(\mathcal{X}[Z_i], \Re) \}_{i = 1}^m$ such that
\[ \mathbb{E}[r] = \sum_{i=1}^m \mathbb{E}[r_i] \]
where $r \sim R(x)$ is equal to $\sum_{i=1}^m r_i$ with each $r_i \sim R_i(x[Z_i])$ individually observed. 
\end{definition}

\begin{definition} \label{def:fmdp-ftransition}
The transition function class $\mathcal{P} \subset \mathcal{P}(\mathcal{X}, \mathcal{S})$ is factored over $\mathcal{S} \times \mathcal{A} = \mathcal{X} = \mathcal{X}_1 \times \dots \times \mathcal{X}_d$ and $\mathcal{S} = \mathcal{S}_1 \times \dots \times \mathcal{S}_n$ with scopes $Z_1, \dots, Z_n$ if and only if, for all $P \in \mathcal{P}$, $x \in \mathcal{X}$, $s \in \mathcal{S}$, there exist functions $\{P_j \in \mathcal{P}(\mathcal{X}[Z_j], \mathcal{S}_j)\}_{j=1}^n$ such that
\[ P(s | x) = \prod_{j=1}^n P_j ( s[j] ~|~ x[Z_j] ). \]
\end{definition}
A factored MDP is an MDP with factored rewards and transitions. Let $\mathcal{X} = \mathcal{S} \times \mathcal{A}$. A factored MDP is fully characterized by
\[
M = \left( \{\mathcal{X}_i\}_{i=1}^d; ~  \{Z_i^R\}_{i=1}^m; ~  \{R_i\}_{i=1}^m; ~ 
 \{\mathcal{S}_j\}_{j=1}^n; ~ \{Z_j^P\}_{j=1}^n; ~  \{P_j\}_{j=1}^n; ~ 
 \tau; ~\rho \right), \]
where $\{Z_i^R\}_{i=1}^m$ and $\{Z_j^P\}_{j=1}^n$ are the scopes for the reward and transition functions, which we assume are known to the agent, $\tau$ is a fixed time horizon, and $\rho$ is the initial state distribution. 

We assume that $|Z^R_i|, |Z^P_j| \leq \zeta \ll d$ and $|\mathcal{X}_i| \leq K$, so the domain of any reward and transition function has size at most $K^\zeta$.
Let $D_R = \sum_{i=1}^m \left| \mathcal{X}[Z^R_i] \right|$ and $D_P = \sum_{j=1}^n \left| \mathcal{X}[Z^P_j] \right|$
be the sum of the cardinality of the domains, and let $D = D_R + D_P \ll |\mathcal{S}| |\mathcal{A}|$. 
To simplify exposition, let us assume that $|\mathcal{S}_j| \leq K$ for all $j = 1, \dots, n$. 

Let $x^R_i$ denote an element in $\mathcal{X}[Z^R_i]$, and $x^P_j$ denote an element in $\mathcal{X}[Z^P_j]$. 
We assume that the factorized rewards are Bernoulli. With a slight abuse of notation, we let $R_i(x^R_i)$ denote the mean reward, and we assume that $\textstyle{R_i(x^R_i) \sim \text{Beta}(\alpha^R_{1, i, x^R_i})}$, where $\textstyle{\alpha^R_{1, i, x^R_i} \in \Re^2}$. 
We further assume that $\textstyle{P_j(x^P_j, \cdot) \sim \text{Dirichlet}(\alpha^P_{1, j, x^P_j})}$, $\textstyle{\alpha^P_{1, j, x^P_j} \in \Re^{|\mathcal{S}_j|}}$. 
Similar to the tabular case, we assume that each component of $\alpha^R_{1, i, x^R_i}$ is at least 1, and each component of $\alpha^P_{1, j, x^P_j}$ is at least $\frac{2}{|\mathcal{S}_j|}$ for all $i$ and $j$. 

Lemmas \ref{lemma:mdp-on-policy-errors} and \ref{lemma:mdp-info-decomp} still hold. 
Since the reward and transition functions can be factorized, we will construct information-theoretic confidence bounds on these factor functions. 
\begin{restatable}{lemma}{lemmafmdpcb}
\label{lemma:fmdp-cb}
Let $ r_{\ell, t, i, x^R_i} \sim {\rm Bernoulli}(R(x^R_i)) \,|\, \mathcal{H}_\ell, \mathcal{H}_{\ell t}$
~and~ $ s'_{\ell, t, j, x^P_j} \sim P_j(x^P_j, \cdot) \,|\, \mathcal{H}_\ell, \mathcal{H}_{\ell t}$. 
Then, 
\begin{equation} \label{eq:fmdp-cb-reward}
\mathbb{P}_\ell \left( \left| R_i (x^R_i) - \mathbb{E}_\ell R_i(x^R_i) \right| 
\leq \Gamma^R \sqrt{ \min_{\tilde{t}, h_{\tilde{t}}} I_\ell (R_i; x^R_i, r_{\ell, \tilde{t}, i, x^R_i} ~|~ \mathcal{H}_{\ell \tilde{t}} = h_{\tilde{t}} ) } \right) \geq 1-\delta
\end{equation} 
and
\begin{equation} \label{eq:fmdp-cb-transition}
\mathbb{P}_\ell \left( \| P_j(x^P_j) - \mathbb{E}_\ell P_j(x^P_j) \|_1
\leq \Gamma^P_j \sqrt{ \min_{\tilde{t}, h_{\tilde{t}}} I_\ell (P_j; x^P_j, s'_{\ell, \tilde{t}, j, x^P_j} ~|~ \mathcal{H}_{\ell \tilde{t}} = h_{\tilde{t}} ) } \right) \geq 1-\delta
\end{equation}
for all $s, a$ such that $\mathbf{1}^\top \alpha^R_{\ell, i, x^R_i} \geq \tau - 1$ and $\mathbf{1}^\top \alpha^P_{\ell, j, x^P_j} \geq \tau - 1$, respectively, with
\[ \Gamma^R = 2 \sqrt{6 \log\textstyle{\frac{2}{\delta}}} \quad \text{and} \quad \Gamma^P_j =  4 \sqrt{6 |\mathcal{S}_j| \log\textstyle{\frac{2}{\delta}}}. \]
\end{restatable}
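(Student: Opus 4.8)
The plan is to establish the reward bound \eqref{eq:fmdp-cb-reward} and the transition bound \eqref{eq:fmdp-cb-transition} separately, following the template of Lemma~\ref{lemma:mdp-cb}: for each factor I pair a posterior concentration inequality, which bounds the deviation of the parameter from its posterior mean in terms of the posterior pseudo-count, with a matching lower bound on the one-observation filtered mutual information in terms of the reciprocal of the same pseudo-count, so that the pseudo-count cancels. By Beta/Dirichlet conjugacy the posteriors of $R_i(x^R_i)$ and $P_j(x^P_j,\cdot)$ given $\mathcal{H}_\ell$ remain Beta and Dirichlet, and the componentwise prior lower bounds persist for the posterior parameters $\alpha^R_{\ell,i,x^R_i}$ and $\alpha^P_{\ell,j,x^P_j}$; I will use these lower bounds to keep the constants explicit.

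For the reward factor, write $n=\mathbf{1}^\top\alpha^R_{\ell,i,x^R_i}$. First I would invoke the sub-Gaussianity of a Beta random variable, whose variance proxy is at most $\tfrac{1}{4(n+1)}$, to obtain $\mathbb{P}_\ell(|R_i(x^R_i)-\mathbb{E}_\ell R_i(x^R_i)|\ge\epsilon)\le 2\exp(-2(n+1)\epsilon^2)$, i.e.\ a deviation of order $\sqrt{n^{-1}\log(2/\delta)}$ at level $\delta$. Second, I would lower bound the one-step information gain $I_\ell(R_i;x^R_i,r\mid \mathcal{H}_{\ell\tilde{t}}=h_{\tilde{t}})$ by $c/n'$, where $n'$ is the posterior pseudo-count after conditioning on $h_{\tilde{t}}$; the cleanest route is the identity $I=\mathbb{E}[D(\text{posterior}\,\|\,\text{prior})]$ for a single Bernoulli draw, which for Beta parameters reduces to an explicit digamma expression with leading term $\tfrac{1}{2(n'+1)}$, bounded below uniformly using the $\alpha\ge 1$ hypothesis. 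This is exactly the reward estimate used in the tabular case, which is why $\Gamma^R$ is unchanged.

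For the transition factor I would replace the scalar Beta concentration with an $\ell_1$ concentration for the Dirichlet vector $P_j(x^P_j,\cdot)$ of dimension $|\mathcal{S}_j|$. Decomposing $\|P_j-\mathbb{E}_\ell P_j\|_1$ into its marginal Beta coordinates and bounding $\mathbb{E}\|P_j-\mathbb{E}_\ell P_j\|_1\le\sum_s\sqrt{\mathrm{Var}(P_j(s))}\le\sqrt{|\mathcal{S}_j|/(n+1)}$ by Cauchy--Schwarz, together with a high-probability version, yields a deviation of order $\sqrt{|\mathcal{S}_j|\,n^{-1}\log(1/\delta)}$ --- this is where the $\sqrt{|\mathcal{S}_j|}$ in $\Gamma^P_j$ originates. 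For the information term I only need the dimension-free lower bound $I_\ell(P_j;x^P_j,s'\mid\cdot)\ge c/n'$, which I would obtain by data processing: grouping the $|\mathcal{S}_j|$ outcomes into one coordinate versus the rest reduces the categorical information gain to the Bernoulli information gain of a single marginal, falling back on the Beta estimate above. Pairing the dimension-bearing concentration with the dimension-free information bound is what produces the clean $\sqrt{|\mathcal{S}_j|}$ scaling.

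Finally, the minimum over $(\tilde{t},h_{\tilde{t}})$ must be converted into a bound in terms of the episode-start count $n$. Since the horizon is $\tau$ and $\mathcal{H}_{\ell\tilde{t}}$ records at most $\tilde{t}\le\tau-1$ additional visits to the scope value, the posterior pseudo-count after conditioning on any $h_{\tilde{t}}$ is at most $n+\tau-1$; the hypothesis $\mathbf{1}^\top\alpha\ge\tau-1$ then gives $n'\le 2n$, so $\min_{\tilde{t},h_{\tilde{t}}} I\ge c/(2n)$ and the pseudo-count cancels against the concentration width, collecting the stated constants after a union bound over the two sides of each deviation. I expect the main obstacle to be the information lower bound: establishing $I\ge c/n'$ non-asymptotically with an explicit constant that is uniform over all posterior parameters permitted by the prior assumption --- in particular controlling the entropy/digamma gap when some Dirichlet coordinates are as small as $2/|\mathcal{S}_j|$ --- while ensuring the dimension dependence enters exactly once, through the $\ell_1$ concentration rather than the information term. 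The count-stability bookkeeping and the data-processing reduction of the Dirichlet information gain to a Beta one are the supporting steps that make the constants come out cleanly.
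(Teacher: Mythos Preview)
Your overall architecture---pair a posterior concentration inequality with a lower bound on the one-observation mutual information, then use the hypothesis $\mathbf{1}^\top\alpha\ge\tau-1$ to absorb the at most $\tau-1$ within-episode updates---is exactly the paper's structure, and the reward case goes through just as you say (this is the tabular argument verbatim). The differences are in the two transition-side ingredients.

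For the $\ell_1$ concentration, the paper does not decompose coordinatewise. It writes $\|p-\mathbb{E}_\ell p\|_1=\max_{v\in\{-1,1\}^{|\mathcal{S}_j|}}(p-\mathbb{E}_\ell p)^\top v$, applies the Gaussian--Dirichlet dominance lemma (Lemma~\ref{lemma:gaussian-dirichlet-dominance}) to each fixed sign vector, and union-bounds over the $2^{|\mathcal{S}_j|}$ vectors; the $|\mathcal{S}_j|$ then enters through $\log 2^{|\mathcal{S}_j|}$ and comes out as the $\sqrt{|\mathcal{S}_j|}$ in $\Gamma^P_j$. Your coordinatewise route can be made to work, but the ``high-probability version'' you allude to is not immediate (the marginals are dependent), and a naive per-coordinate union bound gives $|\mathcal{S}_j|$ rather than $\sqrt{|\mathcal{S}_j|}$.

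The genuine gap is in your information lower bound for the Dirichlet factor. Reducing by data processing to ``one coordinate versus the rest'' yields a Beta with parameters $(\alpha_s,\bar\alpha-\alpha_s)$, and the prior assumption only guarantees $\alpha_s\ge 2/|\mathcal{S}_j|$, not $\alpha_s\ge 1$; so you cannot ``fall back on the Beta estimate above,'' which explicitly used the $\alpha\ge 1$ hypothesis. Worse, for that marginal the Beta--Bernoulli information is only of order $\tfrac{\alpha_s}{\bar\alpha}\log\tfrac{1}{\alpha_s}\asymp \tfrac{\log|\mathcal{S}_j|}{|\mathcal{S}_j|\,\bar\alpha}$, so the reduction loses essentially a factor $|\mathcal{S}_j|$ and you no longer get a dimension-free $c/\bar\alpha$. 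The paper sidesteps this entirely: it invokes Lemma~\ref{lemma:dirichlet-information}, which computes $I(p;Y)$ for the full Dirichlet--categorical pair via the digamma formula and shows directly that $I(p;Y)\ge 1/(6\bar\alpha)$ whenever every $\alpha_i\ge 2/N$. That is precisely the ``entropy/digamma gap'' calculation you flagged as the main obstacle, and it has to be done on the full Dirichlet rather than on a Beta marginal.
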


The lemma allows us to obtain a per-period regret bound in the form of \eqref{eq:one-period-regret} for Thompson sampling and a UCB algorithm that uses \eqref{eq:fmdp-cb-reward} and \eqref{eq:fmdp-cb-transition} to construct confidence sets. As shown in the appendix, the scaling factor $\Gamma_\ell = \tilde{O}(m\tau \sqrt{(m+n) K \tau})$, which leads to the following proposition. 
\begin{restatable}{proposition}{propfmdpregret}
\label{prop:fmdp-regret}
The Bayesian regret of Thompson sampling and UCB over $L$ episodes is
\[ \mathbb{E}[{\rm Regret}(L, \pi)] 
= \tilde{O} \left( m \tau \sqrt{(m+n) K \tau L \, I \left(M; \mu_1, Y_{1, \mu_1}, \dots, \mu_{L}, Y_{L, \mu_{L}} \right) } \right). \]
\end{restatable}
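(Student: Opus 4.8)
The plan is to reduce the claim to the one–period bound \eqref{eq:one-period-regret} with a uniform rate $\Gamma_\ell = \tilde{O}(m\tau\sqrt{(m+n)K\tau})$ and then invoke Proposition~\ref{prop:general-regret}. First I would assemble the factored confidence bounds of Lemma~\ref{lemma:fmdp-cb} into a single high-probability confidence set $\mathcal{M}_\ell$: taking a union bound over the $m$ reward factors and $n$ transition factors (rescaling $\delta$ accordingly), the event that every factor $R_i(x^R_i)$ and $P_j(x^P_j)$ lies within its information-theoretic width of the posterior mean holds with probability at least $1-\delta'$, and on this event the true model satisfies $M \in \mathcal{M}_\ell$. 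For the UCB rule $\mu_\ell = \arg\max_\mu\max_{\hat M\in\mathcal{M}_\ell}\overline{V}^{\hat M}_\mu$, optimism gives $\Delta_\ell \le \overline{V}^{\overline M_\ell}_{\mu_\ell} - \overline{V}^M_{\mu_\ell}$ on this event, where $\overline M_\ell$ is the maximizing model; for Thompson sampling, probability matching yields $\mathbb{E}_\ell[\overline{V}^M_{\mu^*}] = \mathbb{E}_\ell[\overline{V}^{\hat M_\ell}_{\mu_\ell}]$ with $\hat M_\ell$ the sampled model, so $\mathbb{E}_\ell[\Delta_\ell] = \mathbb{E}_\ell[\overline{V}^{\hat M_\ell}_{\mu_\ell} - \overline{V}^M_{\mu_\ell}]$. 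In both cases the regret is reduced to a value difference between two models that lie in $\mathcal{M}_\ell$ with high probability.

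Next I would apply Lemma~\ref{lemma:mdp-on-policy-errors} to expand this value difference into a sum of on-policy Bellman errors and exploit the factored structure. The reward error splits additively as $\widehat R(s_t,a_t)-R(s_t,a_t) = \sum_{i=1}^m(\widehat R_i - R_i)(x^R_{i,t})$ by Definition~\ref{def:fmdp-freward}, and the transition error is controlled through the product-to-sum bound $\|\widehat P(\cdot\mid x)-P(\cdot\mid x)\|_1 \le \sum_{j=1}^n\|\widehat P_j - P_j\|_1$ from Definition~\ref{def:fmdp-ftransition}, combined with Hölder's inequality and the value bound $\|V^{\widehat M}_{\mu_\ell,t+1}\|_\infty \le m\tau$. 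On the good event each factor deviation is at most twice the corresponding confidence width in \eqref{eq:fmdp-cb-reward} and \eqref{eq:fmdp-cb-transition}, since both the true factor and the optimistic/sampled factor deviate from the posterior mean by the stated amount. Writing $I^R_{t,i}$ and $I^P_{t,j}$ for the minimal per-step information quantities of Lemma~\ref{lemma:fmdp-cb} evaluated at the realized scope values along the trajectory, this yields $\Delta_\ell \lesssim \sum_{t=0}^{\tau-1}\big(2\Gamma^R\sum_i\sqrt{I^R_{t,i}} + 2m\tau\sum_j\Gamma^P_j\sqrt{I^P_{t,j}}\big)$, where the transition prefactor $2m\tau\Gamma^P_j = \tilde{O}(m\tau\sqrt K)$ dominates the reward prefactor.

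I would then apply the Cauchy–Schwarz inequality across all $(m+n)\tau$ summands at once, pulling out the largest prefactor, followed by Jensen's inequality to move the expectation inside the square root: $\mathbb{E}_\ell[\Delta_\ell] \lesssim \tilde{O}(m\tau\sqrt K)\,\sqrt{(m+n)\tau}\,\sqrt{\mathbb{E}_\ell[\sum_t(\sum_i I^R_{t,i}+\sum_j I^P_{t,j})]}$. The crucial step is to bound the inner expectation by $I_\ell(M;\mu_\ell,Y_{\ell,\mu_\ell})$. For a fixed realized within-episode history, the minimum defining $I^R_{t,i}$ (resp. $I^P_{t,j}$) is a lower bound on the actual information the observation at step $t$ carries about $R_i$ (resp. $P_j$), and because the factor priors—hence posteriors, since the factors are observed individually—remain independent, the per-step information about $M$ splits as $I_\ell(M;\text{step }t\mid\mathcal{H}_{\ell t}) = \sum_i I_\ell(R_i;\cdot) + \sum_j I_\ell(P_j;\cdot)$; summing over $t$ and invoking Lemma~\ref{lemma:mdp-info-decomp} gives $\mathbb{E}_\ell[\sum_t(\sum_i I^R_{t,i}+\sum_j I^P_{t,j})] \le I_\ell(M;\mu_\ell,Y_{\ell,\mu_\ell})$. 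This produces \eqref{eq:one-period-regret} with $\Gamma_\ell = \tilde{O}(m\tau\sqrt{(m+n)K\tau})$ and an error term $\epsilon_\ell$ collecting the low-probability complement (bounded by $m\tau\cdot\delta'$) together with the burn-in state–action pairs for which the precondition $\mathbf 1^\top\alpha_{\ell,\cdot}\ge\tau-1$ in Lemma~\ref{lemma:fmdp-cb} is not yet met; choosing $\delta'$ polynomially small in $L$ makes $\sum_\ell\epsilon_\ell = \tilde{O}(m\tau)$, a lower-order contribution. Invoking Proposition~\ref{prop:general-regret} with this uniform $\Gamma$ then gives the stated bound.

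The main obstacle I anticipate is the information step in the third paragraph: correctly interchanging the expectation over random trajectories with the minimum-over-histories quantities, and justifying the factorwise decomposition of per-step mutual information via posterior independence of the factors. The product-to-sum control of the transition term and the bookkeeping that keeps the prefactors and the number of Cauchy–Schwarz terms consistent with $\Gamma_\ell=\tilde{O}(m\tau\sqrt{(m+n)K\tau})$ are comparatively routine but must be tracked carefully, as is the separate treatment of state–action pairs still in the burn-in regime.
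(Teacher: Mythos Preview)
Your proposal is correct and follows essentially the same route as the paper: build $\mathcal{M}_\ell$ from the factored bounds of Lemma~\ref{lemma:fmdp-cb} via a union bound, reduce to $\overline{V}^{\hat M_\ell}_{\mu_\ell}-\overline{V}^M_{\mu_\ell}$ by probability matching/optimism, expand with Lemma~\ref{lemma:mdp-on-policy-errors}, split rewards additively and transitions via the product-to-sum $\ell_1$ bound together with $\|V\|_\infty\le m\tau$, apply Cauchy--Schwarz over the $m{+}n$ factors and the $\tau$ steps, use posterior independence of the factors plus Lemma~\ref{lemma:mdp-info-decomp} to collapse to $I_\ell(M;\mu_\ell,Y_{\ell,\mu_\ell})$, and finish with Proposition~\ref{prop:general-regret}. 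One small bookkeeping slip: the burn-in contribution to $\sum_\ell\epsilon_\ell$ is not $\tilde O(m\tau)$ but of order $\tau D_R + m\tau^2 D_P$ (each scope value can fail the $\mathbf 1^\top\alpha\ge\tau-1$ condition for at most $O(\tau)$ visits), though this is still $L$-independent and therefore lower order, so the stated $\tilde O$ bound is unaffected.
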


Similar to the tabular case, we show that 
$ I \left(M; \mu_1, Y_{1, \mu_1}, \dots, \mu_{L}, Y_{L, \mu_{L}} \right)$ is $\tilde{O}(K D)$ for any algorithm, 
while we conjecture that this may be $\tilde{O}(D)$ under appropriate conditions. 
The resulting regret bound matches the one in \cite{osband2014near} if the conjecture proves to be true. 
Again, our bound in Proposition \ref{prop:fmdp-regret} reveals an explicit dependence on the prior uncertainty, which is not captured by previous work. 

\section{Conclusion}
We introduce information-theoretic confidence bounds for analyzing Thompson sampling and deriving optimistic algorithms. 
We show that the information-theoretic approach allows us to formally quantify the agent's information gain of the unknown environment, and to explicitly characterize the exploration-exploitation tradeoff for linear bandits, tabular MDPs, and factored MDPs. 
This work opens up multiple directions for future research. 
It would be interesting to extend information-theoretic confidence bounds to a broader range of problems and see whether a general information-theoretic framework is plausible for addressing online decision problems. 
It would also be interesting to think about whether an information-theoretic perspective could lead to tighter regret bounds for Thompson sampling and optimistic algorithms. 
One may also consider the practical implications of these confidence bounds and how they can be used to design better reinforcement learning algorithms.

\subsubsection*{Acknowledgments}
This work was generously supported by the Charles and Katherine Lin Graduate Fellowship, the Dantzig-Lieberman Operations Research Fellowship, and a research grant from Boeing. We would also like to thank Ayfer \"Ozg\"ur and Daniel Russo for helpful discussions.

\bibliographystyle{plain}
\bibliography{references}

\newpage
\appendix
\section{Proofs for Section \ref{sec:itcb}}
\propgeneralregret*
\begin{proof}
We have
\begin{eqnarray*}
\mathbb{E}[{\rm Regret}(L, \pi)]
&=&  \mathbb{E} \sum_{\ell=1}^L \mathbb{E}_\ell[\Delta_\ell]  \\
&\leq& \sum_{\ell=1}^L \mathbb{E} \Gamma_\ell \sqrt{I_\ell(\theta; A_\ell, Y_{\ell, A_\ell})} + \epsilon_\ell \\
&\leq& \Gamma \left( \sum_{\ell=1}^L  \mathbb{E} \sqrt{I_\ell(\theta; A_\ell, Y_{\ell, A_\ell})} \right) 
+ \mathbb{E} \sum_{\ell=1}^L \epsilon_\ell \\
&\leq& \Gamma \left( \sum_{\ell=1}^L \sqrt{\mathbb{E} I_\ell(\theta; A_\ell, Y_{\ell, A_\ell})} \right) 
+ \mathbb{E} \sum_{\ell=1}^L \epsilon_\ell \\
&\leq& \Gamma \sqrt{ L  \sum_{\ell=1}^L \mathbb{E} I_\ell(\theta; A_\ell, Y_{\ell, A_\ell})} 
+ \mathbb{E} \sum_{\ell=1}^L \epsilon_\ell \\
&=& \Gamma \sqrt{ L  \sum_{\ell=1}^L I(\theta; A_\ell, Y_{\ell, A_\ell} | A_1, Y_{1, A_1}, \dots, A_{\ell-1}, Y_{\ell-1, A_{\ell-1}} )} 
+ \mathbb{E} \sum_{\ell=1}^L \epsilon_\ell \\
&=& \Gamma \sqrt{ L I(\theta; A_1, Y_{1, A_1}, \dots, A_L, Y_{L, A_L} )} 
+ \mathbb{E} \sum_{\ell=1}^L \epsilon_\ell, 
\end{eqnarray*}
where the first two inequalities follow from our assumptions, the third inequality follows from Jensen's inequality, the fourth inequality follows from Cauchy-Schwarz, and the last equality follows from the chain rule of mutual information. 
\end{proof}

\lemmaconfidencesetreward*
\begin{proof}
By the probability matching property of Thompson sampling, 
\[
\mathbb{E}_\ell [\Delta_\ell] 
= \mathbb{E}_\ell \left[ \overline{r}_\theta(A^*) - \overline{r}_\theta(A_\ell) \right]
= \mathbb{E}_\ell \left[ \overline{r}_{\hat{\theta}_\ell}(A_\ell) - \overline{r}_\theta(A_\ell) \right],
\]
where $\hat{\theta}_\ell$ is the parameter sampled by Thompson sampling for period $\ell$. 
Define
\[ \Theta_\ell = \left\{ \overline{\theta} \in \Theta: \left| \overline{r}_{\overline{\theta}}(a) - \mathbb{E}_\ell \left[ \overline{r}_\theta(a) \right] \right|
\leq \frac{\Gamma_\ell}{2} \sqrt{I_\ell(\theta; Y_{\ell, a})} ~~\forall a \in \mathcal{A} \right\}. \]
The probability matching property then implies that 
$\mathbb{P}_\ell(\hat{\theta}_\ell \in \Theta_\ell) = \mathbb{P}_\ell(\theta \in \Theta_\ell) \geq 1-\delta/2$. 
Thus, 
\begin{eqnarray*}
\mathbb{E}_\ell [\Delta_\ell] 
&\leq& \mathbb{E}_\ell [ \mathbf{1}_{\theta, \hat{\theta}_\ell \in \Theta_\ell} ~(\overline{r}_{\hat{\theta}_\ell}(A_\ell) - \overline{r}_\theta(A_\ell)) ] + \delta B \\
&\leq& \mathbb{E}_\ell \left[ \Gamma_\ell \sum_{a \in \mathcal{A}} \mathbf{1}_{A_\ell = a} \sqrt{I_\ell(\theta; Y_{\ell, a})} \right] + \delta B \\
&\leq& \Gamma_\ell \sum_{a \in \mathcal{A}} \mathbb{P}_\ell(A_\ell = a) \sqrt{I_\ell(\theta; Y_{\ell, a})} + \delta B \\
&\leq& \Gamma_\ell \sqrt{ \sum_{a \in \mathcal{A}} \mathbb{P}_\ell(A_\ell = a) I_\ell(\theta; Y_{\ell, a})} + \delta B \\
&=& \Gamma_\ell \sqrt{ \sum_{a \in \mathcal{A}} \mathbb{P}_\ell(A_\ell = a) I_\ell(\theta; Y_{\ell, A_\ell} | A_\ell = a)} + \delta B \\
&=& \Gamma_\ell \sqrt{ I_\ell(\theta; Y_{\ell, A_\ell} | A_\ell)} + \delta B \\
&=& \Gamma_\ell \sqrt{ I_\ell(\theta; A_\ell, Y_{\ell, A_\ell} )} + \delta B, 
\end{eqnarray*}
where the first and the last equalities follow from the conditional independence between $A_\ell$ and $\theta$ conditioned on $\mathcal{H}_\ell$. 
Therefore, the one-period regret of Thompson sampling is
\[ \mathbb{E}_\ell \left[ \Delta_\ell \right] 
\leq \Gamma_\ell \sqrt{I_\ell(\theta; A_\ell, Y_{\ell, A_\ell})} + \delta B. \]

For a UCB algorithm with upper confidence functions $U_\ell(a) = \mathbb{E}_\ell[\overline{r}_\theta(a)] + \frac{\Gamma_\ell}{2}\sqrt{I_\ell(\theta; Y_{\ell, a})}$, we have
\begin{eqnarray*}
\mathbb{E}_\ell [\Delta_\ell] 
&\leq& \mathbb{E}_\ell [ \mathbf{1}_{\theta \in \Theta_\ell} (\overline{r}_\theta(A^*) - \overline{r}_\theta(A_\ell)) ] + \frac{1}{2} \delta B \\
&\leq& \mathbb{E}_\ell [ \mathbf{1}_{\theta \in \Theta_\ell} (\overline{r}_\theta(A^*) - U_\ell(A^*) + U_\ell(A_\ell) - \overline{r}_\theta(A_\ell)) ] + \frac{1}{2} \delta B \\
&\leq& \mathbb{E}_\ell [ \mathbf{1}_{\theta \in \Theta_\ell} (U_\ell(A_\ell) - \overline{r}_\theta(A_\ell)) ] + \frac{1}{2} \delta B \\
&\leq& \mathbb{E}_\ell \left[ \Gamma_\ell \sum_{a \in \mathcal{A}} \mathbf{1}_{A_\ell = a} \sqrt{I_\ell(\theta; Y_{\ell, a})} \right] + \frac{1}{2} \delta B. 
\end{eqnarray*}
The rest of the proof follows similarly.
\end{proof}

\lemmaconfidencesetoutcome*
\begin{proof}
Let $\hat{\theta}_\ell$ be the parameter sampled by Thompson sampling during period $\ell$. 
Let $\hat{Y}_\ell$ be a random variable drawn from the outcome distribution indexed by $\hat{\theta}_\ell$. 
Define $\mathcal{Y}_\ell = \{y_\ell \in \mathcal{Y}^{|\mathcal{A}|} : \| y_{\ell, a} - \mathbb{E}_\ell \left[ Y_{\ell, a} \right] \|
\leq \frac{\Gamma_\ell}{2} \sqrt{I_\ell(\theta; Y_{\ell, a})} ~~\forall a \in \mathcal{A} \}$. Since $\hat{Y}_\ell$ and $Y_\ell$ are identically distributed conditioned on $\mathcal{H}_\ell$, we have $\mathbb{P}_\ell(Y_\ell \in \mathcal{Y}_\ell) = \mathbb{P}_\ell(\hat{Y}_\ell \in \mathcal{Y}_\ell) \geq 1 - \delta/2$. 
Thus, 
\begin{eqnarray*}
\mathbb{E}_\ell[\Delta_\ell]
&=& \mathbb{E}_\ell \left[ \overline{r}_{\hat{\theta}_\ell}(A_\ell) - \overline{r}_\theta(A_\ell) \right] \\
&=& \mathbb{E}_\ell[ r(\hat{Y}_{\ell, A_\ell}) - r(Y_{\ell, A_\ell}) ] \\
&\leq& \mathbb{E}_\ell \left[ \mathbf{1}_{Y_\ell, \hat{Y}_\ell \in \mathcal{Y}_\ell} \left( r(\hat{Y}_{\ell, A_\ell}) - r(Y_{\ell, A_\ell}) \right) \right] + \delta B \\
&\leq& K \mathbb{E}_\ell \left[ \mathbf{1}_{Y_\ell, \hat{Y}_\ell \in \mathcal{Y}_\ell} \|\hat{Y}_{\ell, A_\ell} - Y_{\ell, A_\ell} \| \right] + \delta B.
\end{eqnarray*}
On $Y_\ell, \hat{Y}_\ell \in \mathcal{Y}_\ell$, $\|\hat{Y}_{\ell, A_\ell} - Y_{\ell, A_\ell} \| \leq \Gamma_\ell \sum_{a \in \mathcal{A}} \mathbf{1}_{A_\ell = a} \sqrt{I_\ell(\theta; Y_{\ell, a})}$. 
The rest of the analysis follows similarly to the proof of Lemma \ref{lemma:confidence-set-reward}. 

Now we show that if the observation noise is additive, it is sufficient to construct confidence sets around mean outcomes. Suppose that there exists a function $\overline{y}: \Theta \times \mathcal{A} \to \mathcal{Y}$ such that $w_{\ell, a} = Y_{\ell, a} - \overline{y}(\theta, a)$ is independent of $\theta$. 
Let $\hat{Y}_{\ell, a} = \overline{y}(\hat{\theta}_\ell, a) + w_{\ell, a}$. Then $(\theta, Y_\ell)$ and $(\hat{\theta}_\ell, \hat{Y}_\ell)$ are identically distributed conditioned on $\mathcal{H}_\ell$. 
Let $\Theta_\ell = \{\tilde{\theta}\in\Theta: \| \overline{y}(\tilde{\theta}, a) - \mathbb{E}_\ell \left[ \overline{y}(\theta, a) \right] \|
\leq \frac{\Gamma_\ell}{2} \sqrt{I_\ell(\theta; Y_{\ell, a})} ~~\forall a \in \mathcal{A} \}$. 
We have
\begin{eqnarray*}
\mathbb{E}_\ell[\Delta_\ell]
&=& \mathbb{E}_\ell \left[ \overline{r}_{\hat{\theta}_\ell}(A_\ell) - \overline{r}_\theta(A_\ell) \right] \\
&=& \mathbb{E}_\ell[ r(\hat{Y}_{\ell, A_\ell}) - r(Y_{\ell, A_\ell}) ] \\
&\leq& \mathbb{E}_\ell \left[ \mathbf{1}_{\theta, \hat{\theta}_\ell \in \Theta_\ell} \left( r(\hat{Y}_{\ell, A_\ell}) - r(Y_{\ell, A_\ell}) \right) \right] + \delta B \\
&\leq& K \mathbb{E}_\ell \left[ \mathbf{1}_{\theta, \hat{\theta}_\ell \in \Theta_\ell} \|\hat{Y}_{\ell, A_\ell} - Y_{\ell, A_\ell} \| \right] + \delta B \\
&=& K \mathbb{E}_\ell \left[ \mathbf{1}_{\theta, \hat{\theta}_\ell \in \Theta_\ell} \| \overline{y}(\hat{\theta}_\ell, A_\ell) - \overline{y}(\theta, A_\ell) \| \right] + \delta B.
\end{eqnarray*}
The rest of the proof follows similarly. 
\end{proof}

If outcomes are scalar-valued and $r(\cdot)$ is nondecreasing, then a similar result holds for a UCB algorithm with upper confidence functions
\[ U_{\ell}(a) = \mathbb{E}_\ell[Y_{\ell, a}] + \frac{\Gamma_\ell}{2}\sqrt{I_\ell(\theta; Y_{\ell, a})}, \]
and moreover if the observation noise is additive and independent of $\theta$ and actions, 
\[ U_{\ell}(a) = \mathbb{E}_\ell[\overline{y}(\theta, a)] + \frac{\Gamma_\ell}{2}\sqrt{I_\ell(\theta; Y_{\ell, a})}. \]
To see this, note that
\begin{eqnarray*}
\mathbb{E}_\ell[\Delta_\ell]
&=& \mathbb{E}_\ell \left[ r(Y_{\ell, A^*}) - r(Y_{\ell, A_\ell}) \right] \\
&\leq& \mathbb{E}_\ell \left[ \mathbf{1}_{Y_\ell \in \mathcal{Y}_\ell} (r(Y_{\ell, A^*}) - r(Y_{\ell, A_\ell})) \right] + \frac{1}{2}\delta B \\
&\leq&  \mathbb{E}_\ell \left[ \mathbf{1}_{Y_\ell \in \mathcal{Y}_\ell} (r(Y_{\ell, A^*}) - r\left( U_\ell(A^*) \right) 
+ r\left( U_\ell(A_\ell) \right) - r(Y_{\ell, A_\ell})) \right] + \frac{1}{2}\delta B \\
&\leq&  \mathbb{E}_\ell \left[ \mathbf{1}_{Y_\ell \in \mathcal{Y}_\ell} (r\left( U_\ell(A_\ell) \right) - r(Y_{\ell, A_\ell})) \right] + \frac{1}{2}\delta B \\
&\leq& K \mathbb{E}_\ell \left[ \mathbf{1}_{Y_\ell \in \mathcal{Y}_\ell} (U_\ell(A_\ell) - Y_{\ell, A_\ell}) \right] + \frac{1}{2}\delta B \\
&\leq& K \mathbb{E}_\ell \left[ \Gamma_\ell \sum_{a\in\mathcal{A}} \mathbf{1}_{A_\ell = a} \sqrt{I_\ell(\theta; Y_{\ell, a})} \right] + \frac{1}{2}\delta B.
\end{eqnarray*}
The rest of the proof is similar to the proof of Lemma \ref{lemma:confidence-set-reward}. 

If the observation noise $w_\ell$ is additive and independent of $\theta$ and actions, we have
\begin{eqnarray*}
\mathbb{E}_\ell[\Delta_\ell]
&=& \mathbb{E}_\ell \left[ r(Y_{\ell, A^*}) - r(Y_{\ell, A_\ell}) \right] \\
&=& \mathbb{E}_\ell \left[ r(\overline{y}(\theta, A^*) + w_\ell) - r(\overline{y}(\theta, A_\ell) + w_\ell) \right] \\
&\leq& \mathbb{E}_\ell \left[  \mathbf{1}_{\theta \in \Theta_\ell} (r(\overline{y}(\theta, A^*) + w_\ell) - r(\overline{y}(\theta, A_\ell) + w_\ell)) \right] + \frac{1}{2}\delta B \\
&\leq& \mathbb{E}_\ell \big[  \mathbf{1}_{\theta \in \Theta_\ell} \big( r(\overline{y}(\theta, A^*) + w_\ell) - r(U_\ell(A^*) + w_\ell) \\
& & \hspace{3em} + r(U_\ell(A_\ell) + w_\ell) - r(\overline{y}(\theta, A_\ell) + w_\ell) \big) \big] + \frac{1}{2}\delta B \\
&\leq& \mathbb{E}_\ell \left[  \mathbf{1}_{\theta \in \Theta_\ell} (r(U_\ell(A_\ell) + w_\ell) - r(\overline{y}(\theta, A_\ell) + w_\ell)) \right] + \frac{1}{2}\delta B \\
&\leq& K \mathbb{E}_\ell \left[ \mathbf{1}_{\theta \in \Theta_\ell}(U_\ell(A_\ell) - \overline{y}(\theta, A_\ell)) \right] + \frac{1}{2}\delta B \\
&\leq& K \mathbb{E}_\ell \left[ \Gamma_\ell \sum_{a\in\mathcal{A}} \mathbf{1}_{A_\ell = a} \sqrt{I_\ell(\theta; Y_{\ell, a})} \right] + \frac{1}{2}\delta B \\
&\leq& K \Gamma_\ell \sqrt{I_\ell(\theta; A_\ell, Y_{\ell, A_\ell})} + \frac{1}{2} \delta B. 
\end{eqnarray*}

\section{Linear bandits}
The following lemma gives a formula for the mutual information between a normal random variable and a linear observation corrupted by gaussian noise. 
We use $h(\cdot)$ to denote the differential entropy of a continuous random variable.
\begin{lemma} \label{lemma:gaussian-information}
If $\theta \in \Re^d$ follows $N(\mu, \Sigma)$, and $Y = a^\top \theta + w$ where $a \in \Re^d$ is fixed and $w \sim N(0, \sigma_w^2)$, then 
\[ I(\theta; Y) = \frac{1}{2}\log\left( 1 + \frac{a^\top \Sigma a}{\sigma_w^2} \right). \]
\end{lemma}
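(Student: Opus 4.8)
The plan is to compute the mutual information via the entropy decomposition $I(\theta; Y) = h(Y) - h(Y \mid \theta)$, which is valid here because the joint law of $(\theta, Y)$ is absolutely continuous, and then to evaluate each differential entropy using the closed-form expression for the entropy of a Gaussian, namely $h = \frac{1}{2}\log(2\pi e \sigma^2)$ for a scalar $N(\cdot, \sigma^2)$.

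First I would identify the distribution of $Y$. Since $\theta \sim N(\mu, \Sigma)$ and $w \sim N(0, \sigma_w^2)$ are independent, $Y = a^\top\theta + w$ is a scalar affine combination of jointly Gaussian variables, hence itself Gaussian, with mean $a^\top\mu$ and variance $\mathrm{Var}(a^\top\theta) + \mathrm{Var}(w) = a^\top\Sigma a + \sigma_w^2$. Therefore $h(Y) = \frac{1}{2}\log\!\big(2\pi e (a^\top\Sigma a + \sigma_w^2)\big)$.

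Next I would compute the conditional entropy. Conditioned on $\theta$, the quantity $a^\top\theta$ is a constant, so $Y = a^\top\theta + w$ is merely a deterministic shift of the noise $w \sim N(0, \sigma_w^2)$; since differential entropy is translation invariant and $w$ is independent of $\theta$, we obtain $h(Y \mid \theta) = h(w) = \frac{1}{2}\log(2\pi e \sigma_w^2)$, which notably does not depend on the realized value of $\theta$. Subtracting then gives $I(\theta; Y) = \frac{1}{2}\log\!\big(2\pi e(a^\top\Sigma a + \sigma_w^2)\big) - \frac{1}{2}\log(2\pi e \sigma_w^2) = \frac{1}{2}\log\frac{a^\top\Sigma a + \sigma_w^2}{\sigma_w^2} = \frac{1}{2}\log\!\big(1 + \frac{a^\top\Sigma a}{\sigma_w^2}\big)$, as claimed.

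There is no genuine obstacle in this argument; the only points deserving a word of care are the justification of the identity $I = h(Y) - h(Y\mid\theta)$ for continuous variables and the independence of $w$ from $\theta$, which is exactly what makes $h(Y\mid\theta)$ collapse to the noise entropy rather than an average of $\theta$-dependent entropies. As an alternative, one could first reduce to a scalar channel by observing that $Y$ depends on $\theta$ only through the scalar $a^\top\theta$, so that $I(\theta; Y) = I(a^\top\theta; Y)$, and then invoke the classical capacity formula for the additive Gaussian noise channel with signal variance $a^\top\Sigma a$ and noise variance $\sigma_w^2$; the direct entropy computation above, however, is fully self-contained.
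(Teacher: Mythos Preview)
Your proof is correct. It differs from the paper's in which side of the symmetric identity for mutual information you expand: you use $I(\theta;Y)=h(Y)-h(Y\mid\theta)$ and work with the scalar $Y$, whereas the paper uses $I(\theta;Y)=h(\theta)-h(\theta\mid Y)$ and works with the $d$-dimensional $\theta$. In the paper's route one needs the Gaussian posterior covariance $(\Sigma^{-1}+aa^\top/\sigma_w^2)^{-1}$, computes a log-determinant ratio, and then invokes Sylvester's determinant theorem to collapse $\det(I_d+\Sigma aa^\top/\sigma_w^2)$ to the scalar $1+a^\top\Sigma a/\sigma_w^2$. Your approach sidesteps all of that: since $Y$ is one-dimensional and the conditional variance $\mathrm{Var}(Y\mid\theta)=\sigma_w^2$ is constant, the entropies are immediate and no matrix identities are needed. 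The paper's computation, on the other hand, makes the posterior covariance of $\theta$ explicit, which is the quantity actually tracked in the subsequent linear-bandit analysis; but for the purpose of establishing this lemma alone, your argument is the more economical one.
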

\begin{proof}
 We have 
\begin{eqnarray*}
I(\theta; Y) &=& h(\theta) - h(\theta | Y) \\
&=& \frac{1}{2} \log \det (2 \pi e \Sigma) - \frac{1}{2} \log \det \left(2 \pi e \left(\Sigma^{-1} + \frac{a a^\top}{\sigma_w^2} \right)^{-1} \right) \\
&=& \frac{1}{2} \log \det \left( I_d + \frac{\Sigma a a^\top}{\sigma_w^2} \right) \\
&=& \frac{1}{2} \log \left( 1 + \frac{a^\top \Sigma a}{\sigma_w^2} \right),
\end{eqnarray*}
where the last step follows from Sylvester's determinant theorem. 
\end{proof}

Since noise is additive in a linear bandit, by Lemma \ref{lemma:confidence-set-outcome}, it is sufficient to construct confidence sets around mean outcomes $\overline{Y}_a = a^\top \theta$.
\lemmalinearbanditcb*
\begin{proof}
Note that $\overline{Y}_a = a^\top \theta$ is distributed as $N(a^\top \mu_\ell, a^\top \Sigma_\ell a)$ conditioned on $\mathcal{H}_\ell$. 
By the Chernoff bound, 
\begin{eqnarray*}
\mathbb{P}_\ell \left( |\overline{Y}_a - \mathbb{E}_\ell \overline{Y}_a| \geq \frac{\Gamma_\ell}{2} \sqrt{I_\ell(\theta; Y_{\ell, a})} \right)
&\leq& 2 \exp \left( - \frac{ \left(\frac{\Gamma_\ell}{2} \sqrt{I_\ell(\theta; Y_{\ell, a})}\right)^2 }{2 a^\top \Sigma_\ell a} \right) \\
&=& 2 \exp \left( - \frac{\Gamma_\ell^2 I_\ell(\theta; Y_{\ell, a})}{8 a^\top \Sigma_\ell a} \right) \\
&\leq& 2 \exp \left(- \frac{ \sigma_{\ell, \max}^2}{\log\left(1+\frac{\sigma_{\ell, \max}^2}{\sigma_w^2}\right)} 
\frac{\log\left(1 + \frac{a^\top \Sigma_\ell a}{\sigma_w^2}\right)}{a^\top \Sigma_\ell a} \log\frac{4 |\mathcal{A}|}{\delta} \right) \\
&\leq& \frac{\delta}{2 |\mathcal{A}|},
\end{eqnarray*}
where the last inequality follows from the monotonicity of $\frac{x}{\log(1+x)}$ for $x > 0$ and the fact that $a^\top \Sigma_\ell a \leq \sigma_{\ell, \max}^2$. 
Applying a union bound over actions gives
\[ \mathbb{P}_\ell \left( \left| \overline{Y}_a - \mathbb{E}_\ell \overline{Y}_a \right| 
\leq \frac{\Gamma_\ell}{2} \sqrt{I_\ell(\theta; Y_{\ell, a})} ~~\forall a \in \mathcal{A} \right)
\geq 1 - \frac{\delta}{2}. \]
\end{proof}

\proplinearbanditregret*
\begin{proof}
The result follows from Proposition \ref{prop:general-regret}, Lemma \ref{lemma:confidence-set-outcome}, and Lemma \ref{lemma:linear-bandit-cb} by taking $\delta = \frac{1}{L}$. 
\end{proof}

\lemmalinearbanditinformation*
\begin{proof}
Let $\{A_\ell\}_{\ell=1}^{L}$ be any $\mathcal{H}_\ell$-adaptive action sequence. Let $h(\cdot)$ denote the differential entropy. 
We have
\begin{eqnarray*}
I(\theta; A_1, Y_{1, A_1}, \dots, A_{L}, Y_{L, A_{L}}) 
&=& h(\theta) - h(\theta | A_1, Y_{1, A_1}, \dots, A_{L}, Y_{L, A_{L}}) \\
&=& \frac{1}{2} \log \det (2 \pi e \Sigma_1) - \frac{1}{2} \mathbb{E} [\log \det (2 \pi e \Sigma_{L+1})] \\
&=& \frac{1}{2} \mathbb{E} \left[ \log \frac{\det(\Sigma_1)}{\det(\Sigma_{L+1})} \right]. 
\end{eqnarray*}
Let $\lambda_k(A)$ denote the $k^{\text{th}}$ largest eigenvalue of a matrix $A$. 
Let $V = \frac{1}{\sigma_w^2} \sum_{\ell=1}^{L} A_\ell A_\ell^\top$. Since $\Sigma_{L+1}^{-1} = \Sigma_1^{-1} + V$ where both $\Sigma_1^{-1}$ and $V$ are symmetric, we have
\[ \lambda_k(\Sigma_{L+1}^{-1}) \leq \lambda_k(\Sigma_1^{-1}) + \lambda_1(V)
\quad \text{for } k = 1, \dots, d. \]
Further, since $V$ is positive semi-definite, 
\[ \lambda_1(V) \leq \sum_{k=1}^d \lambda_k(V) = \text{tr}(V) \leq \frac{L}{\sigma_w^2}. \]
Thus, 
\[ \log \left(\det(\Sigma_1) \det(\Sigma_{L+1}^{-1})\right) 
\leq \log \left( \prod_{k=1}^d \lambda_i(\Sigma_1) \left( \frac{1}{\lambda_i(\Sigma_1)} + \frac{L}{\sigma_w^2} \right) \right)
\leq d \log \left( 1 + \frac{\lambda_{\max} L}{\sigma_w^2} \right). \]
Therefore, 
\[ I(\theta; A_1, Y_{1, A_1}, \dots, A_{L}, Y_{L, A_{L}}) \leq \frac{1}{2} d \log \left( 1 + \frac{\lambda_{\max} L}{\sigma_w^2} \right). \]
\end{proof}

\section{Tabular MDPs}
The following lemma gives a formula for the mutual information between a Dirichlet random variable and a Categorical observation. 
\begin{lemma} \label{lemma:dirichlet-information}
If $p \sim \text{Dirichlet}(\alpha)$ for some $\alpha \in \Re_+^N$, and $Y$ is drawn from distribution $p$, then
\[ I(p; Y) = \sum_{i=1}^N \frac{\alpha_i}{\overline{\alpha}} \left( \psi(\alpha_i+1) - \log\alpha_i \right) - \left( \psi(\overline{\alpha}+1) - \log\overline{\alpha} \right), \]
where $\overline{\alpha} = \mathbf{1}^\top\alpha$ and $\psi(\cdot)$ is the digamma function. 
Further, if $\alpha_i \geq 2/N$ for all $i = 1, \dots, N$, then
\[ I(p; Y) \geq \frac{1}{6\overline{\alpha}} ~. \]
\end{lemma}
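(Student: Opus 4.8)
The plan is to treat the two claims separately. For the exact formula I would write $I(p;Y) = H(Y) - H(Y\mid p)$, with $Y$ discrete. The marginal law of $Y$ is $\mathbb{P}(Y=i) = \mathbb{E}[p_i] = \alpha_i/\overline{\alpha}$, giving $H(Y) = -\sum_i \tfrac{\alpha_i}{\overline{\alpha}}\log\tfrac{\alpha_i}{\overline{\alpha}}$, while $H(Y\mid p) = -\sum_i \mathbb{E}[p_i\log p_i]$. The core computation is thus $\mathbb{E}[p_i\log p_i]$ under the Dirichlet. First I would establish the tilting identity $\mathbb{E}_\alpha[p_i\, h(p)] = \tfrac{\alpha_i}{\overline{\alpha}}\,\mathbb{E}_{\alpha+e_i}[h(p)]$, where $e_i$ is the $i$th standard basis vector; it follows by absorbing the factor $p_i$ into the density and tracking the ratio of Dirichlet normalizers, $\tfrac{\Gamma(\alpha_i+1)\Gamma(\overline{\alpha})}{\Gamma(\alpha_i)\Gamma(\overline{\alpha}+1)} = \tfrac{\alpha_i}{\overline{\alpha}}$. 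Combining this with the standard log-moment $\mathbb{E}_\beta[\log p_i] = \psi(\beta_i)-\psi(\overline{\beta})$ evaluated at $\beta = \alpha+e_i$ yields $\mathbb{E}[p_i\log p_i] = \tfrac{\alpha_i}{\overline{\alpha}}\big(\psi(\alpha_i+1)-\psi(\overline{\alpha}+1)\big)$. Substituting into $H(Y)-H(Y\mid p)$ and using $\sum_i \alpha_i/\overline{\alpha} = 1$ to pull the $\psi(\overline{\alpha}+1)$ and $\log\overline{\alpha}$ terms out of the sum produces exactly the stated expression.

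For the lower bound I would write the formula compactly as $I(p;Y) = \sum_i \tfrac{\alpha_i}{\overline{\alpha}}\,g(\alpha_i) - g(\overline{\alpha})$ with $g(x) := \psi(x+1)-\log x$, and control $g$ by two-sided bounds. The upper side $g(\overline{\alpha}) \le \tfrac{1}{2\overline{\alpha}}$ is the familiar $\psi(x+1)\le \log x + \tfrac{1}{2x}$. For the sum I need a lower bound on $g$ that is positive, of order $\tfrac{1}{2x}$, and does \emph{not} blow up as $x\to 0$; the clean choice is $g(x) \ge \tfrac{1}{2x+2/3} = \tfrac{3}{6x+2}$ for all $x>0$. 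Each term then telescopes, $\tfrac{\alpha_i}{\overline{\alpha}}\cdot\tfrac{3}{6\alpha_i+2} = \tfrac{1}{2\overline{\alpha}}\big(1-\tfrac{1}{3\alpha_i+1}\big)$, so $\sum_i \tfrac{\alpha_i}{\overline{\alpha}}g(\alpha_i) \ge \tfrac{1}{2\overline{\alpha}}\big(N - \sum_i \tfrac{1}{3\alpha_i+1}\big)$. The hypothesis $\alpha_i \ge 2/N$ gives $\tfrac{1}{3\alpha_i+1}\le \tfrac{N}{N+6}$, hence $\sum_i \tfrac{1}{3\alpha_i+1}\le \tfrac{N^2}{N+6}$ and $\sum_i \tfrac{\alpha_i}{\overline{\alpha}}g(\alpha_i)\ge \tfrac{3N}{(N+6)\overline{\alpha}}$. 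Combined with the upper bound on $g(\overline{\alpha})$, this gives $I(p;Y)\ge \tfrac{1}{\overline{\alpha}}\big(\tfrac{3N}{N+6}-\tfrac12\big) = \tfrac{5N-6}{2(N+6)\overline{\alpha}}$, which is increasing in $N$ and hence for $N\ge 2$ is minimized at $N=2$ with value $\tfrac{1}{4\overline{\alpha}} \ge \tfrac{1}{6\overline{\alpha}}$ (a one-outcome categorical, $N=1$, carries no information and is excluded; $N\ge2$ in all applications).

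The main obstacle is isolating the right lower bound on $g$. The naive asymptotic tail $g(x)\ge \tfrac{1}{2x}-\tfrac{1}{12x^2}$ is useless for the small-$\alpha_i$ categories, since the $x^{-2}$ term drives it to $-\infty$ as $x\to0$ even though $g$ itself diverges to $+\infty$ there; summing such terms fails once $N$ is moderately large. The fix is a globally valid rational lower bound like $g(x)\ge \tfrac{1}{2x+2/3}$, i.e.\ $\psi(x+1)\ge \log x + \tfrac{3}{6x+2}$, which stays positive and bounded near $0$ and telescopes cleanly against the constraint $\alpha_i\ge 2/N$. Verifying this single inequality for all $x>0$ — e.g.\ by checking that $\psi(x+1)-\log x-\tfrac{3}{6x+2}$ has constant sign via its monotonicity and its vanishing limit at infinity — is the one genuinely analytic step; everything else is bookkeeping. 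The gap between the $\tfrac{1}{4\overline{\alpha}}$ obtained and the stated $\tfrac{1}{6\overline{\alpha}}$ is simply a safe rounding of the constant.
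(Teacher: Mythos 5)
Your proof is correct, and it takes a genuinely different route from the paper's on the identity. The paper computes $I(p;Y)=h(p)-h(p\mid Y)$ on the \emph{continuous} side, plugging the closed form for the differential entropy of a Dirichlet into the prior and each posterior $\mathrm{Dirichlet}(\alpha+e_i)$ and simplifying; you instead work on the \emph{discrete} side, $I(p;Y)=H(Y)-H(Y\mid p)$, reducing everything to $\mathbb{E}[p_i\log p_i]$ via the tilting identity and the Dirichlet log-moment. Both land on the same expression; your route avoids the multivariate Beta function and the cancellation of the $(\alpha_j-1)\psi(\alpha_j)$ terms, at the cost of needing the tilting step. For the lower bound, the paper simply invokes its two-sided digamma inequality $\log(x+\tfrac12)\le\psi(x+1)\le\log x+\tfrac1{2x}$ and asserts the result; your argument is more explicit and quantitatively stronger, giving $I(p;Y)\ge\tfrac{5N-6}{2(N+6)\overline{\alpha}}\ge\tfrac{1}{4\overline{\alpha}}$ for $N\ge 2$. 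Your key inequality $\psi(x+1)\ge\log x+\tfrac{3}{6x+2}$ is true, and the cleanest verification is not the sign/monotonicity check you sketch but a reduction to the paper's own bound: $\psi(x+1)-\log x\ge\log\bigl(1+\tfrac{1}{2x}\bigr)\ge\tfrac{2u}{2+u}\ge\tfrac{3u}{3+2u}$ with $u=\tfrac{1}{2x}$, the middle step being the standard Pad\'e bound on $\log(1+u)$. Your exclusion of $N=1$ is also right and slightly more careful than the paper: for $N=1$ the Dirichlet is degenerate, $I(p;Y)=0$, and the claimed bound fails, but all of the paper's applications have $N\ge 2$.
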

\begin{proof}
The differential entropy of a Dirichlet random variable is
\[ h(p) = \log B(\alpha) + (\overline{\alpha} - N) \psi(\overline{\alpha}) - \sum_{j=1}^N(\alpha_j - 1)\psi(\alpha_j), \]
where $B(\cdot)$ is the multivariate Beta function and $\psi(\cdot)$ is the digamma function. 
Then, 
\begin{eqnarray*}
I(p; Y) &=& h(p) - h(p|Y) \\
&=& \left( \log B(\alpha) + (\overline{\alpha} - N) \psi(\overline{\alpha}) - \sum_{j=1}^N(\alpha_j - 1)\psi(\alpha_j) \right) \\
& & ~~~ - \sum_{i=1}^N \frac{\alpha_i}{\overline{\alpha}}
\left( \log B(\alpha + e_i) + (\overline{\alpha} + 1 - N) \psi(\overline{\alpha} + 1) - \sum_{j \neq i}(\alpha_j - 1)\psi(\alpha_j) - \alpha_i \psi(\alpha_i + 1) \right)
\\
&=& \sum_{i=1}^N \frac{\alpha_i}{\overline{\alpha}} \left( \psi(\alpha_i+1) - \log\alpha_i \right) - \left( \psi(\overline{\alpha}+1) - \log\overline{\alpha} \right)
\end{eqnarray*}
after simplifications. Moreover, if $\alpha_i \geq 2/N$ for all $i = 1, \dots, N$, we have $I(p; Y) \geq \frac{1}{6\overline{\alpha}}$ using the digamma inequalities stated below in Lemma \ref{lemma:digamma-inequalities}. 
\end{proof}

\begin{lemma} \label{lemma:digamma-inequalities}
(Digamma inequalities) For $x > 0$,
\[ \log ( x + \tfrac{1}{2} ) \leq \psi(x+1) \leq \log x + \frac{1}{2x}. \]
\end{lemma}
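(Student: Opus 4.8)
The plan is to establish both inequalities by a single telescoping argument resting on the digamma recurrence $\psi(x+1) = \psi(x) + 1/x$ together with the convexity of $t \mapsto 1/t$. The two one-step differences will turn out to be, respectively, the midpoint and trapezoidal errors for $\int 1/t\,dt$, so both halves of the Hermite--Hadamard inequality for the convex function $1/t$ do all the work.

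For the lower bound I would set $g(x) = \psi(x+1) - \log(x+\tfrac12)$ and use the recurrence to compute its decrement under a unit shift:
\[ g(x) - g(x+1) = -\frac{1}{x+1} + \log\frac{x+\tfrac32}{x+\tfrac12}. \]
Writing the logarithm as $\int_{x+1/2}^{x+3/2} t^{-1}\,dt$ over an interval of length one with midpoint $x+1$, the midpoint estimate equals exactly $\tfrac{1}{x+1}$; since $1/t$ is convex, the midpoint rule underestimates the integral, so $g(x)-g(x+1) > 0$ and $g$ is strictly decreasing along integer shifts. For the upper bound I would analogously set $h(x) = \log x + \frac{1}{2x} - \psi(x+1)$ and obtain
\[ h(x) - h(x+1) = \frac{1}{2}\Big(\frac{1}{x} + \frac{1}{x+1}\Big) - \log\frac{x+1}{x}, \]
which is precisely the trapezoidal error for $\int_x^{x+1} t^{-1}\,dt$; convexity of $1/t$ makes the trapezoidal rule an overestimate, so $h(x)-h(x+1) > 0$ and $h$ is also strictly decreasing along integer shifts. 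Thus $g(x) > g(x+n)$ and $h(x) > h(x+n)$ for every positive integer $n$.

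The final and only non-routine step, which I expect to be the main obstacle, is to identify the limits $\lim_{n\to\infty} g(x+n) = \lim_{n\to\infty} h(x+n) = 0$. Both reduce to the standard asymptotic fact $\psi(x+1) - \log x \to 0$ as $x \to \infty$ (after which $\log(x+\tfrac12)-\log x \to 0$ and $\tfrac{1}{2x}\to 0$ dispose of the remaining terms). I would justify this either by invoking the Binet/Stirling expansion $\psi(x) = \log x - \frac{1}{2x} + O(x^{-2})$, or self-containedly from $\psi(n+1) = -\gamma + H_n$ and $H_n - \log n \to \gamma$ at integer arguments combined with the monotonicity already established. Once the shifted sequences are known to decrease to $0$, the monotone-decrease inequalities give $g(x) \ge 0$ and $h(x) \ge 0$, which are exactly the claimed bounds $\log(x+\tfrac12) \le \psi(x+1) \le \log x + \tfrac{1}{2x}$.
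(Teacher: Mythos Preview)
The paper does not actually supply a proof of this lemma; it states the digamma inequalities as a known fact and simply invokes them when bounding the Dirichlet mutual information. So there is nothing to compare against.

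Your argument is correct. The one-step differences are computed accurately from the recurrence $\psi(x+2)=\psi(x+1)+\tfrac{1}{x+1}$, and the signs follow exactly from the Hermite--Hadamard pair for the convex integrand $t\mapsto 1/t$: the midpoint rule underestimates $\int_{x+1/2}^{x+3/2}t^{-1}\,dt$, giving $g(x)>g(x+1)$, and the trapezoidal rule overestimates $\int_x^{x+1}t^{-1}\,dt$, giving $h(x)>h(x+1)$. The limiting step is also fine: the standard asymptotic $\psi(y)=\log y-\tfrac{1}{2y}+O(y^{-2})$ immediately yields $g(x+n)\to 0$ and $h(x+n)\to 0$, after which strict monotone decrease along integer shifts forces $g(x)\ge 0$ and $h(x)\ge 0$. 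If you prefer to avoid citing the full Stirling/Binet expansion, your integer-based route works too: $\psi(n+1)=-\gamma+H_n$ and $H_n-\log n\to\gamma$ give the limit along integers, and then for arbitrary $x>0$ you can squeeze $g(x+n)$ between $g(\lfloor x\rfloor+n)$ and $g(\lceil x\rceil+n)$ using the monotonicity of $\psi$ and $\log$ (or simply note that $g$ and $h$ are continuous and their integer-shift limits are uniform on compact sets).
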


Our confidence sets rely on the stochastic dominance results established in \cite{osband2018gaussian} and \cite{pmlr-v70-osband17a}. For completeness, we restate the definition and results below. 
\begin{definition} (Stochastic optimism)
Let $X$ and $Y$ be real-valued random variables with finite expectation. We say that $X$ is stochastically optimistic for $Y$ if for any convex and increasing $u:\Re \to \Re$, 
\[ \mathbb{E}[u(X)] \geq \mathbb{E}[u(Y)]. \]
We will write $X \succeq_{\rm so} Y$ for this relation. 
\end{definition}
\begin{lemma} \label{lemma:gaussian-dirichlet-dominance}
(Gaussian-Dirichlet dominance)
For all fixed $v \in [0, 1]^N$, $\alpha \in [0, \infty)^N$ with $\overline{\alpha} = \mathbf{1}^\top \alpha \geq 2$, 
if $X = p^\top v$ for $p \sim \textrm{Dirichlet}(\alpha)$
and $Y \sim N(\alpha^\top v / \overline{\alpha}, 1/\overline{\alpha})$,
then $\mathbb{E}[X] = \mathbb{E}[Y]$ and $Y \succeq_{\rm so} X$. 
\end{lemma}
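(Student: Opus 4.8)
The plan is to treat the two assertions separately: the mean equality is routine, while the stochastic optimism is the substance. For the means I would use the standard Dirichlet moment formula $\mathbb{E}[p_i] = \alpha_i/\overline{\alpha}$, which gives $\mathbb{E}[X] = \sum_i v_i \alpha_i/\overline{\alpha} = \alpha^\top v/\overline{\alpha} = \mathbb{E}[Y]$. This also identifies the common mean $\mu := \alpha^\top v/\overline{\alpha}$, which is used below.

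For the dominance $Y \succeq_{\rm so} X$, I would first note that since $X$ and $Y$ share the same mean, it suffices to establish the convex order $X \le_{\rm cx} Y$, which together with equal means yields $\mathbb{E}[u(X)] \le \mathbb{E}[u(Y)]$ for every increasing convex $u$, i.e. exactly stochastic optimism. I would then invoke the single-crossing (cut) criterion: if $\mathbb{E}[X]=\mathbb{E}[Y]$ and there is a point $t_0$ with $F_X(t) \le F_Y(t)$ for $t < t_0$ and $F_X(t) \ge F_Y(t)$ for $t > t_0$, the order follows. The reduction is clean: setting $g(t) = \int_t^\infty (F_X - F_Y)$, equal means force $g(-\infty)=g(+\infty)=0$, and the single-crossing sign pattern makes $g$ increase then decrease, hence $g \ge 0$, which is precisely the integrated-tail characterization of the order.

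The technical heart is therefore to show that the CDF of the weighted Dirichlet combination $X = p^\top v$ crosses the Gaussian CDF exactly once with the correct orientation. I would first settle the corner case where $v$ is an indicator (or, by Dirichlet aggregation, any $v$ taking only two distinct values), in which $X$ has an explicit Beta density; there one compares $f_X$ against $f_Y$ and counts the sign changes of $f_X - f_Y$ by a direct derivative and log-concavity analysis, yielding a single CDF crossing. For general $v \in [0,1]^N$ I would extend by grouping coordinates of equal weight (which merges their Dirichlet parameters and preserves the Dirichlet form), combined with the stick-breaking decomposition $X = v_N p_N + (1-p_N)W$, where $p_N \sim \mathrm{Beta}(\alpha_N, \overline{\alpha}-\alpha_N)$ and $W$ is an independent lower-dimensional Dirichlet combination, propagating the dominance inductively in $N$.

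The step I expect to be the main obstacle is exactly this single-crossing verification for general weighted combinations: $p^\top v$ has no clean closed-form density, and log-concavity (the most convenient route to controlling sign changes) can fail when some $\alpha_i < 1$, even though the aggregate condition $\overline{\alpha} \ge 2$ still holds. Controlling the number and orientation of the sign changes of $f_X - f_Y$ uniformly over $v$, and tracking the variance bookkeeping $1/(\overline{\alpha}-\alpha_N) \to 1/\overline{\alpha}$ through the inductive step, is the delicate quantitative part, and is precisely what the cited analyses in \cite{osband2018gaussian} and \cite{pmlr-v70-osband17a} are devoted to establishing.
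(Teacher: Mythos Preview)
The paper does not actually give its own proof of this lemma: it explicitly says ``Our confidence sets rely on the stochastic dominance results established in \cite{osband2018gaussian} and \cite{pmlr-v70-osband17a}. For completeness, we restate the definition and results below,'' and then simply states the lemma without argument. So there is no in-paper proof to compare against; the paper's ``proof'' is a citation.

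Your sketch is a reasonable outline of how the cited references approach the problem. The mean computation is correct, and the reduction from stochastic optimism to convex order via equal means, followed by the single-crossing (cut) criterion, is indeed the standard route. You are also right that the substantive difficulty is exactly the single-crossing verification for general weighted Dirichlet combinations, and you correctly flag the two obstructions (no closed-form density for $p^\top v$, and possible failure of log-concavity when individual $\alpha_i < 1$). Your proposed induction via stick-breaking is plausible in spirit, but as you note the variance bookkeeping ($1/(\overline{\alpha}-\alpha_N)$ in the inductive hypothesis versus the target $1/\overline{\alpha}$) is genuinely nontrivial, and making that step rigorous requires the dedicated analysis in the cited works---which is precisely where you, like the paper, end up pointing. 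In short, your proposal and the paper agree: the hard part lives in \cite{osband2018gaussian} and \cite{pmlr-v70-osband17a}, and neither you nor the paper reproduces it here.
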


One implication of Lemma \ref{lemma:gaussian-dirichlet-dominance} is that $p^\top v$ will have sub-Gaussian tails with sub-Gaussian parameter $1/\overline{\alpha}$. Now we are ready to construct confidence sets.

\lemmamdpcb*
\begin{proof}
Let $\alpha^R_{\ell, t, s, a}$ and $\alpha^P_{\ell, t, s, a}$ denote the posterior parameters for the reward and transition functions associated with state-action pair $(s, a)$ conditioned on $\mathcal{H}_\ell$ and $\mathcal{H}_{\ell t}$. 
Let $\overline{\alpha}^R_{\ell, s, a} = \mathbf{1}^\top \alpha^R_{\ell, s, a}$, 
$\overline{\alpha}^P_{\ell, s, a} = \mathbf{1}^\top \alpha^P_{\ell, s, a}$, 
and similarly, 
$\overline{\alpha}^R_{\ell, t, s, a} = \mathbf{1}^\top \alpha^R_{\ell, t, s, a}$, 
$\overline{\alpha}^P_{\ell, t, s, a} = \mathbf{1}^\top \alpha^P_{\ell, t, s, a}$. 

If $\overline{\alpha}^R_{\ell, s, a} \geq \tau - 1$, then for any $0 \leq \tilde{t} \leq \tau-1$, 
$\overline{\alpha}^R_{\ell, \tilde{t}, s, a} \leq \overline{\alpha}^R_{\ell, s, a} + \tau - 1 \leq 2 \overline{\alpha}^R_{\ell, s, a}$. Then, by Lemma  \ref{lemma:dirichlet-information}, we have
\[ \frac{1}{\overline{\alpha}^R_{\ell, s, a}} 
\leq \frac{2}{\overline{\alpha}^R_{\ell, \tilde{t}, s, a}} 
\leq 12 I_{\ell} \left(R; s, a, r_{\ell, \tilde{t}, s, a} \,|\, \mathcal{H}_{\ell \tilde{t}} = h_{\tilde{t}} \right) \]
for any $0 \leq \tilde{t} \leq \tau-1$ and trajectory $h_{\tilde{t}}$. 
Similarly, $\overline{\alpha}^P_{\ell, s, a} \geq \tau - 1$ implies that 
\[ \frac{1}{\overline{\alpha}^P_{\ell, s, a}} 
\leq \frac{2}{\overline{\alpha}^P_{\ell, \tilde{t}, s, a}} 
\leq 12 I_{\ell} \left(P; s, a, s'_{\ell, \tilde{t}, s, a} \,|\, \mathcal{H}_{\ell \tilde{t}} = h_{\tilde{t}} \right) \]
for any $0 \leq \tilde{t} \leq \tau-1$ and $h_{\tilde{t}}$. 

Then, by Lemma \ref{lemma:gaussian-dirichlet-dominance}, we have for $\overline{\alpha}^R_{\ell, s, a} \geq \tau - 1$,
\begin{eqnarray*}
1 - \delta &\leq& \mathbb{P}_\ell \left( \left| R(s, a) - \mathbb{E}_\ell R(s, a) \right| \leq \sqrt{ \frac{2}{\overline{\alpha}^R_{\ell, s, a}} \log \frac{2}{\delta} }  \right) \\
&\leq& \mathbb{P}_\ell \left( \left| R(s, a) - \mathbb{E}_\ell R(s, a) \right| \leq \Gamma^R \sqrt{ \min_{\tilde{t}, h_{\tilde{t}}} I_{\ell} \left(R; s, a, r_{\ell, \tilde{t}, s, a} \,|\, \mathcal{H}_{\ell \tilde{t}} = h_{\tilde{t}} \right) } \right).
\end{eqnarray*}

To obtain concentration around transitions, we cannot directly apply Lemma \ref{lemma:gaussian-dirichlet-dominance}, since $V^M_{\mu^*, t+1}$ is correlated with $P(s, a)$. 
We will use results in \cite{pmlr-v70-osband17a} that guarantee sub-Gaussian behavior despite the correlation. 
By Lemma 3 in \cite{pmlr-v70-osband17a}, we have for $\overline{\alpha}^P_{\ell, s, a} \geq \tau - 1$ and any $t$,
\begin{eqnarray*}
1 - \delta 
&\leq& \mathbb{P}_\ell \left( \left| \left( P(s, a) - \mathbb{E}_\ell P(s, a) \right)^\top V^M_{\mu^*, t+1} \right| 
\leq \tau \sqrt{ \frac{2}{\overline{\alpha}^P_{\ell, s, a}} \log\frac{2}{\delta} } \right) \\
&\leq& \mathbb{P}_\ell \left( \left| \left( P(s, a) - \mathbb{E}_\ell P(s, a) \right)^\top V^M_{\mu^*, t+1} \right| 
\leq \Gamma^P \sqrt{ \min_{\tilde{t}, h_{\tilde{t}}} I_{\ell} \left(P; s, a, s'_{\ell, \tilde{t}, s, a} \,|\, \mathcal{H}_{\ell \tilde{t}} = h_{\tilde{t}} \right) } \right). 
\end{eqnarray*}
\end{proof}

We now construct confidence sets around MDPs. Let $\overline{R}_\ell$ and $\overline{P}_\ell$ denote the posterior mean of reward and transition functions conditioned on $\mathcal{H}_\ell$, and define shorthands 
\begin{align*}
I^{\min}_{\ell}(R(s, a)) &\equiv \min_{\tilde{t}, h_{\tilde{t}}} I_{\ell} \left(R; s, a, r_{\ell, \tilde{t}, s, a} \,|\, \mathcal{H}_{\ell \tilde{t}} = h_{\tilde{t}} \right), \\
I^{\min}_{\ell}(P(s, a)) &\equiv \min_{\tilde{t}, h_{\tilde{t}}} I_{\ell} \left(P; s, a, s'_{\ell, \tilde{t}, s, a} \,|\, \mathcal{H}_{\ell \tilde{t}} = h_{\tilde{t}} \right).
\end{align*}
Define confidence set
\begin{align} \label{eq:mdp-confidence-set}
\begin{split}
\mathcal{M}_\ell = \bigg\{ & \tilde{M}:
\left| \tilde{R}(s, a) - \overline{R}_\ell(s, a) \right| \leq 
\Gamma^R \sqrt{ I^{\min}_{\ell}(R(s, a)) } 
~~\forall s, a \text{ s.t. } \overline{\alpha}^R_{\ell, s, a} \geq \tau - 1, \text{ and} \\
& \left| \left( \tilde{P}(s, a) - \overline{P}_\ell(s, a) \right)^\top V^{\tilde{M}}_{\tilde{\mu}, t+1} \right| 
\leq \Gamma^P \sqrt{ I^{\min}_{\ell}(P(s, a)) } 
 ~~\forall t, s, a \text{ s.t. } \overline{\alpha}^P_{\ell, s, a} \geq \tau - 1 \bigg\}, 
 \end{split}
\end{align}
where $\tilde{\mu}$ is a greedy policy with respect to $\tilde{M}$, and 
\begin{equation} \label{eq:mdp-confidence-gamma}
\Gamma^R = \sqrt{24 \log\frac{8|\mathcal{S}||\mathcal{A}| \tau}{\delta}}
\quad \text{and} \quad
\Gamma^P = \tau\sqrt{24 \log\frac{8|\mathcal{S}||\mathcal{A}| \tau}{\delta}}.
\end{equation}
By Lemma \ref{lemma:mdp-cb} and the union bound, we have $\mathbb{P}_\ell(M \in \mathcal{M}_\ell) \geq 1 - \delta/2$.

Together with Lemmas \ref{lemma:mdp-on-policy-errors} and \ref{lemma:mdp-info-decomp}, we have the following result. 
\propmdpregret*
\begin{proof}
We will show that the one-period regret of Thompson sampling and UCB is
\begin{equation} \label{eq:mdp-one-period-regret}
\mathbb{E}_\ell [\Delta_\ell] \leq 
\Gamma \sqrt{ I_\ell(M; \mu_\ell, Y_{\ell, \mu_\ell}) } + \epsilon_\ell
\end{equation}
with
\[ \Gamma = 4 (\tau + 1) \sqrt{6 \tau \log\frac{8 |\mathcal{S}| |\mathcal{A}| \tau}{\delta}} \]
and
\[ \epsilon_\ell = \tau \mathbb{E}_\ell \left[ \sum_{t=0}^{\tau-1} \mathbf{1}(n_{\ell, s_{\ell, t}, a_{\ell, t}} < \tau - 3) \right]  + \delta \tau, \]
where $n_{\ell, s, a}$ is the number of times $(s, a)$ has been visited up to episode $\ell$. 
The proposition then follows from Proposition \ref{prop:general-regret} by letting $\delta = \frac{1}{L}$ and noting that
\[
\mathbb{E} \sum_{\ell=1}^L \epsilon_\ell 
= \mathbb{E} \sum_{\ell=1}^L \left[ \tau \left( \sum_{t=0}^{\tau-1} \mathbf{1}(n_{\ell, s_{\ell, t}, a_{\ell, t}} < \tau - 3) \right)  + \delta \tau \right]
\leq |\mathcal{S}| |\mathcal{A}| \tau^2 + \delta \tau L. 
\]

We first show that \eqref{eq:mdp-one-period-regret} holds for Thompson sampling. Let $\hat{M}_\ell$ denote the MDP sampled by Thompson sampling during episode $\ell$. By the probability matching property, $\mathbb{P}_\ell(\hat{M}_\ell \in \mathcal{M}_\ell) = \mathbb{P}_\ell(M \in \mathcal{M}_\ell) \geq 1 - \frac{\delta}{2}$, where $\mathcal{M}_\ell$ is defined in \eqref{eq:mdp-confidence-set}, 
Moreover, an argument similar to Lemma \ref{lemma:mdp-cb} gives
\begin{align*}
& \mathbb{P}_\ell \bigg(
\left| R(s, a) - \overline{R}_\ell(s, a) \right| \leq \Gamma^R \sqrt{ I^{\min}_{\ell}(R(s, a)) } 
~~\forall s, a \text{ s.t. } \overline{\alpha}^R_{\ell, s, a} \geq \tau - 1, \text{ and} \\
& \hspace{0.6cm} 
\left| \left( P(s, a) - \overline{P}_\ell(s, a) \right)^\top V^{\hat{M}_\ell}_{\mu_\ell, t+1} \right| 
\leq \Gamma^P \sqrt{ I^{\min}_{\ell}(P(s, a)) } 
 ~~\forall t, s, a \text{ s.t. } \overline{\alpha}^P_{\ell, s, a} \geq \tau - 1
\bigg) \geq 1 - \frac{\delta}{2},
\end{align*}
with $\Gamma^R$ and $\Gamma^P$ are defined in \eqref{eq:mdp-confidence-gamma}. 
Let $E_\ell$ denote the intersection of the above event and $\{ \hat{M}_\ell \in \mathcal{M}_\ell \}$. Then $\mathbb{P}_\ell(E_\ell) \geq 1 - \delta$. 

Denote $x_{\ell, t} \equiv (s_{\ell, t}, a_{\ell, t})$. 
By the probability matching property and Lemma \ref{lemma:mdp-on-policy-errors}, we have
\begin{eqnarray*}
\mathbb{E}_\ell \left[ \Delta_\ell \right] 
&=& \mathbb{E}_\ell \left[ \overline{V}^M_{\mu^*} - \overline{V}^M_{\mu_\ell} \right] \\
&=& \mathbb{E}_\ell \left[ \overline{V}^{\hat{M_\ell}}_{\mu_\ell} - \overline{V}^M_{\mu_\ell} \right] \\
&\leq& \mathbb{E}_\ell \left[ \mathbf{1}_{E_\ell}
\left( \overline{V}^{\hat{M_\ell}}_{\mu_\ell} - \overline{V}^M_{\mu_\ell}  \right) \right] 
+ \delta \tau \\
&=& \mathbb{E}_\ell \left[ \mathbf{1}_{E_\ell} 
\sum_{t=0}^{\tau-1} \mathbb{E}_\ell \left[ \left( \hat{R}_\ell(x_{\ell, t}) - R(x_{\ell, t}) \right)
+ \left( \hat{P}_\ell(x_{\ell, t}) - P(x_{\ell, t}) \right)^\top V^{\hat{M}_\ell}_{\mu_\ell, t+1} \,\Big|\, \hat{M}_\ell, M, \mu_\ell \right] \right] 
+ \delta \tau \\
&=& \sum_{t=0}^{\tau-1} \mathbb{E}_\ell \left[ \mathbf{1}_{E_\ell}
\left( \left( \hat{R}_\ell(x_{\ell, t}) - R(x_{\ell, t}) \right)
+ \left( \hat{P}_\ell(x_{\ell, t}) - P(x_{\ell, t}) \right)^\top V^{\hat{M}_\ell}_{\mu_\ell, t+1} \right) \right] 
+ \delta \tau \\
&\leq& \sum_{t=0}^{\tau-1} \mathbb{E}_\ell \left[ 
\sum_{s, a} \mathbf{1}(x_{\ell, t} = (s, a)) \left( 2 \Gamma^R \sqrt{I^{\min}_{\ell}(R(s, a))} 
+ 2 \Gamma^P \sqrt{I^{\min}_{\ell}(P(s, a))} \right) \right] \\
& & \hspace{0.2cm} + \sum_{t=0}^{\tau-1} \mathbb{E}_\ell \left[ \mathbf{1} \left(\overline{\alpha}^R_{\ell, x_{\ell, t}} < \tau - 1 \right) + (\tau - 1) \mathbf{1} \left(\overline{\alpha}^P_{\ell, x_{\ell, t}} < \tau - 1 \right) \right]
+ \delta \tau \\
&\leq& 2 \sum_{t=0}^{\tau-1} \mathbb{E}_\ell \left[ \sum_{s, a} \mathbf{1}(x_{\ell, t} = (s, a))
\left( \Gamma^R \sqrt{I^{\min}_{\ell}(R(s, a))}
+ \Gamma^P \sqrt{I^{\min}_{\ell}(P(s, a))} \right) \right] + \epsilon_\ell. 
\end{eqnarray*}

Let $\mathbb{P}_{\ell t}(\cdot) = \mathbb{P}(\cdot | \mathcal{H}_\ell, \mathcal{H}_{\ell t})$, and let $I_{\ell t}(X; Y)$ denote the mutual information under the base measure $\mathbb{P}_{\ell t}(\cdot)$. By definition, 
\[
I^{\min}_{\ell}(R(s, a)) \leq I_{\ell t}(R; s, a, r_{\ell, t, s, a})
\quad \text{and} \quad
I^{\min}_{\ell}(P(s, a)) \leq I_{\ell t}(P; s, a, s'_{\ell, t, s, a}). 
\]
Moreover, 
\[
\Gamma^R \sqrt{I_{\ell t}(R; s, a, r_{\ell, t, s, a})} + \Gamma^P \sqrt{I_{\ell t}(P; s, a, s'_{\ell, t, s, a})} 
\leq (\Gamma^R + \Gamma^P) \sqrt{I_{\ell t}(M; s, a, r_{\ell, t, s, a}, s'_{\ell, t, s, a})},
\]
and thus, 
\[ \Gamma^R \sqrt{I^{\min}_{\ell}(R(s, a))} + \Gamma^P \sqrt{I^{\min}_{\ell}(P(s, a))}
\leq (\Gamma^R_\ell + \Gamma^P_\ell) \sqrt{I_{\ell t}(M; s, a, r_{\ell, t, s, a}, s'_{\ell, t, s, a})}. 
\]

Therefore, we have
\begin{eqnarray*}
\mathbb{E}_\ell \left[ \Delta_\ell \right] 
&\leq& 2 (\Gamma^R + \Gamma^P) \sum_{t=0}^{\tau-1} \mathbb{E}_\ell \left[ \sum_{s, a} \mathbf{1}(x_{\ell, t} = (s, a)) \sqrt{I_{\ell t}(M; s, a, r_{\ell, t, s, a}, s'_{\ell, t, s, a})} \right] + \epsilon_\ell \\
&=& 2 (\Gamma^R + \Gamma^P) \sum_{t=0}^{\tau-1} \mathbb{E}_\ell \left[ \sqrt{I_{\ell t}(M; s_{\ell, t}, a_{\ell, t}, r_{\ell, t+1}, s_{\ell, t+1})} \right] + \epsilon_\ell \\
&\leq& 2 (\Gamma^R + \Gamma^P) \sqrt{\tau \sum_{t=0}^{\tau-1} \mathbb{E}_\ell I_{\ell t}(M; s_{\ell, t}, a_{\ell, t}, r_{\ell, t+1}, s_{\ell, t+1})}  + \epsilon_\ell \\
&=& 2 (\Gamma^R + \Gamma^P) \sqrt{\tau \sum_{t=0}^{\tau-1} I_{\ell} \left( M ;~ s_{\ell, t}, a_{\ell, t}, r_{\ell, t+1}, s_{\ell, t+1} ~|~ \mathcal{H}_{\ell t} \right)} + \epsilon_\ell \\
&=& 2 (\Gamma^R + \Gamma^P) \sqrt{\tau I_\ell \left( M ;~ \mu_\ell, Y_{\ell, \mu_\ell}\right)} + \epsilon_\ell, 
\end{eqnarray*}
where the second inequality follows from Jensen's inequality and Cauchy-Schwarz inequality, and the last step follows from the information decomposition stated in Lemma \ref{lemma:mdp-info-decomp}. 
Define
\[ \Gamma = 2 \sqrt{\tau} (\Gamma^R + \Gamma^P)
= 4 (\tau + 1) \sqrt{6 \tau \log\frac{8 |\mathcal{S}| |\mathcal{A}| \tau}{\delta}}, \]
and we obtain \eqref{eq:mdp-one-period-regret}.

Now we show that UCB with confidence sets $\mathcal{M}_\ell$ defined by \eqref{eq:mdp-confidence-set} also satisfies \eqref{eq:mdp-one-period-regret}. 
Let $\mu_\ell = \arg\max_\mu \max_{\hat{M}\in\mathcal{M}_\ell} \overline{V}^{\hat{M}}_\mu$, and let $\hat{M}_\ell$ denote the optimistic MDP that corresponds to $\mu_\ell$. 
Again, by Lemmas \ref{lemma:gaussian-dirichlet-dominance} and \ref{lemma:mdp-cb} and the union bound, 
\begin{align*}
& \mathbb{P}_\ell \bigg(
\left| \left( P(s, a) - \overline{P}_\ell(s, a) \right)^\top V^{\hat{M}_\ell}_{\mu_\ell, t+1} \right| 
\leq \Gamma^P \sqrt{ I^{\min}_{\ell}(P(s, a)) } 
 ~~\forall t, s, a \text{ s.t. } \overline{\alpha}^P_{\ell, s, a} \geq \tau - 1
\bigg) \geq 1 - \frac{\delta}{2},
\end{align*}
Let $E_\ell$ denote the intersection of the above event and $\{ M \in \mathcal{M}_\ell \}$. Then $\mathbb{P}_\ell(E_\ell) \geq 1 - \delta$. We have
\[
\mathbb{E}_\ell \left[ \Delta_\ell \right] 
= \mathbb{E}_\ell \left[ \overline{V}^M_{\mu^*} - \overline{V}^M_{\mu_\ell} \right] 
\leq \mathbb{E}_\ell \left[ \mathbf{1}_{E_\ell} \left( \overline{V}^M_{\mu^*} - \overline{V}^M_{\mu_\ell} \right) \right] + \delta \tau
\leq \mathbb{E}_\ell \left[ \mathbf{1}_{E_\ell} \left( \overline{V}^{\hat{M}_\ell}_{\mu_\ell} - \overline{V}^M_{\mu_\ell} \right) \right] + \delta \tau. 
\]
The rest of the analysis follows similarly. 
\end{proof}

Let $T = \tau L$ be the total number of time steps. We provide a bound on the maximal information gain of order $O\left( |\mathcal{S}|^2 |\mathcal{A}| \log \frac{T}{|\mathcal{S}| |\mathcal{A}|} \right)$. 
Though we conjecture that a bound of order $\tilde{O}(|\mathcal{S}| |\mathcal{A}|)$ may be attainable under appropriate conditions. 
\begin{conjecture} \label{conjecture:mdp-information-bound}
For any sequence of $\mathcal{H}_\ell$-adapted policies $\{\mu_\ell\}_{\ell=1}^L$, 
\[ I(M; \mu_1, Y_{1, \mu_1}, \dots, \mu_{L}, Y_{L, \mu_{L}}) = \tilde{O}(|\mathcal{S}| |\mathcal{A}|). \]
\end{conjecture}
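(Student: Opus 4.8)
The plan is to bound the total information gain by first decomposing it into independent contributions from each state-action pair, and then treating the reward and transition contributions separately. Since the prior on $M$ factorizes into independent Beta priors on the rewards $\{R(s,a)\}$ and independent Dirichlet priors on the transitions $\{P(s,a,\cdot)\}$, and since the step observation at a pair $(s,a)$ depends only on $R(s,a)$ and $P(s,a,\cdot)$, the chain rule of mutual information together with Lemma \ref{lemma:mdp-info-decomp} lets me expand $I(M; \mu_1, Y_{1,\mu_1}, \dots, \mu_L, Y_{L,\mu_L})$ as a sum over episodes and time steps of per-step conditional mutual informations, and then regroup these by the visited pair into $\sum_{s,a} \big( I_R(s,a) + I_P(s,a) \big)$, where $I_R(s,a)$ and $I_P(s,a)$ collect the information gained about the reward and the transition at $(s,a)$ over the $N_{s,a}$ times it is visited.

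For the reward terms this is routine. After $(s,a)$ has been visited $n$ times its posterior is a Beta whose total concentration grows like $n$, so by Lemma \ref{lemma:dirichlet-information} specialized to $N=2$ and the digamma estimates of Lemma \ref{lemma:digamma-inequalities}, the marginal gain from the $n$-th Bernoulli observation is $\Theta(1/n)$; summing the (telescoping) series gives $I_R(s,a) = O(\log N_{s,a})$, and hence $\sum_{s,a} I_R(s,a) = \tilde O(|\mathcal{S}||\mathcal{A}|)$. The same computation for the transitions is where the difficulty appears: the per-observation gain for a Dirichlet over $|\mathcal{S}|$ outcomes is $\Theta(|\mathcal{S}|/\bar\alpha)$, so $I_P(s,a) = O(|\mathcal{S}| \log N_{s,a})$ and the naive sum is only $\tilde O(|\mathcal{S}|^2|\mathcal{A}|)$, which is exactly the bound one can prove unconditionally.

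The main obstacle --- and the reason the statement is posed as a conjecture --- is removing this extra factor of $|\mathcal{S}|$ from the transition terms, which amounts to arguing that the agent cannot extract full information about all $|\mathcal{S}|$ components of every transition distribution. No argument phrased purely in terms of $I(M;\cdot)$ can achieve this in general: for an MDP whose transitions have full support and whose pairs are each visited $\Omega\big(T/(|\mathcal{S}||\mathcal{A}|)\big)$ times, the per-pair estimate $\Theta(|\mathcal{S}| \log N_{s,a})$ is tight, so $\Theta(|\mathcal{S}|^2|\mathcal{A}|)$ information really is gained. The improvement must therefore come from the ``appropriate conditions,'' and the plan is to make precise a structural assumption that caps the effective number of learnable parameters per transition at $\tilde O(1)$ --- for instance, near-deterministic or sparsely supported transitions, under which the prior on each $P(s,a,\cdot)$ concentrates on an unknown support of size $O(1)$ and contributes only $O(\log|\mathcal{S}|)$ to the information gain. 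Re-running the decomposition under such an assumption would then yield $\tilde O(|\mathcal{S}||\mathcal{A}|)$, and identifying the weakest such condition that remains consistent with the regimes observed in the simulations is the crux of the problem.
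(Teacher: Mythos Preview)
Your diagnosis matches the paper's: the decomposition into per-pair contributions, the easy $\tilde O(|\mathcal{S}||\mathcal{A}|)$ bound for the reward terms via Lemmas~\ref{lemma:dirichlet-information} and~\ref{lemma:digamma-inequalities}, and the identification of the extra $|\mathcal{S}|$ factor from the transition terms are all exactly what the appendix does. But your proposed resolution takes a different route from the paper's. You argue that $\Theta(|\mathcal{S}|\log N_{s,a})$ per pair is genuinely attained and therefore the fix must be a structural sparsity assumption on the MDP. The paper instead leaves the conjecture open and reduces it to a cleaner statement about the prior alone (Conjecture~\ref{conjecture:dirichlet-information-bound}): for $p\sim\mathrm{Dirichlet}(\alpha)$ with $\bar\alpha\geq 2$ and i.i.d.\ categorical observations $s_1,\dots,s_n$, one has $I(p;s_1,\dots,s_n)\leq c\log N\log n$. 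It then proves the implication Conjecture~\ref{conjecture:dirichlet-information-bound} $\Rightarrow$ Conjecture~\ref{conjecture:mdp-information-bound} by a coupling argument (dominating the trajectory by $\tau$ independent reward and transition samples per pair per episode, and showing the policies contribute no extra information given these), and supports Conjecture~\ref{conjecture:dirichlet-information-bound} by simulation at $\alpha_i=2/N$.

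Your assertion that full-support transitions force $\Theta(|\mathcal{S}|^2|\mathcal{A}|)$ overlooks that the bound is Bayesian and averages over the prior. At the boundary of Assumption~\ref{assumption:dirichlet-prior}, namely $\alpha_i=2/|\mathcal{S}|$, a typical draw $P(s,a,\cdot)$ is already near-degenerate, so the sparsity you want to impose as an extra hypothesis is essentially built into the prior; the paper's position is that this is enough, and the simulations in Figure~\ref{fig:dirichlet-conjecture} are the evidence. Your concern is legitimate for priors with $\alpha_i$ bounded away from zero (e.g.\ $\alpha_i=1$), where one does expect $\Theta(N\log n)$ per pair and Conjecture~\ref{conjecture:dirichlet-information-bound} as literally stated is in tension with that scaling; but the ``appropriate conditions'' the paper alludes to are more naturally read as restrictions on the prior parameters than as structural assumptions on the realized MDP.
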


The conjecture follows from the following conjecture about the mutual information between a Dirichlet random variable and categorical observations. 
\begin{conjecture} \label{conjecture:dirichlet-information-bound}
Suppose that $p \sim {\rm Dirichlet}(\alpha)$ with $\alpha \in \Re^N$ and $\overline{\alpha} = \mathbf{1}^\top \alpha \geq 2$. Conditioned on $p$, $s_1, \dots, s_n$ are drawn i.i.d. from $p$. Then, 
\[ I(p; s_1, \dots, s_n) \leq c \log(N) \log(n) \]
for some fixed constant $c$. 
\end{conjecture}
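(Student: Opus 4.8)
The plan is to reduce the claim to a per-observation estimate via the chain rule and Dirichlet--categorical conjugacy. By the chain rule of mutual information, $I(p; s_1, \dots, s_n) = \sum_{i=1}^n I(p; s_i \mid s_1, \dots, s_{i-1})$. Conditioned on any history, conjugacy makes the posterior $\mathrm{Dirichlet}(\beta)$ with $\beta = \alpha + C_{i-1}$, where $C_{i-1}$ is the count vector; crucially the posterior total $\overline{\beta} = \overline{\alpha} + (i-1)$ is deterministic. Writing $\phi(x) = x\big(\psi(x+1) - \log x\big)$, Lemma \ref{lemma:dirichlet-information} gives the single-draw information as $\tfrac{1}{\overline{\beta}}\sum_{j} \phi(\beta_j) - \big(\psi(\overline{\beta}+1) - \log\overline{\beta}\big)$, so each term equals $\mathbb{E}\big[\tfrac{1}{\overline{\beta}}\sum_j \phi(\beta_j)\big]$ minus a nonnegative quantity, and the whole problem becomes bounding $\sum_{i=1}^n \tfrac{1}{\overline{\alpha}+i-1}\,\mathbb{E}\big[\sum_j \phi(\beta_j)\big]$. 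From Lemma \ref{lemma:digamma-inequalities} one gets $0 \le \phi(x) \le \tfrac12$, which yields the crude bound $I(p; s_{1:n}) \le \tfrac{N}{2}\sum_{i=1}^n \tfrac{1}{\overline{\alpha}+i-1} = O(N\log n)$. This reproduces the maximal-information bound used in the body but is off from the target by a factor $N/\log N$.

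To sharpen it I would split the alphabet, at each step, into the \emph{visited} categories $\{j : C_{i-1,j}\ge 1\}$ and the \emph{unvisited} ones (for which $\beta_j=\alpha_j$). For the unvisited set the key is that the prior mass it carries is at most $\overline{\alpha}$: using $\phi(x)\le x\big(1+\log\tfrac1x\big)$ for $x\le 1$ together with the concavity of $x\mapsto -x\log x$, the combined contribution of the thinly spread small-mass coordinates is maximized at equal masses and is $O\big(\overline{\alpha}\log(N/\overline{\alpha})\big)$, while the few coordinates with $\alpha_j>1$ (at most $\overline{\alpha}$ of them) add $O(\overline{\alpha})$ via $\phi\le\tfrac12$. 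Thus $\sum_{\text{unvisited}}\phi = O(\overline{\alpha}\log N)$, and after the $\tfrac{1}{\overline{\alpha}+i-1}$ summation this piece contributes the desired $O(\overline{\alpha}\log N\log n)$.

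The visited set is where the difficulty concentrates. Bounding $\phi\le\tfrac12$ there gives $\tfrac{D_{i-1}}{2\overline{\beta}}$, where $D_{i-1}$ is the number of \emph{distinct} categories seen so far, so the remaining task is to control $\sum_{i}\tfrac{\mathbb{E}[D_{i-1}]}{\overline{\alpha}+i-1}$. Here I would invoke the Pólya-urn (Chinese-restaurant) structure of the Dirichlet--multinomial: one expects $\mathbb{E}[D_{i-1}]\approx \overline{\alpha}\,\log\!\big(1+\tfrac{i-1}{\overline{\alpha}}\big)$, capped at $N$. Feeding this in gives a total of order $O\big(\overline{\alpha}\log^2 n\big)$ for the visited part, so that $I(p;s_{1:n}) = O\big(\overline{\alpha}\log N\log n + \overline{\alpha}\log^2 n\big)$. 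This is polylogarithmic in $N$ precisely when $\overline{\alpha} = \tilde O(1)$, i.e.\ for a genuinely sparse (spiky) prior, and under that extra hypothesis the proof closes, modulo tightening the extra $\log n$.

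I expect the distinct-count step to be the main obstacle, for two reasons. First, the heuristic $\mathbb{E}[D_{i-1}]\approx\overline{\alpha}\log(\cdot)$ must be replaced by a rigorous bound uniform over all admissible $\alpha$, most naturally through $\mathbb{E}[D_{i-1}]=\sum_j\big(1-\mathbb{P}(C_{i-1,j}=0)\big)$ and a peeling argument over rising-factorial ratios. Second, and more seriously, the factor $\overline{\alpha}$ is not avoidable: under the bare hypothesis $\overline{\alpha}\ge 2$ one may take a near-uniform prior such as the Jeffreys choice $\alpha_j\equiv\tfrac12$, for which $\overline{\alpha}=N/2$, and then once $i\gtrsim N$ one has $D_{i-1}=\Theta(N)$ and $\sum_j\phi(\beta_j)=\Theta(N)$, so the per-step information is genuinely $\Theta(N/\overline{\beta})$ and the sum returns to $\Theta(N\log n)$, matching the classical Clarke--Barron stochastic-complexity rate $\tfrac{N-1}{2}\log n$. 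Consequently I would not expect the $\log N$ bound to hold for \emph{all} priors with $\overline{\alpha}\ge 2$; a correct statement almost certainly needs the sparsity hypothesis $\overline{\alpha}=\tilde O(1)$, under which the distinct-count argument above is the crux and the rest is routine.
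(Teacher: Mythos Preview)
The statement you were asked to prove is a \emph{conjecture} in the paper, not a theorem: the paper offers no proof, only numerical evidence (Figure~\ref{fig:dirichlet-conjecture}) computed for the specific prior $\alpha = (2/N,\dots,2/N)$, i.e.\ $\overline{\alpha}=2$. So there is no paper argument to compare against.

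That said, your analysis is sound and goes beyond what the paper does. The chain-rule/conjugacy reduction to per-step terms, the use of Lemma~\ref{lemma:dirichlet-information} to write each term via $\phi(x)=x(\psi(x+1)-\log x)$, and the crude bound $\phi\le\tfrac12$ recovering $O(N\log n)$ are all correct and match the route by which the paper proves its weaker Lemma~\ref{lemma:dirichlet-information-gain}. Your visited/unvisited split and the P\'olya-urn distinct-count heuristic are the natural next moves.

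Most importantly, your closing objection looks decisive: under the stated hypothesis $\overline{\alpha}\ge 2$ alone, the Jeffreys-type choice $\alpha_j\equiv\tfrac12$ (so $\overline{\alpha}=N/2$) puts you squarely in the regular parametric regime where the Clarke--Barron asymptotic $I(p;s_{1:n})\sim\tfrac{N-1}{2}\log n$ applies, and this cannot be bounded by $c\log N\log n$ uniformly in $N$. So the conjecture as written is almost certainly false; your proposed fix---restricting to sparse priors with $\overline{\alpha}=\tilde O(1)$, which is exactly the regime the paper simulates and the one arising from Assumption~\ref{assumption:dirichlet-prior} when the components are taken near their lower bound $2/|\mathcal{S}|$---is the right amendment, and under it your distinct-count argument is the correct line of attack.
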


In Figure \ref{fig:dirichlet-conjecture}, we show simulation results that support the conjecture. We fix $\alpha = (\frac{2}{N}, \frac{2}{N}, \dots, \frac{2}{N}) \in \Re^N$, and plot $I(p; s_1, \dots, s_n)$ where $p \sim \text{Dirichlet}(\alpha)$. We see in Figure \ref{fig:dirichlet-conjecture}(a) and \ref{fig:dirichlet-conjecture}(b) that the mutual information appears to grow logarithmically with $N$ and the number of observations. 
\begin{figure}
\centering
\begin{subfigure}{0.55\linewidth}
\includegraphics[width=\linewidth]{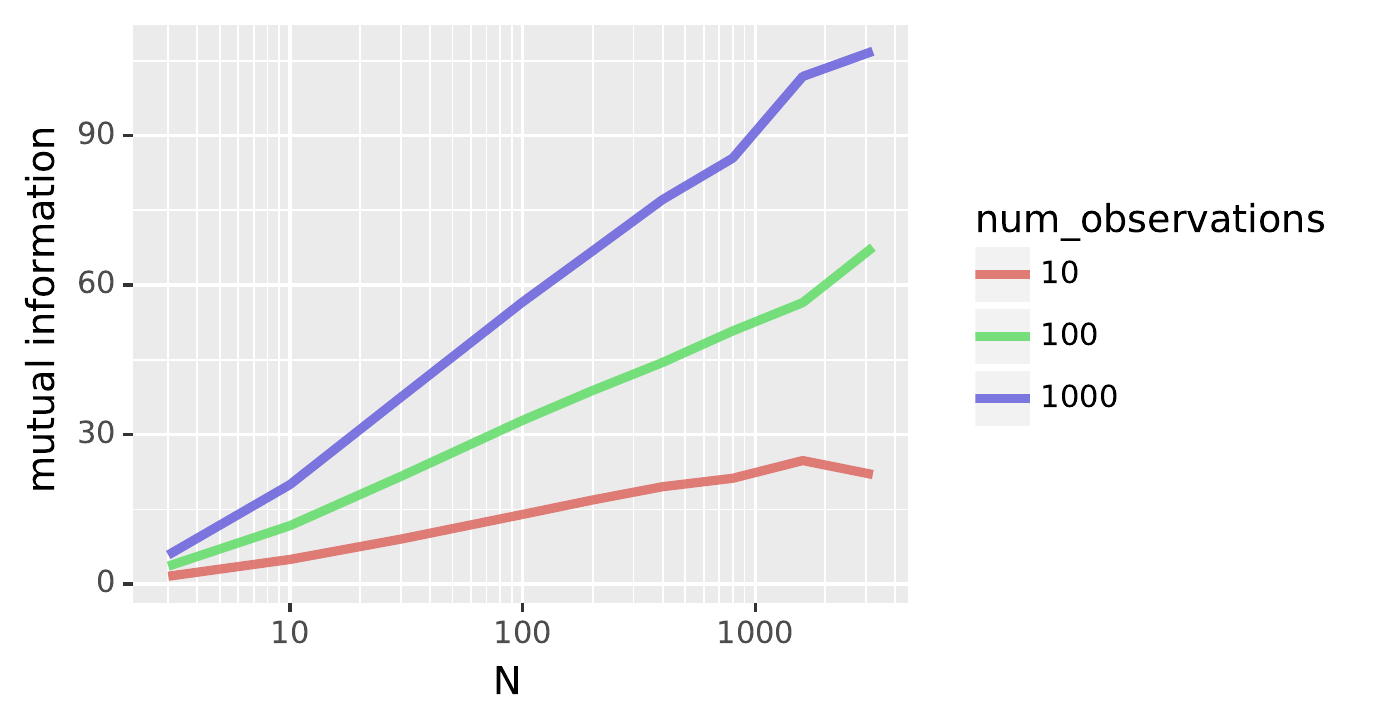}
\caption{}
\end{subfigure} ~~
\begin{subfigure}{0.38\linewidth}
\includegraphics[width=\linewidth]{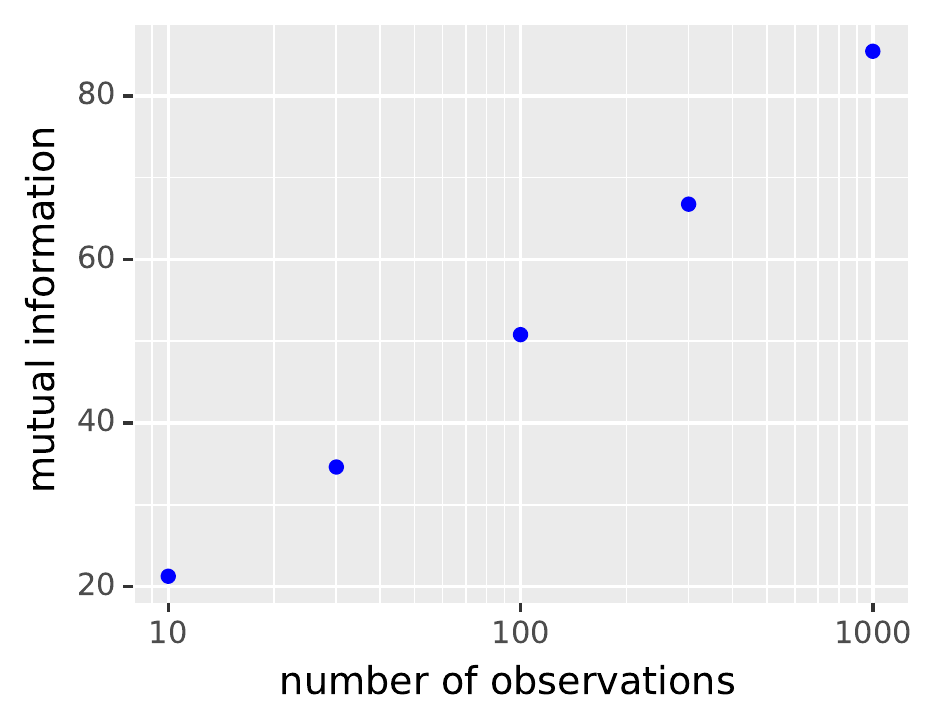}
\caption{}
\end{subfigure}
\caption{Scaling of mutual information between a Dirichlet random variable and categorical observations with respect to (a) number of categories $N$, and (b) number of observations (fixing $N = 800$).}
\label{fig:dirichlet-conjecture}
\end{figure}

We show how Conjecture \ref{conjecture:mdp-information-bound} follows from Conjecture \ref{conjecture:dirichlet-information-bound}. Let $\overline{Y}_\ell = \{(r_{\ell sa1}, \dots, r_{\ell sa\tau})_{sa}, (s'_{\ell sa1}, \dots, s'_{\ell sa\tau})_{sa}\}$ be a collection of random variables where for each episode and each state-action pair, we sample $\tau$ rewards and $\tau$ next states from the true MDP. The trajectory $Y_{\ell, \mu_\ell}$ that the agent observes during episode $\ell$ is then a subset of $\overline{Y}_\ell$. We have 
\begin{eqnarray*}
I(M; \mu_1, Y_{1, \mu_1}, \dots \mu_L, Y_{L, \mu_L})
&\leq& I(M; \mu_1, \dots, \mu_L, \overline{Y}_1, \dots, \overline{Y}_L) \\
&=& I(M; \overline{Y}_1, \dots, \overline{Y}_L) + I(M; \mu_1, \dots, \mu_L ~|~ \overline{Y}_1, \dots, \overline{Y}_L). 
\end{eqnarray*}
Let $H(\cdot)$ denotes the Shannon entropy of a discrete random variable. Since $\mu_\ell$ is $\mathcal{H}_\ell$-adapted, 
\begin{eqnarray*}
& & I(M; \mu_1, \dots, \mu_L ~|~ \overline{Y}_1, \dots, \overline{Y}_L) 
= \sum_{\ell=1}^L I(M; \mu_\ell ~|~ \overline{Y}_1, \dots, \overline{Y}_L, \mu_1, \dots, \mu_{\ell-1}) \\
&=& \sum_{\ell=1}^L H(\mu_\ell ~|~ \overline{Y}_1, \dots, \overline{Y}_L, \mu_1, \dots, \mu_{\ell-1})
- H(\mu_\ell ~|~ \overline{Y}_1, \dots, \overline{Y}_L, \mu_1, \dots, \mu_{\ell-1}, M) 
= 0. 
\end{eqnarray*}
Therefore, 
\begin{eqnarray*}
I(M; \mu_1, Y_{1, \mu_1}, \dots \mu_L, Y_{L, \mu_L})
&\leq& I(M; \overline{Y}_1, \dots, \overline{Y}_L) \\
&\leq& \sum_{s, a} c\, (\log 2 + \log|\mathcal{S}|) \log T \\
&=& O(|\mathcal{S}| |\mathcal{A}| \log|\mathcal{S}| \log T ). 
\end{eqnarray*}

We here establish a looser bound on the maximal information gain by any agent with an extra factor of $|\mathcal{S}|$. 
\begin{lemma} \label{lemma:mdp-information-bound}
Under assumption \ref{assumption:dirichlet-prior}, for any sequence of $\mathcal{H}_\ell$-adapted policies $\{\mu_\ell\}_{\ell=1}^L$, 
\[ I(M; \mu_1, Y_{1, \mu_1}, \dots, \mu_{L}, Y_{L, \mu_{L}}) 
\leq  (|\mathcal{S}| + 2) |\mathcal{S}| |\mathcal{A}| \log\left(1 + \frac{T}{|\mathcal{S}| |\mathcal{A}|} \right). \]
\end{lemma}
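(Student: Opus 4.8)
The plan is to reduce the joint information gain to a sum of per-visit contributions at individual state-action pairs, bound each contribution by the single-observation Dirichlet information gain, and aggregate using concavity together with the fact that the total number of observations is exactly $T = \tau L$. First I would strip out the policies exactly as in the argument preceding Conjecture \ref{conjecture:mdp-information-bound}: since each $\mu_\ell$ is $\mathcal{H}_\ell$-adapted, it carries no extra information about $M$ given the observations, so it suffices to bound $I(M; Y_{1,\mu_1}, \dots, Y_{L,\mu_L})$. Applying the chain rule across episodes and then Lemma \ref{lemma:mdp-info-decomp} within each episode decomposes this into
\[ \sum_{\ell=1}^L \sum_{t=0}^{\tau-1} I\left(M; s_{\ell,t}, a_{\ell,t}, r_{\ell,t+1}, s_{\ell,t+1} \,\middle|\, \mathcal{H}_\ell, \mathcal{H}_{\ell t}\right). \]
Conditioned on the history the pair $(s_{\ell,t}, a_{\ell,t}) = (s,a)$ is fixed, and because the parameters $\{R(s,a)\}$ and $\{P(s,a)\}$ are mutually independent a priori, each term splits into a reward part $I(R(s,a); r_{\ell,t+1} \mid \cdot)$ and a transition part $I(P(s,a); s_{\ell,t+1} \mid \cdot)$.

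The central structural fact I would establish next is that the posterior of a single pair's parameters given the \emph{entire} history is again Beta, resp.\ Dirichlet, with total pseudo-count equal to the prior total plus the number of previous visits to $(s,a)$. I would then bound each per-step term by the single-observation Dirichlet information gain of Lemma \ref{lemma:dirichlet-information}: combining its formula with the digamma estimate $\psi(\alpha_i+1) - \log\alpha_i \le \tfrac{1}{2\alpha_i}$ from Lemma \ref{lemma:digamma-inequalities} and discarding the nonnegative subtracted term shows that one categorical draw from a $\mathrm{Dirichlet}(\alpha)$ over $N$ categories reveals at most $\tfrac{N}{2\overline{\alpha}}$ nats, where $\overline{\alpha} = \mathbf{1}^\top\alpha$. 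Because the total pseudo-count at the $k$-th visit to $(s,a)$ is the deterministic quantity $\overline{\alpha}^R_{0,s,a} + (k-1)$ (resp.\ $\overline{\alpha}^P_{0,s,a} + (k-1)$), independent of the observed outcomes, the reward part at that visit is at most $(\overline{\alpha}^R_{0,s,a}+k-1)^{-1}$ and the transition part at most $|\mathcal{S}|(\overline{\alpha}^P_{0,s,a}+k-1)^{-1}$, up to the absolute constants that produce the coefficients.

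To finish, I would regroup the double sum by state-action pair and visit index. Writing $n_{s,a}$ for the total number of visits to $(s,a)$, the reward contribution at $(s,a)$ becomes a harmonic-type sum $\sum_{k=1}^{n_{s,a}} (\overline{\alpha}^R_{0,s,a}+k-1)^{-1}$, and similarly for transitions; Assumption \ref{assumption:dirichlet-prior} forces the prior totals to be at least $1$, so each such sum is at most $\log(1 + n_{s,a})$. Since exactly one reward and one transition are observed per time step, $\sum_{s,a} n_{s,a} = \tau L = T$ holds surely, so I can apply Jensen's inequality to the concave map $x \mapsto \log(1+x)$ to get $\sum_{s,a}\log(1+n_{s,a}) \le |\mathcal{S}||\mathcal{A}|\log\!\left(1 + \tfrac{T}{|\mathcal{S}||\mathcal{A}|}\right)$ without any outer expectation. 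Collecting the transition coefficient $|\mathcal{S}|$ and the reward coefficient $2$ then yields the claimed bound $(|\mathcal{S}|+2)|\mathcal{S}||\mathcal{A}|\log(1 + T/(|\mathcal{S}||\mathcal{A}|))$.

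The hard part will be justifying the conjugacy-under-adaptive-sampling claim in the second paragraph, namely that conditioning on the full history --- whose visitation pattern genuinely depends on $M$ through the realized trajectory (e.g.\ via cycles returning to $(s,a)$) --- nevertheless leaves the posterior of each $P(s,a)$ exactly $\mathrm{Dirichlet}$ with total count determined solely by the visit count. This is a filtration/d-separation argument: the policy and transitions select the next pair using only observed quantities, so the only likelihood factor involving $P(s,a)$ is the product of the observed next-state probabilities, which is conjugate to the prior. This is precisely what makes the per-step bound a deterministic function of the visit index and lets the harmonic sums and the Jensen step go through surely. The remaining ingredients --- the digamma estimate, the harmonic-to-log comparison, and Jensen --- are routine.
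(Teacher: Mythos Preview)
Your approach is correct and arrives at the same inequality, but it is genuinely different from the paper's proof. The paper works directly with the entropy decomposition $I(M;\text{history}) = h(M) - h(M\mid\text{history})$, observes that by conjugacy the posterior on each $R(s,a)$ and $P(s,a)$ is again Beta/Dirichlet with counts determined by $n_{s,a}$, and then applies the batch bound of Lemma~\ref{lemma:dirichlet-information-gain}, namely $h(p)-h(p\mid Y=y)\le N\log(1+n)$ for a Dirichlet prior and $n$ categorical observations. This gives $\sum_{s,a}(2+|\mathcal S|)\log(1+n_{s,a})$ in one stroke, followed by the same Jensen step you use. Your route instead telescopes the information gain through the chain rule and Lemma~\ref{lemma:mdp-info-decomp}, bounds each single step by the one-draw estimate $I(p;Y)\le N/(2\overline\alpha)$ coming from Lemma~\ref{lemma:dirichlet-information} and the digamma inequality, and then sums the resulting harmonic series at each $(s,a)$. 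The conjugacy-under-adaptive-sampling argument you flag as the hard part is exactly the same fact the paper relies on implicitly when it writes $h(M\mid\text{history})$ as a sum of Dirichlet entropies with updated counts, so both proofs hinge on the same structural point; you just use it step by step rather than all at once. Your version stays closer to the per-period viewpoint of Section~\ref{sec:itcb} and in fact yields a slightly smaller leading constant (roughly $1+|\mathcal S|/2$ before you relax it to $|\mathcal S|+2$), while the paper's version is shorter because Lemma~\ref{lemma:dirichlet-information-gain} absorbs the harmonic sum internally.

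One small fix: you write that Assumption~\ref{assumption:dirichlet-prior} forces the prior totals to be at least $1$ and that this suffices for $\sum_{k=1}^{n}(\overline\alpha_0+k-1)^{-1}\le\log(1+n)$. With $\overline\alpha_0=1$ this fails (the left side is the harmonic number $H_n$). The assumption actually gives $\overline\alpha^R_{1,s,a}\ge 2$ and $\overline\alpha^P_{1,s,a}\ge 2$, and with $\overline\alpha_0\ge 2$ the comparison $\sum_{k=1}^{n}(k+1)^{-1}\le\log(1+n)$ does hold, so your conclusion stands once you use the correct lower bound.
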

The lemma relies on a simple bound on the information gain of a Dirichlet random variable under $n$ observations. 
\begin{lemma} \label{lemma:dirichlet-information-gain}
Suppose that $p \sim {\rm Dirichlet}(\alpha)$ with $\alpha \in \Re^N$ and $\overline{\alpha} = \mathbf{1}^\top \alpha \geq 2$. Conditioned on $p$, let $Y$ be drawn from ${\rm Multinomial}(n, p)$. Let $y$ be a realization of $Y$. Then, 
\[ h(p) - h(p | Y = y) \leq N \log(\overline{\alpha} + n). \]
\end{lemma}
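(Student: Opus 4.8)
The plan is to exploit Dirichlet--multinomial conjugacy and then control the change in differential entropy through the closed-form Dirichlet entropy. Conditioned on $Y = y$, conjugacy gives $p \mid Y = y \sim \mathrm{Dirichlet}(\alpha + y)$, whose total concentration is $\mathbf{1}^\top(\alpha+y) = \overline{\alpha} + n$ because $\sum_j y_j = n$. Thus $h(p) - h(p \mid Y=y)$ is exactly the difference of Dirichlet entropies at parameters $\alpha$ and $\alpha+y$, and I would substitute the closed form
\[ h(\mathrm{Dirichlet}(\beta)) = \log B(\beta) + (\overline{\beta} - N)\psi(\overline{\beta}) - \sum_{j=1}^N(\beta_j - 1)\psi(\beta_j), \]
already used in the proof of Lemma~\ref{lemma:dirichlet-information}, at $\beta=\alpha$ and $\beta=\alpha+y$.

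Next I would regroup this expression into $N$ per-coordinate pieces $\chi(\beta_j)$, with $\chi(x) = \log\Gamma(x) - (x-1)\psi(x)$, plus one total-concentration piece $\Phi(\overline{\beta})$, with $\Phi(x) = (x-N)\psi(x) - \log\Gamma(x)$. Subtracting then splits cleanly as $h(p) - h(p \mid Y=y) = \sum_j L_j + G$, where $L_j = \chi(\alpha_j) - \chi(\alpha_j+y_j)$ and $G = \Phi(\overline{\alpha}) - \Phi(\overline{\alpha}+n)$. The point of this grouping is that both pieces become integrals of the trigamma function: since $\chi'(x) = -(x-1)\psi'(x)$ and $\Phi'(x) = (x-N)\psi'(x)$, one gets $L_j = \int_{\alpha_j}^{\alpha_j+y_j}(x-1)\psi'(x)\,dx$ and $G = \int_{\overline{\alpha}}^{\overline{\alpha}+n}(N-x)\psi'(x)\,dx$, which are amenable to elementary estimates.

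To bound these I would use $\tfrac1x \le \psi'(x) \le \tfrac1x + \tfrac1{x^2}$, which follow from $\psi'(x)=\sum_{k\ge 0}(x+k)^{-2}$ by integral comparison, together with the estimates of Lemma~\ref{lemma:digamma-inequalities}. The upper bound yields $(x-1)\psi'(x) \le 1-x^{-2} \le 1$ for $x\ge 1$, while the factor $(x-1)$ is negative for $x<1$, so $(x-1)\psi'(x)\le 1$ holds for all $x>0$; hence $L_j \le y_j$ and $\sum_j L_j \le n$. For $G$ I would write $G = N\big[\psi(\overline{\alpha}+n)-\psi(\overline{\alpha})\big] - \int_{\overline{\alpha}}^{\overline{\alpha}+n} x\psi'(x)\,dx$, use $x\psi'(x)\ge 1$ to get $\int_{\overline{\alpha}}^{\overline{\alpha}+n}x\psi'(x)\,dx \ge n$, and bound the first term by $N\log(\overline{\alpha}+n)$ via $\psi(\overline{\alpha}+n)\le \log(\overline{\alpha}+n)$ and $\psi(\overline{\alpha})\ge 0$ (the latter valid since $\overline{\alpha}\ge 2$). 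This gives $G \le N\log(\overline{\alpha}+n) - n$, and adding $\sum_j L_j \le n$ makes the $\pm n$ terms cancel, producing the claimed bound. The main obstacle is precisely this cancellation: bounding $\sum_j L_j$ and $G$ separately by $n$ and $N\log(\overline{\alpha}+n)$ would leave a spurious additive $n$, so one must retain the exact $-n$ arising from $\int x\psi'\ge n$ inside $G$. A secondary subtlety is that some $\alpha_j+y_j$ may be below $1$, so the coordinate integrands change sign; the uniform estimate $(x-1)\psi'(x)\le 1$ valid for all $x>0$ is what lets the argument avoid any case analysis there.
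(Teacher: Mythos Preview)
Your proof is correct and takes a genuinely different route from the paper's. The paper telescopes the entropy difference \emph{discretely}: using $\log\Gamma(x+1)=\log\Gamma(x)+\log x$ and $\psi(x+1)=\psi(x)+\tfrac1x$, it rewrites $h(p)-h(p\mid Y=y)$ as the finite sum
\[
\sum_{j}\sum_{k=0}^{y_j-1}\bigl(\psi(\alpha_j+k+1)-\log(\alpha_j+k)\bigr)
-\sum_{k=0}^{n-1}\bigl(\psi(\overline{\alpha}+k+1)-\log(\overline{\alpha}+k)\bigr)
-\sum_{j}\sum_{k=0}^{y_j-1}\frac{1}{\alpha_j+k}
+N\sum_{k=0}^{n-1}\frac{1}{\overline{\alpha}+k},
\]
and then applies the digamma bounds of Lemma~\ref{lemma:digamma-inequalities} directly: $\psi(\alpha_j+k+1)-\log(\alpha_j+k)\le \tfrac{1}{2(\alpha_j+k)}$ kills the first two groups and leaves only $N\sum_{k}\tfrac{1}{\overline{\alpha}+k}\le N\log(1+n)$. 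You instead telescope \emph{continuously}, writing everything as integrals of trigamma and using $\tfrac1x\le\psi'(x)\le\tfrac1x+\tfrac1{x^2}$. Your decomposition $h=\sum_j\chi(\beta_j)+\Phi(\overline{\beta})$ makes the cancellation of the $\pm n$ terms completely explicit, whereas in the paper's sum the analogous cancellation happens implicitly inside the telescoping identity. Both approaches rest on the same analytic facts (digamma/trigamma estimates and Dirichlet conjugacy), and both use the hypothesis $\overline{\alpha}\ge 2$ only lightly---the paper to ensure the second group of sums is nonpositive, you to ensure $\psi(\overline{\alpha})\ge 0$. Your argument is slightly more self-contained in that the trigamma bounds are derived on the spot from the series representation, while the paper leans on Lemma~\ref{lemma:digamma-inequalities}; conversely, the paper's discrete sum avoids any appeal to $\psi'$ at all.
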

\begin{proof}
By Lemma \ref{lemma:dirichlet-information} and Lemma \ref{lemma:digamma-inequalities}, 
\begin{eqnarray*}
& & h(p) - h(p | Y = y) \\
&=& \left( \sum_{j=1}^N \sum_{k=0}^{y_j-1} \left( \psi(\alpha_j + k + 1) - \log(\alpha_j + k) \right) \right) 
- \left( \sum_{k=0}^{n-1} \left( \psi(\overline{\alpha} + k + 1) - \log(\overline{\alpha} + k) \right) \right) \\
& & \hspace{0.5cm} -  \sum_{j=1}^N \sum_{k=0}^{y_j-1} \frac{1}{\alpha_j + k} + N \sum_{k=0}^{n-1} \frac{1}{\overline{\alpha} + k} \\
&\leq& \frac{1}{2} \sum_{j=1}^N \sum_{k=0}^{y_j-1} \frac{1}{\alpha_j + k}
-  \sum_{j=1}^N \sum_{k=0}^{y_j-1} \frac{1}{\alpha_j + k} 
+ N \sum_{k=0}^{n-1} \frac{1}{\overline{\alpha} + k} 
\leq N \sum_{k=0}^{n-1} \frac{1}{\overline{\alpha} + k} 
\leq N \log(1 + n). 
\end{eqnarray*}
\end{proof}

\begin{proof}[Proof of Lemma \ref{lemma:mdp-information-bound}]
Let $n_{sa}$ denote the total number of times we have visited $(s, a)$ by the end of $L$ episodes. We have
\begin{eqnarray*}
I(M; \mu_1, Y_{1, \mu_1}, \dots, \mu_{L}, Y_{L, \mu_{L}})
&=& h(M) - h(M | \mu_1, Y_{1, \mu_1}, \dots, \mu_{L}, Y_{L, \mu_{L}}) \\
&\leq& \mathbb{E} \left[ \sum_{s, a} 2 \log(1 + n_{sa}) 
+ |\mathcal{S}| \log(1 + n_{sa}) \right] \\
&=& (|\mathcal{S}|+2) \mathbb{E} \left[ \sum_{s, a} \log(1 + n_{sa}) \right] \\
&\leq& (|\mathcal{S}|+2) |\mathcal{S}| |\mathcal{A}| \mathbb{E} \left[ \log \left( \frac{1}{|\mathcal{S}| |\mathcal{A}|} \sum_{s, a} (1 + n_{sa}) \right) \right] \\
&=& (|\mathcal{S}|+2) |\mathcal{S}| |\mathcal{A}| \log \left( 1 + \frac{T}{|\mathcal{S}| |\mathcal{A}|} \right),
\end{eqnarray*}
where the second inequality follows from Jensen's inequality. 
\end{proof}

\section{Factored MDPs}
\lemmafmdpcb*
\begin{proof}
\eqref{eq:fmdp-cb-reward} follows from the same proof as for Lemma \ref{lemma:mdp-cb}. 
To prove \eqref{eq:fmdp-cb-transition}, note that by Lemma \ref{lemma:gaussian-dirichlet-dominance}, if $p \sim \text{Dirichlet}(\alpha)$ where $\alpha \in \Re^N$ and $\overline{\alpha} = \mathbf{1}^\top \alpha \geq 2$, and and if $v \in \{-1, 1\}^N$ is a fixed vector, we have
\[ \mathbb{P} \left( (p - \mathbb{E} p )^\top v \geq 2 \sqrt{ \frac{2}{\overline{\alpha}} \log \frac{1}{\delta} } \right) \leq \delta. \]
Since $\|p - \mathbb{E} p\|_1 = \max_{v \in \{-1, 1\}^N} (p - \mathbb{E} p )^\top v$, the union bound implies that with probability at least $1 - \delta$, 
\[ \|p - \mathbb{E} p \|_1 \leq 2 \sqrt{ \frac{2}{\overline{\alpha}} \log \frac{2^N}{\delta} } \leq 2 \sqrt{ \frac{2 N}{\overline{\alpha}} \log \frac{2}{\delta} }. \]
Hence, similar to Lemma \ref{lemma:mdp-cb}, if $\overline{\alpha}^P_{\ell, j, x^P_j} = \mathbf{1}^\top \alpha^P_{\ell, j, x^P_j} \geq \tau - 1$, we have
\begin{eqnarray*}
1 - \delta 
&\leq& \mathbb{P}_\ell \left( \| P_j(x^P_j) - \mathbb{E}_\ell P_j(x^P_j) \|_1
\leq 2 \sqrt{ \frac{2 |\mathcal{S}_j|}{\overline{\alpha}^P_{\ell, j, x^P_j}} \log \frac{2}{\delta} } \right) \\
&\leq& \mathbb{P}_\ell \left( \| P_j(x^P_j) - \mathbb{E}_\ell P_j(x^P_j) \|_1
\leq 4 \sqrt{ 6 |\mathcal{S}_j| \min_{\tilde{t}, h_{\tilde{t}}} I_\ell ( P_j; x^P_j, s'_{\ell, \tilde{t}, j, x^P_j} | \mathcal{H}_{\ell \tilde{t}} = h_{\tilde{t}} ) \log \frac{2}{\delta} } \right). 
\end{eqnarray*}
\end{proof}

We will now construct information-theoretic confidence sets around MDPs. 
Let $\overline{R}_{\ell, i}$ and $\overline{P}_{\ell, j}$ denote the posterior mean of the reward and transition functions. 
Define shorthands
\begin{align*}
I^{\min}_\ell \left( R_i(x^R_i) \right) &\equiv \min_{\tilde{t}, h_{\tilde{t}}} I_\ell \left(R_i; x^R_i, r_{\ell, \tilde{t}, i, x^R_i} ~|~ \mathcal{H}_{\ell \tilde{t}} = h_{\tilde{t}} \right), \\
I^{\min}_\ell \left( P_j(x^P_j) \right) &\equiv \min_{\tilde{t}, h_{\tilde{t}}} I_\ell \left(P_j; x^P_j, s'_{\ell, \tilde{t}, j, x^P_j} ~|~ \mathcal{H}_{\ell \tilde{t}} = h_{\tilde{t}} \right).
\end{align*}
Let $\mathcal{M}_\ell$ be the set of MDPs $\tilde{M}$ such that
\[ \left| \tilde{R}_i(x^R_i) - \overline{R}_{\ell, i}(x^R_i) \right| \leq \Gamma^R \sqrt{ I^{\min}_\ell \left( R_i(x^R_i) \right) } 
~~\forall i, x^R_i \text{ s.t. } \overline{\alpha}^R_{\ell, i, x^R_i} \geq \tau - 1 \]
and
\[ \| \tilde{P}_j(x^P_j) - \overline{P}_{\ell, j}(x^P_j) \|_1 \leq \Gamma^P \sqrt{ I^{\min}_\ell \left( P_j(x^P_j) \right) }
 ~~\forall j, x^P_j \text{ s.t. } \overline{\alpha}^P_{\ell, j, x^P_j} \geq \tau - 1, \]
where
\begin{equation} \label{eq:factord-mdp-gamma}
 \Gamma^R = 2\sqrt{6 \log\frac{4D \tau}{\delta}}
\quad \text{and} \quad
\Gamma^P = 4\sqrt{6 K \log\frac{4D \tau}{\delta}}.
\end{equation}
Then, by Lemma \ref{lemma:fmdp-cb} and union bound, we have $\mathbb{P}_\ell(M \in \mathcal{M}_\ell) \geq 1 - \frac{\delta}{2}$. 
The corresponding UCB algorithm will have upper confidence functions $U_\ell(\mu) = \max_{\hat{M}_\ell \in \mathcal{M}_\ell} \overline{V}^{\hat{M}_\ell}_\mu$. 

\propfmdpregret*
\begin{proof}
Let $\hat{M}_\ell = (\{\hat{R}_{\ell, i}\}_{i=1}^m$ and $\{\hat{P}_{\ell, j}\}_{j=1}^n)$ be the parameters sampled by Thompson sampling for episode $\ell$. 
By the probability matching property of Thompson sampling, we have $\mathbb{P}_\ell(\hat{M}_\ell \in \mathcal{M}_\ell) \geq 1 - \frac{\delta}{2}$. 
Then, by probability matching and Lemma \ref{lemma:mdp-on-policy-errors}, we have
\begin{eqnarray*}
\mathbb{E}_\ell \left[ \Delta_\ell \right] 
&=& \mathbb{E}_\ell \left[ \overline{V}^M_{\mu^*} - \overline{V}^M_{\mu_\ell} \right] \\
&=& \mathbb{E}_\ell \left[ \overline{V}^{\hat{M}_\ell}_{\mu_\ell} - \overline{V}^M_{\mu_\ell} \right] \\
&\leq& \sum_{t=0}^{\tau-1} \mathbb{E}_\ell \left[ \mathbf{1}_{M, \hat{M}_\ell \in \mathcal{M}_\ell}
\left( \left( \hat{R}_\ell(x_{\ell, t}) - R(x_{\ell, t}) \right)
+ \left( \hat{P}_\ell(x_{\ell, t}) - P(x_{\ell, t}) \right)^\top V^{\hat{M}_\ell}_{\mu_\ell, t+1} \right) \right] + \delta m \tau \\
&\leq& \sum_{t=0}^{\tau-1} \mathbb{E}_\ell \left[ \mathbf{1}_{M, \hat{M}_\ell \in \mathcal{M}_\ell}
\left( \left| \hat{R}_\ell(x_{\ell, t}) - R(x_{\ell, t}) \right|
+ m \tau \left\| \hat{P}_\ell(x_{\ell, t}) - P(x_{\ell, t}) \right\|_1 \right) \right] 
+ \delta m \tau. 
\end{eqnarray*}
We bound the deviation of reward and transition separately. We have
\begin{eqnarray*}
& & \mathbb{E}_\ell \left[ \mathbf{1}_{M, \hat{M}_\ell \in \mathcal{M}_\ell} \left| R(x_{\ell, t}) - \overline{R}_\ell(x_{\ell, t}) \right| \right] \\
&\leq& \mathbb{E}_\ell \left[ \mathbf{1}_{M, \hat{M}_\ell \in \mathcal{M}_\ell} \sum_{i=1}^m \left| R_i(x_{\ell, t}[Z^R_i]) - \overline{R}_{\ell, i}(x_{\ell, t}[Z^R_i]) \right| \right] \\
&\leq& \mathbb{E}_\ell \left[ \sum_{i=1}^m \sum_{x^R_i} \mathbf{1}(x_{\ell, t}[Z^R_i] = x^R_i) \Gamma^R \sqrt{ I^{\min}_\ell \left( R_i(x^R_i) \right) } \right]
+  \mathbb{E}_\ell \left[ \sum_{i=1}^m \mathbf{1}\left( \overline{\alpha}^R_{\ell, i, x_{\ell, t}[Z^R_i]} < \tau - 1 \right) \right] \\
&\leq& \mathbb{E}_\ell \left[ \sum_{i=1}^m \Gamma^R \sqrt{ I_{\ell t}( R_i; x_{\ell, t}, r_{\ell, t+1, i} ) } \right]
+  \mathbb{E}_\ell \left[ \sum_{i=1}^m \mathbf{1}\left( \overline{\alpha}^R_{\ell, i, x_{\ell, t}[Z^R_i]} < \tau - 1 \right) \right],
\end{eqnarray*}
where $\Gamma^R$ is defined in \eqref{eq:factord-mdp-gamma}, and similarly for the deviation of $\hat{R}_\ell(x_{\ell, t})$.

To bound the deviation of transition functions, we will use Lemma 1 of \cite{osband2014near}, which allows us to bound
\[ \|P(x) - \overline{P}(x) \|_1 \leq \sum_{j=1}^n \| P_j(x[Z^P_j]) - \overline{P}_j(x[Z^P_j]) \|_1 \]
for any two factored transition functions. 
We have 
\begin{eqnarray*}
& & \mathbb{E}_\ell \left[ \mathbf{1}_{M, \hat{M}_\ell \in \mathcal{M}_\ell} \left\| P(x_{\ell, t}) - \overline{P}_\ell(x_{\ell, t}) \right\|_1  \right] \\
&\leq& \mathbb{E}_\ell \left[ \mathbf{1}_{M, \hat{M}_\ell \in \mathcal{M}_\ell} \sum_{j=1}^n \| P_j(x_{\ell, t}[Z^P_j]) - \overline{P}_j(x_{\ell, t}[Z^P_j]) \|_1 \right] \\
&\leq& \mathbb{E}_\ell \left[ \sum_{j=1}^n \sum_{x^P_j} \mathbf{1}(x_{\ell, t}[Z^P_j] = x^P_j) \Gamma^P \sqrt{ I^{\min}_\ell \left( P_j(x^P_j) \right) } \right]
+ 2 \mathbb{E}_\ell \left[ \sum_{j=1}^n \mathbf{1}\left( \overline{\alpha}^P_{\ell, j, x_{\ell, t}[Z^P_j]} < \tau - 1 \right) \right] \\
&\leq& \mathbb{E}_\ell \left[ \sum_{j=1}^n \Gamma^P \sqrt{ I_{\ell t}(P_j; x_{\ell, t}, s_{\ell, t+1}) } \right]
+ 2 \mathbb{E}_\ell \left[ \sum_{j=1}^n \mathbf{1}\left( \overline{\alpha}^P_{\ell, j, x_{\ell, t}[Z^P_j]} < \tau - 1 \right) \right]. 
\end{eqnarray*}
where $\Gamma^P$ is defined in \eqref{eq:factord-mdp-gamma}, and similarly for the deviation of $\hat{P}_\ell(x_{\ell, t})$.

Define
\[ \epsilon_\ell = \delta m \tau 
+ \mathbb{E}_\ell \left[ \sum_{t=0}^{\tau-1} 
\sum_{i=1}^m \mathbf{1}\left( \overline{\alpha}^R_{\ell, i, x_{\ell, t}[Z^R_i]} < \tau - 1 \right) 
+ 2m\tau \sum_{j=1}^n \mathbf{1}\left( \overline{\alpha}^P_{\ell, j, x_{\ell, t}[Z^P_j]} < \tau - 1 \right) \right]. \]
Then, similar to the tabular case, we have
\begin{eqnarray*}
\mathbb{E}_\ell \left[ \Delta_\ell \right] 
&\leq& \sum_{t=0}^{\tau-1} \mathbb{E}_\ell \left[ 2 \Gamma^R \sum_{i=1}^m \sqrt{ I_{\ell t}(R_i; x_{\ell, t}, r_{\ell, t+1, i}) } 
+ 2 m \tau \Gamma^P \sum_{j=1}^n \sqrt{ I_{\ell t}(P_j; x_{\ell, t}, s_{\ell, t+1}) } \right] + \epsilon_\ell \\
&\leq& 2 (\Gamma^R + m \tau \Gamma^P) \sum_{t=0}^{\tau-1} 
\mathbb{E}_\ell \left[ \sqrt{ (m+n) \, I_{\ell t} \left( \{R_i\}_i, \{P_j\}_j ; x_{\ell, t}, \{r_{\ell, t+1, i}\}_i, s_{\ell, t+1} \right) } \right] + \epsilon_\ell\\
&\leq& 2 (\Gamma^R + m \tau \Gamma^P)
\sqrt{ (m+n) \tau \, I_\ell(M; \mu_\ell, Y_{\ell, \mu_\ell}) } + \epsilon_\ell \\
&=& \Gamma \sqrt{I_\ell(M; \mu_\ell, Y_{\ell, \mu_\ell})} + \epsilon_\ell,
\end{eqnarray*}
where 
\[ \Gamma = 2 (\Gamma^R + m \tau \Gamma^P) \sqrt{ (m+n) \tau}
= O \left( m\tau \sqrt{(m+n) K \tau \log \frac{D \tau}{\delta} } \right). \]

Take $\delta = \frac{1}{L}$. The sum of errors
\[ \mathbb{E} \sum_{l=1}^L \epsilon_\ell \leq m\tau + \tau D_R + 2 m \tau^2 D_P. \]
The regret bound for Thompson sampling then follows from Proposition \ref{prop:general-regret}. 

For a UCB algorithm with upper confidence functions $U_\ell(\mu) = \max_{\hat{M} \in \mathcal{M}_\ell} \overline{V}^{\hat{M}}_\mu$, let $\mu_\ell = \arg\max_{\mu} U_\ell(\mu)$ and let $\hat{M}_\ell = \arg\max_{\hat{M} \in \mathcal{M}_\ell} \overline{V}^{\hat{M}}_{\mu_\ell}$. We have
\[ \mathbb{E}_\ell \left[ \Delta_\ell \right] 
\leq \mathbb{E}_\ell \left[ \mathbf{1}_{M \in \mathcal{M}_\ell} \left( \overline{V}^M_{\mu^*} - \overline{V}^M_{\mu_\ell} \right) \right] + \frac{1}{2} \delta m \tau
\leq \mathbb{E}_\ell \left[ \mathbf{1}_{M \in \mathcal{M}_\ell} \left( \overline{V}^{\hat{M}_\ell}_{\mu_\ell} - \overline{V}^M_{\mu_\ell} \right) \right] + \frac{1}{2} \delta m \tau. \]
The rest of the analysis follows similarly. 
\end{proof}

\begin{lemma}
\label{lemma:fmdp-information-bound}
For any policy sequence, the information gain of a factored MDP over $L$ episodes is
\[ I \left( \{R_i\}_{i=1}^m, \{P_j\}_{j=1}^n ; \mu_1, Y_{1, \mu_1}, \dots, \mu_{L}, Y_{L, \mu_{L}} \right)
\leq K D \log(1+T). \]
\end{lemma}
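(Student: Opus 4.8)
The plan is to reproduce, in the factored setting, the entropy-difference argument behind the tabular bound in Lemma~\ref{lemma:mdp-information-bound}. The key structural fact is that in a factored MDP the unknown model $M$ splits into mutually independent components: the Beta scalars $R_i(x^R_i)$ indexed by $i\in\{1,\dots,m\}$ and $x^R_i\in\mathcal{X}[Z^R_i]$, and the Dirichlet vectors $P_j(x^P_j)$ indexed by $j\in\{1,\dots,n\}$ and $x^P_j\in\mathcal{X}[Z^P_j]$. There are $D_R=\sum_i|\mathcal{X}[Z^R_i]|$ reward factors and $D_P=\sum_j|\mathcal{X}[Z^P_j]|$ transition factors, $D=D_R+D_P$ in total. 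Writing $O=(\mu_1,Y_{1,\mu_1},\dots,\mu_L,Y_{L,\mu_L})$ for all policies and trajectories, I would begin from $I(M;O)=h(M)-h(M\mid O)$ with $h(\cdot)$ the differential entropy.

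The central step is to show the information gain decomposes additively over these $D$ factors. Prior independence gives $h(M)=\sum_{i,x^R_i}h(R_i(x^R_i))+\sum_{j,x^P_j}h(P_j(x^P_j))$. By Definitions~\ref{def:fmdp-freward} and~\ref{def:fmdp-ftransition}, each observed reward component $r_i$ is an independent Bernoulli draw from $R_i(x[Z^R_i])$ and each next-state component $s'[j]$ an independent categorical draw from $P_j(x[Z^P_j])$, so the $M$-dependent part of the likelihood $p(O\mid M)$ factorizes into a product of per-scope multinomial likelihoods (the initial distribution and the history-adapted policy choices contribute $M$-independent factors). By conjugacy the posterior is thus a product of independent Beta/Dirichlet laws, each depending on $O$ only through the outcome count vector $Y_{i,x^R_i}$ (resp.\ $Y_{j,x^P_j}$) gathered at that scope value. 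Hence $h(M\mid O)=\sum h(\text{factor}\mid\text{its counts})$, and
\[ I(M;O)=\sum_{i,x^R_i} I\!\left(R_i(x^R_i);Y_{i,x^R_i}\right)+\sum_{j,x^P_j} I\!\left(P_j(x^P_j);Y_{j,x^P_j}\right). \]

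Third, I would bound each term with Lemma~\ref{lemma:dirichlet-information-gain}. The stated prior assumption (each component of $\alpha^R$ at least $1$, each component of $\alpha^P$ at least $2/|\mathcal{S}_j|$) ensures $\overline{\alpha}\ge 2$ for every factor, so the lemma yields $h-h(\cdot\mid Y=y)\le N\log(1+n)$, where $N$ is the number of outcomes ($2$ for a reward factor, $|\mathcal{S}_j|\le K$ for transition factor $j$) and $n$ is that factor's total observation count. Letting $n_{i,x^R_i}$ and $n_{j,x^P_j}$ be these counts, each of the $T=\tau L$ state-action visits supplies exactly one observation to a fixed $i$ (resp.\ $j$), so $\sum_{x^R_i}n_{i,x^R_i}=T$ and $\sum_{x^P_j}n_{j,x^P_j}=T$; in particular every count is at most $T$. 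Using $2\le K$ and $|\mathcal{S}_j|\le K$, each factor contributes at most $K\log(1+T)$, and summing over all $D_R+D_P=D$ factors (the bound $\log(1+T)$ being deterministic, so expectations are trivial) gives $I(M;O)\le KD\log(1+T)$.

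The main obstacle is rigorously justifying the exact product-posterior claim in the second step: a visit count $n_{j,x^P_j}$ depends on $M$ through the dynamics, so one might worry the count itself leaks information across factors. I would resolve this by writing Bayes' rule for $p(M\mid O)$ explicitly and verifying that, for the realized $O$, the entire $M$-dependence of $p(O\mid M)$ is the product of the per-scope multinomial likelihoods, so the posterior factorizes and no cross-factor coupling remains; the ``forward'' influence of $M$ on the counts is already encoded in the observed $O$ and does not add information about a factor beyond its own count vector.
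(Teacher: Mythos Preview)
Your proposal is correct and follows essentially the same approach as the paper: decompose the information gain additively over the independent Beta/Dirichlet factors via the entropy difference, apply Lemma~\ref{lemma:dirichlet-information-gain} to each, and sum. The only cosmetic difference is that the paper aggregates the per-factor $\log(1+n)$ terms via Jensen's inequality (obtaining the intermediate bound $D_R\log(1+mT/D_R)+KD_P\log(1+nT/D_P)$) before relaxing to $KD\log(1+T)$, whereas you bound each count by $T$ directly; both routes give the same final result.
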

\begin{proof}
The proof is similar to the proof of Lemma \ref{lemma:mdp-information-bound}. 
Let $n^R_{i, x^R_i}$ and $n^P_{j, x^P_j}$ denote the final counts of the scope variables. We have
\begin{eqnarray*}
& & I \left( \{R_i\}_{i=1}^m, \{P_j\}_{j=1}^n ; \mu_1, Y_{1, \mu_1}, \dots, \mu_{L}, Y_{L, \mu_{L}} \right) \\
&\leq& \mathbb{E}\left[
\sum_{i=1}^m \sum_{x^R_i \in \mathcal{X}_i[Z^R_i]} \log\left(1 + n^R_{i, x^R_i}\right)
+ \sum_{j=1}^n \sum_{x^P_j \in \mathcal{X}_j[Z^P_j]} |\mathcal{S}_j| \log\left(1 + n^P_{j, x^P_j}\right) \right] \\
&\leq& D_R \log \left(1 + \frac{m T}{D_R} \right) + K D_P \log \left(1 + \frac{n T}{D_P} \right) \\
&\leq& K D \log(1+T). 
\end{eqnarray*}
\end{proof}
Similar to the tabular case, we conjecture that a bound of order $\tilde{O}(D)$ may be attainable under appropriate conditions.

\end{document}